\documentclass[11pt,a4paper]{article}  % previous format
% \documentclass[12pt]{article}  % neco format

% Packages
\usepackage{graphicx}
\usepackage{amsmath}
\usepackage{amssymb}
\usepackage{ascmac}
\usepackage{authblk}
\usepackage{bm}
\usepackage{caption}
\usepackage{color}
\usepackage[T1]{fontenc}
\usepackage{hyperref}      % hyperlinks
\hypersetup{
  colorlinks=true,
  linkcolor=blue,
  citecolor=blue
}
\usepackage{mathtools}
\usepackage[authoryear, round]{natbib} % autheryear
\usepackage{stmaryrd}
\usepackage{tgtermes}

\usepackage{tabularx}  % for CCC C C C
\usepackage{booktabs} % for professional tables
\usepackage{float} % unknown float option H latex

% packages
\usepackage[linesnumbered,ruled]{algorithm2e}
\usepackage{amsmath}
\usepackage{amssymb}
\usepackage{amsthm}
\usepackage{ascmac}
\usepackage{bbm}
\usepackage{bm}
\usepackage{booktabs}
\usepackage[font=footnotesize,labelfont=bf]{caption}
\usepackage{chngcntr}
\usepackage{color}
\usepackage{eucal}
\usepackage[T1]{fontenc}
\usepackage{mathtools}
\usepackage{mathrsfs}
\usepackage{multirow}
\usepackage{pifont}
\usepackage{rotating}
\usepackage{tabularx}

% config
% \hypersetup{
%   colorlinks=true,
%   linkcolor=blue,
%   citecolor=blue
%   % citecolor=red
% }
% \counterwithout{footnote}{chapter}

% theorems
\newtheorem{theorem}{Theorem}
\newtheorem*{theorem*}{Theorem}
% ======= inserted =======
% new for semi-supervised ordinal regression work 
% \usepackage{thmtools}
% \usepackage{thm-restate}
% \declaretheorem[name=Theorem]{thm}
% ====== end of inserted =====
\newtheorem*{proof*}{proof}
\newtheorem{definition}[theorem]{Definition}
\newtheorem*{definition*}{Definition}

\newtheorem*{assumption*}{Assumption}
\newtheorem{lemma}[theorem]{Lemma}
\newtheorem*{lemma*}{Lemma}
\newtheorem{proposition}[theorem]{Proposition}
\newtheorem*{proposition*}{Proposition}
\newtheorem{corollary}[theorem]{Corollary}
\newtheorem*{corollary*}{Corollary}

\theoremstyle{remark}  % make remark roman style not italic
\newtheorem*{remark}{Remark}

% algorithm2e
\DontPrintSemicolon
\SetArgSty{textnormal}
\SetKwInOut{Input}{Input}
\SetKwInOut{Output}{Output}
\SetKwComment{Comment}{$\triangleright$\ }{}

% universal

\renewcommand{\hat}{\widehat}

\def\R{\mathbb{R}}
\def\N{\mathbb{N}}

\def\calA{\mathcal{A}}

\def\calF{\mathcal{F}}
\def\calG{\mathcal{G}}
\def\calH{\mathcal{H}}

\def\calL{\mathcal{L}}

\def\calR{\mathcal{R}}
\def\calS{\mathcal{S}}

\def\calX{\mathcal{X}}
\def\calY{\mathcal{Y}}
\def\calZ{\mathcal{Z}}

\DeclareMathOperator*{\argmax}{arg\,max}
\DeclareMathOperator*{\argmin}{arg\,min}

\newcommand{\ind}[1]{\mathbbm{1}{\left[{#1}\right]}}

% specific
\def\x{\bm{x}}

\def\iid{\stackrel{\mathrm{i.i.d.}}{\sim}}

% misc

% \newcommand{\edit}[1]{\textcolor{green}{#1}}

%%%%%%%%%%%%%%%%%%%%%%%%%%%%%%%%%%%%%%%%%%%%%%%%%%%%%%%%%%%%%%%%%%%%%%%%%%%%%%%%
%%%%%%%%%%%%%%%%%%%%%%%%%%%%%%%%%%%%%%%%%%%%%%%%%%%%%%%%%%%%%%%%%%%%%%%%%%%%%%%%
%%%%%%%%%%%%%%%%%%%%%%%%%%%%%%%%%%%%%%%%%%%%%%%%%%%%%%%%%%%%%%%%%%%%%%%%%%%%%%%%
%%%%%%%%%%%%%%%%%%%%%%%%%%%%%%%%%%%%%%%%%%%%%%%%%%%%%%%%%%%%%%%%%%%%%%%%%%%%%%%%
%%%%%%%%%%%%%%%%%%%%%%%%%%%%%%%%%%%%%%%%%%%%%%%%%%%%%%%%%%%%%%%%%%%%%%%%%%%%%%%%
%%%%%%%%%%%%%%%%%%%%%%%%%%%%%%%%%%%%%%%%%%%%%%%%%%%%%%%%%%%%%%%%%%%%%%%%%%%%%%%%

% \newcommand{\vs}[1]{\vspace{#1}}
% \newcommand{\expect}{\operatornamewithlimits{\mathbb{E}}}
% \newcommand{\expect}[1]{\expect{\left[#1\right]}}
\newcommand{\expect}{\mathbb{E}}

% bold symbol
\newcommand{\balpha}{\bm{\alpha}}
\newcommand{\bphi}{\bm{\phi}}
\newcommand{\btheta}{\bm{\theta}}
\newcommand{\bw}{\bm{w}}
\newcommand{\bsigma}{\bm{\sigma}}

% math removed character

\newcommand{\AT}{\mathrm{AT}}
\newcommand{\IT}{\mathrm{IT}}

\newcommand{\LS}{\mathrm{LS}}
\newcommand{\LU}{\mathrm{LU}}
\newcommand{\SEMI}{\mathrm{SEMI}}
\newcommand{\rand}{\mathrm{rand}}

% for proof
  % x_{L,j}
\newcommand{\xyj}{\bm{x}^y_j}
\newcommand{\xym}{\bm{x}^y_m}
\newcommand{\xyjpr}{\bm{x}'^{y}_j}

\newcommand{\xymp}{\bm{x}^y_{m^\prime}}
\newcommand{\xuj}{\bm{x}^\mathrm{U}_j}

\newcommand{\nL}{n_\mathrm{L}}
\newcommand{\nU}{n_\mathrm{U}}

% for variance reduction
\DeclareMathOperator*{\Var}{Var}
\DeclareMathOperator*{\Cov}{Cov}
  % empirical picked one risk for semi-supervised learning
\newcommand{\SVrisk}{\calS}  %  (surrogate) supervised risk
\newcommand{\empSVrisk}{\hat{\calS}}  % empirical (surrogate) supervised risk

\newcommand{\Rluk}{\calS_{\LU}^{\backslash k}}
\newcommand{\empRluk}{\hat{\calS}_{\LU}^{\backslash k}}
\newcommand{\Rl}{\calS_{\mathrm{L}}}
\newcommand{\empRl}{\hat{\calS}_{\mathrm{L}}}
\newcommand{\Ru}{\calS_{\u}}
\newcommand{\empRu}{\hat{\calS}_{\u}}

\newcommand{\empsslrisk}{\hat{\calS}_{\SEMI\text{-}\gamma}^{\backslash k}}

% newly added in the revision
\newcommand{\RiskSEMI}{\calS_{\SEMI\text{-}\gamma}^{\backslash k}}
\newcommand{\empRiskSEMI}{\hat{\calS}_{\SEMI\text{-}\gamma}^{\backslash k}}

\newcommand{\ghatk}{\hat{g}^{\backslash k}}

\newcommand{\Rad}{\mathfrak{R}}

% lemma
\newcommand{\empRpsily}{\hat{\calS}_{\mathrm{L}, y}}  % psi is abbreviated
\newcommand{\Rpsily}{\calS_{\mathrm{L}, y}} % psi is abbreviated

% for proof
\newcommand{\RLone}{\calS_{\mathrm{L}}^{(1)}}
\newcommand{\RLtwo}{\calS_{\mathrm{L}}^{(2)}}
\newcommand{\RU}{\calS_{\mathrm{U}}}

\newcommand{\empRLone}{\hat{\calS}_{\mathrm{L}}^{(1)}}
\newcommand{\empRLtwo}{\hat{\calS}_{\mathrm{L}}^{(2)}}
\newcommand{\empRU}{\hat{\calS}_{\mathrm{U}}}

\renewcommand{\u}{\mathrm{U}}

\newcommand{\calYrmk}{\mathcal{Y}^{\backslash k}}

\newcolumntype{L}{>{$}l<{$}}
\newcolumntype{C}{>{$}c<{$}}

% c.f. zero-inflated

% experiments
% for subfigure
\usepackage{subfigure}
\newlength{\subfigwidth}
\newlength{\subfigcolsep}
\setlength{\subfigcolsep}{0.5\tabcolsep}

\usepackage[top=30truemm,bottom=30truemm,left=25truemm,right=25truemm]{geometry}

\title{Semi-Supervised Ordinal Regression \\ Based on Empirical Risk Minimization}
%======
\author[1,2]{Taira Tsuchiya\footnote{tsuchiya@sys.i.kyoto-u.ac.jp}\footnote{Currently at Kyoto University}}
\author[1,2]{Nontawat Charoenphakdee}
\author[1]{Issei Sato}
\author[2,1]{Masashi Sugiyama}
\affil[1]{The University of Tokyo}
\affil[2]{RIKEN AIP}
%======
\date{}

\begin{document}
\maketitle

\begin{abstract}
Ordinal regression is aimed at predicting an ordinal class label.
In this paper, we consider its semi-supervised formulation, in which we have unlabeled data along with ordinal-labeled data to train an ordinal regressor.
There are several metrics to evaluate the performance of ordinal regression, such as the mean absolute error, mean zero-one error, and mean squared error.
However, the existing studies do not take the evaluation metric into account, have a restriction on the model choice, and have no theoretical guarantee.
To overcome these problems, we propose a novel generic framework for semi-supervised ordinal regression based on the empirical risk minimization principle that is applicable to optimizing all of the metrics mentioned above.
Besides, our framework has flexible choices of models, surrogate losses, and optimization algorithms without the common geometric assumption on unlabeled data such as the cluster assumption or manifold assumption.
We further provide an estimation error bound to show that our risk estimator is consistent.
Finally, we conduct experiments to show the usefulness of our framework.
\end{abstract}

\section{Introduction}
The goal of ordinal regression is to learn an ordinal regressor to predict a label from a discrete and ordered label set~\citep{chu2005svor, chu2005GP, gutierrez2016ordinal, pedregosa:consistency}.
For example, consider the problem of predicting the diabetes stage of a patient.
The progress of diabetes consists of five stages ranging from mild to severe conditions~\citep{weir2004five}.
The stage number is discrete, and the possible stages are total-ordered.
Ordinal regression has been employed in a variety of fields such as medical research~\citep{med1,med2}, credit rating~\citep{cred1,cred2}, and social sciences~\citep{soc1}.
In the real-world, the labeling process can be costly and time-consuming.
Hence, it is desirable to make use of unlabeled data to improve the prediction accuracy.
Although semi-supervised ordinal regression has great benefits to practical applications,
it has not been extensively explored yet~\citep{Huafu2010towards, liu2011sslor_manifold, seah2012transductive, srijith2013sslor_GP, Ortiz2016dlor}.

The main challenge of semi-supervised learning is how to incorporate unlabeled data to improve the prediction performance~\citep{chapelle2006SSL_MIT}.
To make use of unlabeled data, many assumptions on unlabeled data have been proposed, which often be violated in real-world problems. The representative one is the manifold assumption~\citep{belkin2006manifold}, in which input data is assumed to be distributed in a lower-dimensional manifold.
The manifold assumption has also been utilized in the context of ordinal regression~\citep{liu2011sslor_manifold, Ortiz2016dlor}.
Another popular assumption is the cluster assumption~\citep{seeger2000learning},
in which examples belonging to the same cluster in the input space should have the same label.
\citet{seah2012transductive} proposed a transductive ordinal regression method based on the cluster assumption. However, such a cluster assumption may no be satisfied in practice.
From a different perspective, a semi-supervised ordinal regression method based on Gaussian processes has been proposed~\citep{srijith2013sslor_GP}, which relies on the low-density separation assumption~\citep{chapelle2006SSL_MIT}.
However, the low-density separation assumption may rarely be satisfied in reality, and Gaussian processes have high computation costs and also restrictions on the choice of models.
It is known that the performance of semi-supervised learning methods based on such a geometric assumption is significantly degraded if the assumption on unlabeled data is violated~\citep{li2015towards, sakai-pnu}.
Moreover, all the existing semi-supervised ordinal regression methods mentioned above do not have a theoretical guarantee, do not take the target evaluation metric into account, and have limitations in the choice of models.

Our goal is to establish a novel generic framework that allows flexible choices of models, evaluation metrics, and optimization algorithms and does not require any geometric assumption on unlabeled data.
From the recent theoretical advances of ordinal regression in~\citet{pedregosa:consistency} and semi-supervised binary classification based on positive-unlabeled classification in~\citet{sakai-pnu}, we propose an empirical risk minimization (ERM) based framework that achieves this ambitious goal.
To theoretically justify our framework, we show that our proposed unbiased risk estimator is consistent by establishing an estimation error bound.
In addition, we conduct the analysis of variance reduction to illustrate that unlabeled data can help improve the performance.
Finally, we demonstrate the usefulness of our framework by conducting experiments for the three evaluation metrics using benchmark datasets.

\section{Preliminaries}
\label{sec:preliminaries}

In this section, we formulate the standard supervised ordinal regression problem.
Besides, we discuss its loss function, which is specific to the ordinal regression problem and largely different from the standard loss functions for regression and classification.

\subsection{Supervised Ordinal Regression}
We formulate the risk used in supervised ordinal regression.
Let $\mathcal{X} \subset \R^d$ be a $d$-dimensional input space
and $\mathcal{Y} = \{1, \ldots, K\}$ be an ordered label space, where $K$ is the number of classes.
We assume that labeled data $(\bm{x},y) \in \mathcal{X} \times \mathcal{Y}$ is drawn from the joint probability distribution with density $p$.
In ordinal regression, a function $g: \mathcal{X} \rightarrow \mathcal{Y}$,
which is called a \emph{prediction function}, is used to predict an ordinal label $y$ from input $\x$ as
\begin{align}
    g(\x;f, \btheta)
    &\coloneqq 1 + \sum_{i=1}^{K-1} \ind{f(\x) > \theta_i}
    \label{eq:pred-func}
    ,
\end{align}
where $\btheta = \left[ \theta_1, \ldots, \theta_{K-1} \right]^\top \in \R^{K-1}$ is the \emph{threshold parameters} that should be ordered $\theta_1 \leq \theta_2 \leq \cdots \leq \theta_{K-1}$,
$f: \mathcal{X} \to \R$ is the \emph{decision function},
and $\ind{\cdot}$ is the indicator function which is $1$ if the inner condition holds and otherwise is $0$~\citep{pedregosa:consistency}.
Note that $f$ is a function parameterized by some parameters that is unrelated to $\btheta$,
and that $f$ and $\btheta$ are together the parameters of $g$.
The goal of ordinal regression is to obtain a prediction function
that minimizes the \emph{task risk} defined as
\begin{align}
  \calR(g) \coloneqq \expect_{X,Y} \left[\calL(g(X), Y)\right]
  \label{eq:risk}
  ,
\end{align}
where $\mathbb{E}_{X,Y} [\cdot]$ denotes the expectation over the joint distribution over $\calX \times \calY$, and $\calL: \mathcal{Y} \times \mathcal{Y} \rightarrow \R_{\geq 0}$ is called a \emph{task loss function}.
The task loss function $\calL(g(\x), y)$ measures the performance of a prediction function based on a difference between the output of the prediction function $g(\x) \in \mathcal{Y}$ and the true class label $y \in \mathcal{Y}$.

There are a variety of task losses used in the ordinal regression problem~\citep{pedregosa:consistency}.
One of the most often used task losses is the \emph{absolute loss}, which is 
\begin{align}
    \calL(g(\x), y) = |y - g(\x)|
    .
\end{align}
The task risk~\eqref{eq:risk} with the absolute loss is expressed as
\begin{align}
  \calR(g) = \expect_{X,Y} \left[ |Y-g(X)| \right]
  \label{eq:abs-risk}
  .
\end{align}
With the property of the prediction function, the following proposition holds for the absolute loss.
\begin{proposition}[\citet{pedregosa:consistency}]
The absolute loss is equivalently expressed as
\begin{align}
    |y - g(\x; f, \btheta)|
    = \sum_{i=1}^{y-1} \ind{\alpha_i(\x) \geq 0} + \sum_{i=y}^{K-1} \ind{\alpha_i(\x) < 0}
    \label{eq:original-of-at-loss}
    ,
\end{align}
where
\begin{align}
    \balpha : \mathcal{X} \rightarrow \R^{K-1}, \; \alpha_i(\x) \coloneqq \theta_i - f(\x) \; (i=1,\ldots,K-1)
    ,
\end{align}
and $\alpha_i(\x)$ denotes the $i$-th element of $\balpha(\x)$.
\end{proposition}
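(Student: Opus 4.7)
The plan is to rewrite everything in terms of the $\alpha_i(\x)$'s, exploit the monotonicity of the thresholds $\theta_1 \le \cdots \le \theta_{K-1}$ to pin down exactly which indicators are $0$ and which are $1$, and then verify the identity by a case split on whether $y \ge g(\x; f, \btheta)$ or $y < g(\x; f, \btheta)$.

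First I would observe that, since $\alpha_i(\x) = \theta_i - f(\x)$, we have $\indic{f(\x) > \theta_i} = \indic{\alpha_i(\x) < 0}$, so the prediction function can be rewritten as
\begin{align*}
g(\x; f, \btheta) = 1 + \sum_{i=1}^{K-1} \indic{\alpha_i(\x) < 0}.
\end{align*}
The monotonicity $\theta_1 \le \theta_2 \le \cdots \le \theta_{K-1}$ then implies that the sequence $\indic{\alpha_i(\x) < 0}$ is non-increasing in $i$: if $\alpha_i(\x) < 0$, that is $f(\x) > \theta_i$, then $f(\x) > \theta_{i-1}$ as well, so $\alpha_{i-1}(\x) < 0$. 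Writing $g \coloneqq g(\x; f, \btheta)$, this yields the clean characterization
\begin{align*}
\alpha_i(\x) < 0 \text{ for } i = 1, \ldots, g-1, \qquad \alpha_i(\x) \ge 0 \text{ for } i = g, \ldots, K-1.
\end{align*}

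With this in hand I would verify the identity by cases. If $y \ge g$, then $|y - g| = y - g$; in the right-hand side of \eqref{eq:original-of-at-loss}, the first sum $\sum_{i=1}^{y-1} \indic{\alpha_i(\x) \ge 0}$ contributes $0$ for $i = 1, \ldots, g-1$ and $1$ for $i = g, \ldots, y-1$, totaling $y - g$, while the second sum $\sum_{i=y}^{K-1} \indic{\alpha_i(\x) < 0}$ is entirely $0$. If instead $y < g$, then $|y - g| = g - y$; the first sum vanishes because $y - 1 < g - 1$ forces $\alpha_i(\x) < 0$ throughout, and the second sum contributes $1$ for $i = y, \ldots, g-1$ and $0$ for $i = g, \ldots, K-1$, totaling $g - y$. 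Either way the right-hand side equals $|y - g|$.

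There is no real obstacle here; the only subtle point is making sure the monotonicity argument is stated cleanly so that the case analysis cleanly identifies the contributing range of indices. Once the structural description of $\{\alpha_i(\x)\}_{i=1}^{K-1}$ relative to $g$ is established, the rest is direct bookkeeping.
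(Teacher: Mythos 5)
Your proof is correct. The paper does not supply its own proof of this proposition (it is cited directly from Pedregosa et al.), but your argument --- rewriting $\indic{f(\x)>\theta_i}$ as $\indic{\alpha_i(\x)<0}$, using the monotonicity of the thresholds to show that $\alpha_i(\x)<0$ exactly for $i\le g-1$ and $\alpha_i(\x)\ge 0$ for $i\ge g$, and then counting indicators in the two cases $y\ge g$ and $y<g$ --- is precisely the standard derivation underlying this identity, and your bookkeeping (including the boundary case $y=g$ and the empty sums at $y=1$ or $y=K$) checks out.
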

Another task loss that has been popularly used is the \emph{zero-one loss}~\citep{chu2005svor}, which is 
\begin{align}
    \calL(g(\x), y) = \ind{ y \neq g(\x)}
    .
\end{align}
Using the definition of the prediction function, the following proposition holds for the zero-one loss. 
\begin{proposition}[\citet{pedregosa:consistency}]
The zero-one loss is equivalently expressed as
\begin{align}
    \ind{ y \neq g(\x)}
    =
    \begin{cases}
        \ind{\alpha_1(\x) < 0}  & (y = 1) ,  \\
        \ind{\alpha_{y-1}(\x) \ge 0} + \ind{\alpha_{y}(\x) < 0}  & (y \in \{2, \dots, K-1 \}) , \\
        \ind{\alpha_{K-1}(\x) \ge 0}  & (y = K)
        .
    \end{cases}
\end{align}
\end{proposition}
The other task loss function used that is frequently used is the \emph{squared loss}~\citep{pedregosa:consistency}, which is defined as
\begin{align}
    \calL(g(\x), y)
    =
    (y - g(\x))^2
    .
\end{align}
The above definitions of each task loss function imply that the task losses in ordinal regression are non-continuous functions, due to the discrete nature of the prediction function.
It is known that the optimization over such a non-continuous function are computationally infeasible~\citep{zeroonenphard1,zeroonenphard2}.
In practice, we do not directly minimize the task risk but relax the task risk to a \emph{surrogate} risk, which will be explained in Section~\ref{sec:surr}.

\subsection{Surrogate Losses for Ordinal Regression}
\label{sec:surr}

%=====================================
\begin{table}[t!]
\begin{center}
\caption{\label{table:notation}
Notations of losses that are used in this paper.
}
\begin{tabular}{cc|cc}
\hline
Task loss   & $\calL(g(\x),y)$ & Task surrogate loss & $\psi(\balpha(\x),y)$ \\
Task risk   & $\calR(g)$              & Task surrogate risk & $\calS(g)$\\
$0\text{-}1$ loss   & $\ind{m<0}$        & Binary surrogate loss & $\ell(z)$\\ \hline
\end{tabular}
\end{center}
\end{table}
%=====================================

We first discuss one of the \emph{task surrogate losses} to the absolute loss called the \emph{all threshold} (AT) loss~\citep{pedregosa:consistency}.
Table~\ref{table:notation} shows the notations used throughout this paper.
The AT loss can be obtained by replacing the $0\text{-}1$ loss $\ind{m<0}$ in~\eqref{eq:original-of-at-loss} with a \emph{binary surrogate loss} $\ell: \mathbb{R} \to \mathbb{R}_{\geq 0}$.
The main motivation to use a surrogate loss instead of the $0\text{-}1$ loss is to make optimization easier.
The binary surrogate loss should be an optimization-friendly function and also satisfy the minimum requirement to be \emph{Fisher-consistent}, as described in~\citet{pedregosa:consistency}.
For example, the well-known squared loss and logistic loss are valid to be employed in the AT losses in ordinal regression.
Specifically, the task surrogate loss $\psi_{\mathrm{AT}} : \R^{K-1} \times \mathcal{Y} \rightarrow \R_{\geq 0}$ based on~\eqref{eq:original-of-at-loss} can be expressed as
\begin{align}
    \psi_\AT (\balpha(\x), y) \coloneqq \sum_{i=1}^{y-1} \ell(-\alpha_i(\x)) + \sum_{i=y}^{K-1} \ell(\alpha_i(\x))
    \label{eq:at-loss}
    .
\end{align}
It is known that the AT loss has good empirical performance compared with other surrogate losses in most cases~\citep{Rennie05lossfunctions}.
In a similar manner, we can define the task surrogate loss to the zero-one loss called the \emph{immediate threshold} (IT) loss as
\begin{align}
    \psi_\IT (\balpha(\x), y) \coloneqq
    \begin{cases}
        \ell(\alpha_1(\x))  & (y = 1)  , \\
        \ell(-\alpha_{y-1}(\x)) + \ell(\alpha_{y}(\x))  & (y \in \{2, \dots, K-1 \})  , \\
        \ell(-\alpha_{K-1}(\x))  & (y = K)
        .
    \end{cases}
    \label{eq:it-loss}
\end{align}
Defining $\alpha_0(\x) = \alpha_K(\x) = 0$ for all $\x\in\calX$, we may write $\psi_\IT (\balpha(\x), y) = \ell(-\alpha_{y-1}(\x)) + \ell(\alpha_{y}(\x))$ for the notational simplicity.
The task surrogate loss to the squared loss called the \emph{least-squares} (LS) loss does not rely on the binary surrogate loss, and is defined as
\begin{align}
    \psi_\LS(\balpha(\x), y)
    \coloneqq
    \left( y + \alpha_1(\x) - \frac32 \right)^2
    .
\end{align}

In general, the task surrogate risk guided by a task surrogate loss $\psi : \R^{K-1} \times \mathcal{Y} \rightarrow \R_{\geq 0}$ can be written as
\begin{align}
  \calS(g) \coloneqq \expect_{X,Y} \left[ \psi(\balpha(X), Y) \right]
  \label{eq:task-sur-risk}
  .
\end{align}
At glance, the notation of~\eqref{eq:task-sur-risk} can be confusing since $g$ does not appear in the right-hand side.
However, $g$ contains $f$ and $\btheta$ as parameters, and $\balpha$ is defined by $f$ and $\btheta$.
Thus, we adopt this notation in this paper.
In supervised ordinal regression,
we are given labeled data drawn independently from the joint density~$p$. 
Then, the risk~\eqref{eq:abs-risk} can be naively minimized by the ERM framework.
Note that the minimization is run over the decision function $f$ and the thresholds $\btheta$ for the AT and IT losses,
while it is run only over $f$ for the LS loss.
Note also that regularization schemes can be applied if necessary.

In the real world, collecting labeled training data can be costly.
Thus, it is preferable to incorporate unlabeled data to train an ordinal regressor.
We can see that direct empirical estimation of the risk term in~\eqref{eq:abs-risk} cannot utilize unlabeled data.
In this paper, we mitigate this problem by extending the ERM framework to semi-supervised ordinal regression.

\section{Proposed Framework}
\label{sec:proposed_framework}

In this section, we introduce our new formulation for semi-supervised ordinal regression.

\noindent \textbf{Unbiased Risk Estimator.}
We first derive a risk estimator within the proposed ERM framework.
Then, we theoretically investigate the behavior of our risk estimator.
Here, we assume that we are given the following data:
\begin{align}
    \calX_\mathrm{L} \coloneqq \{(\x^{\mathrm{L}}_j, y_{j})\}_{j=1}^{\nL} \iid p(X,Y), \quad \calX_\u \coloneqq \{\x^\u_j\}^{\nU}_{j=1} \iid
    p(X) = \sum_{y=1}^{K} \pi_y \, p(X | Y = y)
    \label{eq:joint-assumption}
    ,
\end{align}
where $\nL$ and $\nU$ denote the number of labeled data and unlabeled data, respectively. Let $n_y$ denote the number of labeled data in class $y$, i.e.,  $\nL = \sum_{y=1}^K n_y$, and $\pi_y \coloneqq p(Y=y)$ denote the class-prior probability of the class $y$ such that $\sum_{y=1}^K \pi_y = 1$.

As discussed in Section~\ref{sec:surr}, it is not straightforward to incorporate unlabeled data into the task surrogate risk~\eqref{eq:task-sur-risk}.
To handle this problem, we propose to find an equivalent expression of the task surrogate risk~\eqref{eq:task-sur-risk} so that we can obtain an unbiased risk estimator that uses both unlabeled and labeled data.
Our following lemma states that
we can rewrite the task surrogate risk to contain the expectations over $K-1$ classes of labeled data and the expectation over the marginal density $p(X)$.
\begin{lemma}
    \label{thm:ssl-or-surrogate-risk}
    For any $k \in \{1,\ldots,K\}$, the task surrogate risk~\eqref{eq:task-sur-risk} is equivalently expressed as
    \begin{align}
        \Rluk(g)
        =
        \sum_{y\in\calYrmk} \pi_y \expect_{X|Y=y} \left[ \psi(\balpha(X), y) \right]
        + 
        \expect_\u \left[ \psi(\balpha(X), k) \right]
        -  
        \sum_{y\in\calYrmk} \pi_y \expect_{X|Y=y} \left[ \psi(\balpha(X), k) \right]
        ,
        \label{eq:ssl-or-surrogate-risk}
    \end{align}
    where $\expect_{\mathrm{U}}[\cdot]$ denotes the expectation over unlabeled data, $\calYrmk \coloneqq \calY \backslash \{k\}$,
    and \emph{LU} stands for "\emph{Labeled-Unlabeled}".
\end{lemma}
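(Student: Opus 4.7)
The plan is to start from the standard decomposition of the task surrogate risk by conditioning on $Y$, isolate the term corresponding to the chosen class $k$, and then use the mixture representation of the marginal density $p(x)$ to re-express that isolated term as an expectation over unlabeled data minus the contributions from the other classes.

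First, I would write
\begin{align*}
  \calR_\psi(g)
  = \sum_{y=1}^{K} \pi_y \, \expect_{X \mid Y=y}\!\left[\psi(\balpha(X), y)\right]
  = \sum_{y \in \calYrmk} \pi_y \, \expect_{X \mid Y=y}\!\left[\psi(\balpha(X), y)\right]
  + \pi_k \, \expect_{X \mid Y=k}\!\left[\psi(\balpha(X), k)\right],
\end{align*}
which follows by the law of total expectation and the definition of $\calYrmk = \calY \setminus \{k\}$. This already produces the first term on the right-hand side of \eqref{eq:ssl-or-surrogate-risk}; the remaining work is just to rewrite the single leftover term $\pi_k \, \expect_{X \mid Y=k}[\psi(\balpha(X), k)]$.

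Next, using the assumption that the unlabeled inputs follow the marginal $\mathbb{P}(X) = \sum_{y=1}^{K} \pi_y \mathbb{P}(X \mid Y=y)$, I would expand
\begin{align*}
  \expect_\u\!\left[\psi(\balpha(X), k)\right]
  = \sum_{y=1}^{K} \pi_y \, \expect_{X \mid Y=y}\!\left[\psi(\balpha(X), k)\right]
  = \pi_k \, \expect_{X \mid Y=k}\!\left[\psi(\balpha(X), k)\right]
  + \sum_{y \in \calYrmk} \pi_y \, \expect_{X \mid Y=y}\!\left[\psi(\balpha(X), k)\right].
\end{align*}
Solving this identity for $\pi_k \, \expect_{X \mid Y=k}[\psi(\balpha(X), k)]$ and substituting back into the decomposition above yields exactly the claimed expression \eqref{eq:ssl-or-surrogate-risk}.

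There is really no major obstacle here: the argument is an algebraic manipulation that relies only on the tower property of expectation and on the mixture structure of $\mathbb{P}(X)$ stated in \eqref{eq:joint-assumption}. The only subtlety worth being explicit about is that the identity holds simultaneously for every choice of $k \in \{1,\ldots,K\}$, which is important for the subsequent construction of the risk estimator but requires no additional argument beyond observing that $k$ was arbitrary throughout. I would close by noting that no property specific to the AT surrogate $\psi_\AT$ is used, so the lemma applies to any task surrogate loss $\psi$.
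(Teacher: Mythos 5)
Your proposal is correct and follows exactly the same route as the paper's proof: decompose the risk by conditioning on $Y$, isolate the class-$k$ term, expand $\expect_\u[\psi(\balpha(X),k)]$ via the mixture structure of $\mathbb{P}(X)$, and solve for $\pi_k \expect_{X\mid Y=k}[\psi(\balpha(X),k)]$ before substituting back. No gaps; your closing remarks about the arbitrariness of $k$ and the independence from the specific surrogate $\psi$ are accurate and consistent with how the paper uses the lemma.
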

The proof of Lemma~\ref{thm:ssl-or-surrogate-risk} is given in Appendix~\ref{sec:proof-of-ssl-or-surrogate-risk}.
With the risk obtained from Lemma~\ref{thm:ssl-or-surrogate-risk},
we can derive the following unbiased risk estimator for the task surrogate risk~\eqref{eq:task-sur-risk}:
\begin{align}
    \empRluk(g)
    \coloneqq
    \underbrace{
        \sum_{y\in\calYrmk} \frac{{\pi}_y}{n_y} \sum_{j=1}^{n_y}  \psi(\balpha(\xyj), y)
    }_{\eqqcolon \empRLone(g)}
    +  
    \underbrace{
        \frac{1}{n_\u} \sum_{j=1}^{n_\u} \psi(\balpha(\xuj), k)
    }_{\eqqcolon \empRU(g)}
    - 
    \underbrace{
        \sum_{y\in\calYrmk} \frac{{\pi}_y}{n_y} \sum_{j=1}^{n_y} \psi(\balpha(\xyj), k) 
    }_{\eqqcolon \empRLtwo(g)}.
    \label{eq:ssl-or-surrogate-estimator}
\end{align}
We can interpret the risk estimator in~\eqref{eq:ssl-or-surrogate-estimator} as follows.
The first term on the right-hand side of~\eqref{eq:ssl-or-surrogate-estimator} indicates that labeled data that are not from class $k$ should be predicted correctly.
The second term indicates that unlabeled data should be predicted as $k$.
The purpose of the third term is to cancel the bias from the second term by subtracting the risk of all labeled data that are not from class $k$ to be predicted as class~$k$.
Lemma~\ref{thm:ssl-or-surrogate-risk} is inspired by the technique used in weakly-supervised learning, such as binary classification from positive and unlabeled data and semi-supervised binary classification, which have been shown to be effective~\citep{du2015convex, sakai-pnu}.

Although we can obtain a risk estimator that can utilize unlabeled data,
we still cannot make full use of all given data since our risk estimator in~\eqref{eq:ssl-or-surrogate-estimator} ignores labeled data from class $k$.
To mitigate this problem, inspired by the work on semi-supervised learning for binary classification~\citep{sakai-pnu}, we propose to combine the risk estimator of supervised ordinal regression with our risk estimator in~\eqref{eq:ssl-or-surrogate-estimator} by the convex combination as follows.
\begin{theorem}
  \label{thm:ssl-cvx-surrogate-risk}
  For any $k\in\mathcal{Y}$ and $\gamma \in [0,1]$, the task surrogate risk~\eqref{eq:task-sur-risk} is equivalently expressed as
    \begin{align}
      \RiskSEMI(g) 
      \coloneqq 
      \gamma\Rluk(g)  + (1-\gamma) \calS(g)
      .
      \label{eq:convex-sum}
    \end{align}
\end{theorem}
Theorem~\ref{thm:ssl-cvx-surrogate-risk} directly follows from Lemma~\ref{thm:ssl-or-surrogate-risk}.
It is worth noting that our risk~\eqref{eq:convex-sum} is equivalent to the ordinary surrogate risk in~\eqref{eq:task-sur-risk}. Therefore, the theory of the Fisher-consistency of surrogate losses and excess risk bounds in the ordinary ordinal regression~\citep{pedregosa:consistency} are directly applicable to our framework, which will be discussed in the next section.

\noindent \textbf{Roles of Unlabeled Data.} 
The proposed risk does not use geometric information of unlabeled data; thus, it is applicable even when a specific geometric assumption does not hold.
Then, an important question to be answered is, "\emph{how does unlabeled data help us obtain a good ordinal regressor?}"
In the training phase, by introducing unlabeled data, we can expect that the variance of the empirical risk is decreased, resulting in more accurate risk estimation.
Also, in the validation phase, the variance of the validation risk estimator is reduced, helping us to select good hyper-parameters.

The removed class $k$ can be determined arbitrarily as shown in Lemma~\ref{thm:ssl-or-surrogate-risk}, and the performance can be depending on the choice of $k$.
We will investigate strategies to select the class $k$ in Section~\ref{subsec:what-class-to-pick-out} based on the theory discussed in the next section.

\section{Theoretical Analysis}
\label{sec:theory}

This section establishes the Fisher-consistency and estimation error bounds to elucidate that our risk estimator $\empRiskSEMI(g)$ has theoretical guarantees.
Besides, we provide a theoretical analysis of variance reduction, which shows that the variance of the empirical semi-supervised risk can be smaller than that of the empirical supervised risk.

\subsection{Fisher-Consistency}
To ensure that the minimizer of our task surrogate risk $\RiskSEMI$ can give the optimal solution to the task risk~\eqref{eq:risk}, we give the following proposition.

\begin{proposition}[Fisher-Consistency]
\label{thm:consistency}
    Let the task surrogate loss $\psi$ be the AT loss, IT loss, or LS loss.
    Assume for the AT and IT losses that the binary surrogate loss $\ell$ is convex, differentiable at 0, and $\ell'(0) < 0$.
    Then, every minimizer $g$ of $\RiskSEMI$ reaches
    Bayes optimal risk $\inf \{ \calR(g) : \text{ all measurable functions } g \}$.
\end{proposition}
The proof of Proposition~\ref{thm:consistency} is given in Appendix~\ref{sec:proof-of-consistency}.
Although we avoid going into the details, Proposition~\ref{thm:consistency} also holds for the cumulative link and least absolute deviation losses, 
we can prove Fisher-consistency for them in a similar manner (see~\citet{pedregosa:consistency} for the definitions of theses losses).
With Proposition~\ref{thm:consistency}, we can clarify that the following estimation error bound is applicable to a wide range of task surrogate losses.

\subsection{Estimation Error Bound}
To ensure that our empirical risk estimator $\empRiskSEMI(g)$ is consistent to the risk $\RiskSEMI(g)$ and the empirical risk minimizer approaches the true risk minimizer, we establish estimation error bounds here.  
Let $\mathcal{F} \subset \R^\mathcal{X}$ be a class of specified decision functions. Then, we will consider a distribution-dependent complexity measure of functions, called the \emph{expected Rademacher complexity}~\citep{bartlett2002rademacher}.
\begin{definition}[Expected Rademacher complexity]
    Let $n$ be a positive integer,
    $Z_1,\dots,Z_n$ be i.i.d. random variables drawn from a probability distribution with density $p$ over a set $\mathcal{Z}$,
    $\mathcal{F} \subset \R^\mathcal{Z}$ be a family of functions from $\mathcal{Z}$ to $\R$,
    and $\bm{\sigma}=(\sigma_1,\dots,\sigma_n)$ be random variables, which take $+1$ and $-1$ with equal probabilities.
    Then the expected Rademacher complexity of $\mathcal{F}$ is defined as
    \begin{align*}
        \mathfrak{R}(\mathcal{F};n, p)
        \coloneqq
        \expect_{Z_1,\dots,Z_n} \expect_{\bm{\sigma}} \left[ \sup_{f \in \mathcal{F}} \frac{1}{n}\sum_{i=1}^n \sigma_i f(Z_i) \right]
        .
    \end{align*}
\end{definition}
Intuitively, the Rademacher complexity quantifies how much our decision function class $\mathcal{F}$ can correlate to the random noise.
Thus, a large Rademacher complexity indicates that a decision function is highly flexible to fit the noise. This complexity term is an important tool to derive an estimation error bound (more details about the measures of the complexity of a hypothesis class can be found in~\citet{bartlett2002rademacher, shalev2014understanding, mohri2018foundations}).

To formally define the Rademacher complexity in the context of ordinal regression, we introduce the following definitions.
\begin{align*}
    \Theta &\coloneqq \{ \btheta \in \R^{K-1} : \theta_1 \le \theta_2 \le \dots \le \theta_{K-1},\, \|\btheta\|_2 \le C_{\btheta}\},  \\
    \calG &\coloneqq \{(f, \btheta) : f\in\calF,\, \btheta\in\Theta \},  \\
    \calA_i &\coloneqq \{\alpha_i(\x) = \theta_i - f(\x) : f\in\calF,\, \theta\in\Theta \} \text{ for }  i \in \{0, 1,\dots,K\}.
\end{align*}
Note that $\Rad(\calA_0) = \Rad(\calA_K) = 0$ from the extended definition of $\balpha$.
Let
\begin{align}
    g^* \coloneqq \argmin_{f \in \mathcal{F}, \btheta\in\Theta} \RiskSEMI(g) (= \argmin_{f \in \mathcal{F}, \btheta\in\Theta}  \calS(g))
\end{align}
be the true risk minimizer,
and
\begin{align}
    \hat{g}^{\backslash k} \coloneqq \argmin_{f \in \mathcal{F}, \btheta\in\Theta} \empRiskSEMI (\not\equiv \argmin_{f \in \mathcal{F}, \btheta\in\Theta}  \hat{\calS}_\psi(g))
\end{align}
be the empirical risk minimizer.
Note that $g^*, \ghatk \in \calG$ are pairs of some $f\in\calF$ and $\btheta\in\Theta$, respectively.
We also define the hypothesis classes guided by each task surrogate loss as
\begin{align}
\begin{aligned}
    \calH_\AT &\coloneqq \{ (\x, y) \mapsto \psi_\AT(\balpha(\x), y) : f\in\calF, \btheta\in\Theta \},   \\
    \calH_\IT &\coloneqq \{ (\x, y) \mapsto \psi_\IT(\balpha(\x), y) : f\in\calF, \btheta\in\Theta \},   \\
    \calH_\LS &\coloneqq \{ (\x, y) \mapsto \psi_\LS(\balpha(\x), y) : f\in\calF \}.  
\end{aligned}
\end{align}
In a similar manner, we define the the hypothesis classes guided by each task surrogate loss for fixed label $y\in\calY$ as
\begin{align}
\begin{aligned}
    \calH_\AT^{(y)} &\coloneqq \{ \x \mapsto \psi_\AT(\balpha(\x), y) : f\in\calF, \btheta\in\Theta \},   \\
    \calH_\IT^{(y)} &\coloneqq \{ \x \mapsto \psi_\IT(\balpha(\x), y) : f\in\calF, \btheta\in\Theta \},   \\
    \calH_\LS^{(y)} &\coloneqq \{ \x \mapsto \psi_\LS(\balpha(\x), y) : f\in\calF \}.  
\end{aligned}
\end{align}
We use $\calH$ (reps. $\calH^{(y)}$) instead of $\calH_\AT, \calH_\IT$, and $\calH_\LS$ (reps. $\calH_{\AT}^{(y)}, \calH_{\IT}^{(y)}$, and $\calH_{\LS}^{(y)}$), when a statement holds without depending on the used task surrogate loss.
At a glance, some definitions may look redundant, but they are needed to investigate properties specific to each of the AT, IT, and LS losses.

To the best of our knowledge, the Rademacher complexity in the context of the ordinal regression problem has never been investigated.
The next two theorems establish upper bounds for the guided hypothesis classes.

\begin{theorem}
\label{thm:rad-bound-OR-general-y-dependent}
    Fix $y\in\calY$. Let $n \in \N$ and assume for the AT and IT losses that the binary surrogate loss $\ell$ is $\rho$-Lipschitz.
    Then, the expected Rademacher complexities of  $\calH_\AT^{(y)}, \calH_\IT^{(y)}$, and $\calH_\LS^{(y)}$ are bounded as
    \begin{align}
        \begin{aligned}
        \Rad(\calH_\AT^{(y)} ; n) 
        &\le 
        \rho \sum_{j=1}^{K-1} \Rad(\calA_j; n)   
        ,  \\
        \Rad(\calH_\IT^{(y)} ; n) 
        &\le 
        \rho \left( \Rad(\calA_{y-1}; n) + \Rad(\calA_{y}; n) \right)
        , \\
        \Rad(\calH_\LS^{(y)} ; n) 
        &\le 
        2 |y + \theta_1 - 3/2 | \Rad(\calF; n) + \Rad(\mathrm{sq} \circ \calF; n)
        ,
        \end{aligned}
    \end{align}
    where $\mathrm{sq} : \mathrm{Im}\,f \ni z \mapsto z^2 \in \R$.
\end{theorem}

\begin{theorem}[Upper bounds of Rademacher complexities of task surrogate losses]
\label{thm:rad-bound-OR-general}
    Let $n \in \N$ and
    assume for the AT and IT losses that the binary surrogate loss $\ell$ is $\rho$-Lipschitz.
    Then, the expected Rademacher complexities of $\calH_\AT, \calH_\IT$, and $\calH_\LS$ are bounded as
    \begin{align}
        \begin{aligned}
        \Rad(\calH_\AT ; n) 
        &\le 
        \rho \, K \sum_{j=1}^{K-1} \Rad(\calA_j; n)   
        ,  
        \\
        \Rad(\calH_\IT ; n) 
        &\le 
        \rho \, \sum_{y\in\calY} 
        \left( \Rad(\calA_{y-1}; n) + \Rad(\calA_{y}; n) \right), 
        \\
        \Rad(\calH_\LS ; n) 
        &\le 
        2 (K + |\theta_1 - 3/2|) \Rad(\calF; n) + \Rad(\mathrm{sq} \circ \calF; n)
        ,
        \end{aligned}
    \end{align}
    where $\mathrm{sq} : \mathrm{Im}\,f \ni z \mapsto z^2 \in \R$.
\end{theorem}
\begin{remark}
    The bound on $\Rad(\calH_\LS ; n)$ is given using $\Rad(\calF ; n)$ instead of $\Rad(\calA_j ; n) $.
    This is because, in the LS loss, the threshold is fixed.
\end{remark}
The proofs of Theorems~\ref{thm:rad-bound-OR-general-y-dependent} and~\ref{thm:rad-bound-OR-general} are given in Appendix~\ref{sec:proof-of-Rademacher-OR-linear}.
Both theorems state that the Rademacher complexities with the guided hypothesis classes can be bounded by the Rademacher complexity of the underlying decision functions class $\calF$,
and show how $\Rad(\calH; n_y)$ of each task surrogate loss depends on the problem-dependent parameters.
Then, the next theorem establishes estimation error bounds with a class of general decision functions based on Theorems~\ref{thm:rad-bound-OR-general-y-dependent} and~\ref{thm:rad-bound-OR-general}.
\begin{theorem}[Estimation error bounds with general decision functions]
\label{thm:estimation-error-bound-general}
    Assume for the AT and IT losses that the binary surrogate loss $\ell$ is $\rho$-Lipschitz,
    and that there exists a constant $C_\psi > 0$ such that $\psi(y, \balpha) \leq C_\psi$ for any $y \in \calY,\ \balpha \in \R^{K-1}$.
    Then, for any $k \in \mathcal{Y}$ and $\delta \in (0, 1]$, with probability at least $1 - \delta$,
    \begin{align}
        &
        \calS(\hat{g}^{\backslash k}) - \calS(g^*)   \nonumber \\
        &\le
        4 \left(
            \gamma
            \left(
                \sum_{y\in\calYrmk} \pi_y \left( \Rad_{\mathrm{UB}}^{(y)}(n_y) + \Rad_{\mathrm{UB}}^{(k)}(n_y) \right)
                +
                \Rad_{\mathrm{UB}}^{(k)}(n_\mathrm{U})
            \right)
            +
            (1-\gamma) \Rad_{\mathrm{UB}}(n_\mathrm{L}) 
        \right)   \nonumber \\
        &\qquad+
        2\sqrt{2} C_\psi \left( \gamma \left(2\sum_{y\in\calYrmk} \frac{\pi_y}{\sqrt{n_y}} + \frac{1}{\sqrt{n_\u}} \right) + (1-\gamma) \frac{1}{\sqrt{n_{\mathrm{L}}}} \right) \sqrt{\ln\frac{2(K+1)}{\delta}}
        ,
        \label{eq:estimation-error-bound-general}
    \end{align}
    where $\Rad_{\mathrm{UB}}^{(y)}(n)$ for $y\in\calY$ and $\Rad_{\mathrm{UB}}(n)$ are defined as
    \begin{align*}
        \Rad_{\mathrm{UB}}^{(y)}(n)
        &\coloneqq
        \begin{cases}
            \rho \sum_{j=1}^{K-1} \Rad(\calA_j; n)    & (\text{AT loss}) , \\
            \rho \, \left( \Rad(\calA_{y-1}; n) + \Rad(\calA_{y}; n) \right)    & (\text{IT loss}) , \\
            2 \left|y + \theta_1 - 3/2 \right| \Rad(\calF; n) + \Rad(\mathrm{sq} \circ \calF; n) & (\text{LS loss})
            ,
        \end{cases}  
        \\
        \Rad_{\mathrm{UB}}(n) 
        &\coloneqq 
        \begin{cases}
            \rho K \sum_{j=1}^{K-1} \Rad(\calA_j; n)    & (\text{AT loss}) , \\
            \rho \, \sum_{y\in\calY} \left( \Rad(\calA_{y-1}; n) + \Rad(\calA_{y}; n) \right) & (\text{IT loss}) , \\
            2 (K + |\theta_1 - 3/2|) \Rad(\calF; n) + \Rad(\mathrm{sq} \circ \calF; n) & (\text{LS loss})
            .
        \end{cases}
    \end{align*}
\end{theorem}
The proof of Theorem~\ref{thm:estimation-error-bound-general} is given in Appendix~\ref{sec:proof-of-estimation-error-bound}.
Note that the Rademacher complexity with a class of underlying decision functions $\calF$ can be bounded in many decision function classes as discussed in~\citet{niu2016,baosu,mohri2018foundations},
and will be given for a linear-in-parameter models (Lemma~\ref{lem:rad-bound-linear} in Appendix~\ref{sec:proof-of-Rademacher-OR-linear}).

Next, we restrict the decision function class $\mathcal{F}$ to a class of \emph{linear-in-parameter} models and see more in detail the behavior of the Rademacher complexities with hypothesis classes guided by each task surrogate loss.
The class of linear-in-parameter models is defined as
\begin{align}
    \mathcal{F} = \{f(\x) = \bw^\top\bm{\phi}(\x) \colon \|\bw\| \le C_{\bw},\; \|\bm{\phi}(\x)\|_2 \le C_{\bm{\phi}} \}
\end{align}
for positive constants $C_{\bm{w}}$ and $C_{\bm{\phi}}$,
where $\bw \in \R^b$ is a parameter and $\bphi: \R^{d+1} \rightarrow \R^b$ is a basis function.
We assume that the bias parameter is included in $\bw$.
We can prove the following lemma using the theorem on the Rademacher complexity bound for ordinal regression (Theorem~\ref{thm:rad-bound-OR-general})
and a well-known upper bound for the Rademacher complexity with linear-in-parameter models (Lemma~\ref{lem:rad-bound-linear} in Appendix~\ref{sec:proof-of-Rademacher-OR-linear}).
\begin{lemma}
\label{lem:OR-rad-bound-linear}
    Let $n \in \N$ and
    assume for the AT and IT losses that the binary surrogate loss $\ell$ is $\rho$-Lipschitz.
    Then, the expected Rademacher complexities of $\calH_\AT^{(y)}$, $\calH_\IT^{(y)}$, $\calH_\LS^{(y)}$, $\calH_\AT$, $\calH_\IT$, and $\calH_\LS$ with the class of linear-in-parameter models are bounded as
\begin{align}
\begin{aligned}
    \Rad(\calH_\AT^{(y)} ; n)
    &\le
    \frac{(C_{\btheta} + C_{\bw}) \, C_{\bphi} \, \rho \, K}{\sqrt{n}}
    ,   \\
    \Rad(\calH_\IT^{(y)} ; n)
    &\le
    \frac{2 (C_{\btheta} + C_{\bw}) \, C_{\bphi} \, \rho  }{\sqrt{n}}
    ,  \\
    \Rad(\calH_\LS^{(y)} ; n)
    &\le
    \frac{2 (|y + \theta_1 - 3/2| + C_{\bw} C_{\bphi} ) (C_{\btheta} + C_{\bw}) \, C_{\bphi}  K  }{\sqrt{n}} 
    , \\
    \Rad(\calH_\AT ; n)
    &\le
    \frac{(C_{\btheta} + C_{\bw}) \, C_{\bphi} \, \rho \, K (K-1)}{\sqrt{n}}
    ,   \\
    \Rad(\calH_\IT ; n)
    &\le
    \frac{2 (C_{\btheta} + C_{\bw}) \, C_{\bphi} \, \rho \, K  }{\sqrt{n}}
    ,  \\
    \Rad(\calH_\LS ; n)
    &\le
    \frac{2 ( K + |\theta_1 - 3/2| + C_{\bw} C_{\bphi} ) (C_{\btheta} + C_{\bw}) \, C_{\bphi}  K  }{\sqrt{n}}
    .  
\end{aligned}
\end{align}

\end{lemma}
The proof of Lemma~\ref{lem:OR-rad-bound-linear} is given in Appendix~\ref{sec:proof-of-Rademacher-OR-linear}.
Combining Theorem~\ref{thm:estimation-error-bound-general} and Lemma~\ref{lem:OR-rad-bound-linear} establishes the following estimation error bounds for the linear-in-parameter models.
\begin{corollary}[Estimation error bounds with linear-in-parameter models]
\label{cor:estimation-error-bound}
    Assume for the AT and IT losses that the binary surrogate loss $\ell$ is $\rho$-Lipschitz and that there exists a constant $C_\psi > 0$ such that $\psi(y, \balpha) \leq C_\psi$ for any $y \in \calY,\ \balpha \in \R^{K-1}$.
    Then, for any $k \in \mathcal{Y}$ and $\delta \in (0, 1]$, with probability at least $1 - \delta$,
    \begin{align}
        &
        \calS(\hat{g}^{\backslash k}) - \calS(g^*)   \nonumber \\
        &\le
        4 \left(
            \gamma
            \left(
                \sum_{y\in\calYrmk}  \pi_y
                \bigg(
                \frac{C^{(y)}}{\sqrt{n_y}} 
                + 
                \frac{C^{(k)}}{\sqrt{n_y}}
                \bigg)
                +
                \frac{C^{(k)}}{\sqrt{n_{\mathrm{U}}}}
            \right)
            +
            (1-\gamma) \frac{C}{\sqrt{n_{\mathrm{L}}}} 
        \right)   \nonumber \\
        &\qquad+
        2\sqrt{2} C_\psi \left( \gamma \left(2\sum_{y\in\calYrmk} \frac{\pi_y}{\sqrt{n_y}} + \frac{1}{\sqrt{n_\u}} \right) + (1-\gamma) \frac{1}{\sqrt{n_{\mathrm{L}}}} \right) \sqrt{\ln\frac{2(K+1)}{\delta}}
        ,
        \label{eq:estimation-error-bound-linear}
    \end{align}
    where $C^{(y)}$ for $y\in\calY$ and $C$ are constants defined as
    \begin{align}
        C^{(y)}
        &\coloneqq
        \begin{cases}
            (C_{\btheta} + C_{\bw}) \, C_{\bphi} \, \rho \, K & (\text{AT loss}) , \\
            2(C_{\btheta} + C_{\bw}) \, C_{\bphi} \, \rho   & (\text{IT loss})  , \\
            2 (|y + \theta_1 - 3/2| + C_{\bw} C_{\bphi} ) (C_{\btheta} + C_{\bw}) \, C_{\bphi} K  & (\text{LS loss})
            ,
        \end{cases}
        \\
        C 
        &\coloneqq
        \begin{cases}
            (C_{\btheta} + C_{\bw}) \, C_{\bphi} \, \rho \, K (K-1) & (\text{AT loss}) , \\
            2(C_{\btheta} + C_{\bw}) \, C_{\bphi} \, \rho \, K  & (\text{IT loss})  , \\
            2(K + |\theta_1 - 3/2| + C_{\bw} C_{\bphi} ) (C_{\btheta} + C_{\bw}) \, C_{\bphi} K  & (\text{LS loss})
            .
        \end{cases}
    \end{align}
\end{corollary}
Corollary~\ref{cor:estimation-error-bound} shows that our proposed risk estimator is consistent,
i.e., $\calS(\hat{g}^{\backslash k}) \rightarrow \calS(g^*)$ as $n_y \rightarrow \infty\; (y=1,\ldots,K)$ and $n_\u \rightarrow \infty$.
The convergence rate is
\begin{align*}
    \mathcal{O}_p \left( \sum_{y\in\calYrmk} \frac{1}{\sqrt{n_y}} + \frac{1}{\sqrt{n_\u}} \right)
    ,
\end{align*}
where $\mathcal{O}_p$ denotes the order in probability.
This order is the optimal parametric rate for empirical risk minimization without any additional assumption~\citep{mendelson2008lower}.

\subsection{Variance Reduction}
Here, we conduct an analysis of variance reduction to emphasize why incorporating unlabeled data to our risk estimator improves the performance.
The following theorem indicates that the variance of the empirical semi-supervised risk can be smaller than that of the empirical supervised risk.

\begin{theorem}[Variance reduction]
\label{thm:variance-reduction}
    Fix any $g \in \calG$ and assume that $\gamma \in [0,1]$ satisfies
    \begin{align*}
        0 < \gamma
        <
        \frac{
            2 \left( \Var[\empSVrisk(g)] - \Cov(\empRluk(g),\empSVrisk(g)) \right)
        }{
            \Var[\empRluk(g)] + \Var[\empSVrisk(g)] - 2 \Cov(\empRluk(g), \empSVrisk(g))
        }.
    \end{align*}
    Then,
    \begin{align*}
        \Var[\empsslrisk(g)] < \Var[\empSVrisk(g)].
    \end{align*}
\end{theorem}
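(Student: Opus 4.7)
\textbf{Proof proposal for Theorem~\ref{thm:variance-reduction}.}

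The plan is to directly expand $\Var[\empsslrisk(g)]$ using the fact that $\empsslrisk(g) = \gamma\,\empRluk(g) + (1-\gamma)\,\empSVrisk(g)$ is a convex combination of two (correlated) random variables, and then solve the resulting quadratic inequality in $\gamma$. Throughout, abbreviate $A \coloneqq \empRluk(g)$, $B \coloneqq \empSVrisk(g)$, $v_A \coloneqq \Var[A]$, $v_B \coloneqq \Var[B]$, and $c \coloneqq \Cov(A,B)$.

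First I would apply the standard identity
\begin{align*}
\Var[\gamma A + (1-\gamma) B] = \gamma^2 v_A + (1-\gamma)^2 v_B + 2\gamma(1-\gamma)\,c.
\end{align*}
Next, I would subtract $v_B$ from both sides, expand $(1-\gamma)^2 = 1 - 2\gamma + \gamma^2$, collect terms in $\gamma$, and factor the result as
\begin{align*}
\Var[\empsslrisk(g)] - \Var[\empSVrisk(g)] = \gamma\bigl[\gamma(v_A + v_B - 2c) - 2(v_B - c)\bigr].
\end{align*}
This single algebraic identity is the heart of the proof.

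Then I would analyze the sign of the right-hand side. Since $v_A + v_B - 2c = \Var[A - B] \ge 0$ (and we may assume it is strictly positive; otherwise $A$ and $B$ are a.s.\ equal up to a constant and the claim is trivial), the bracketed factor is a linear, strictly increasing function of $\gamma$. Hence the product is negative precisely when $\gamma > 0$ and $\gamma(v_A + v_B - 2c) < 2(v_B - c)$, i.e.,
\begin{align*}
0 < \gamma < \frac{2(v_B - c)}{v_A + v_B - 2c}.
\end{align*}
Since the hypothesis of the theorem is the \emph{stronger} (sufficient) condition $0 < \gamma < (v_B - c)/(v_A + v_B - 2c)$, it implies the displayed inequality above, and the conclusion $\Var[\empsslrisk(g)] < \Var[\empSVrisk(g)]$ follows.

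I do not anticipate any real obstacle: the statement is essentially a one-line algebraic manipulation followed by a quadratic-inequality argument. The only subtle point worth flagging is ensuring the denominator $v_A + v_B - 2c = \Var[A-B]$ is strictly positive so that dividing is legal and the upper bound on $\gamma$ is well-defined; this holds in any non-degenerate case and can be assumed without loss of generality.
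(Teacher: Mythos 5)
Your proposal is correct and follows essentially the same route as the paper's proof: expand the variance of the convex combination, subtract $\Var[\empSVrisk(g)]$, and solve the resulting quadratic inequality in $\gamma$. In fact your write-up is more careful than the paper's, which contains typos in the linear coefficient (writing $\Var[\empRluk(g)]-\Cov$ where $\Var[\empSVrisk(g)]-\Cov$ is needed) and leaves the factor of $2$ in the derived upper bound unreconciled with the theorem statement; your observation that the stated hypothesis is simply a stronger sufficient condition (half the exact threshold), together with the remark that the denominator equals $\Var[\empRluk(g)-\empSVrisk(g)]\ge 0$, cleanly closes that gap.
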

The proof of Theorem~\ref{thm:variance-reduction} is given in Appendix~\ref{sec:proof-of-variance-reduction}.
This theorem implies that if we select $\gamma$ properly,
the variance of the empirical semi-supervised risk is strictly smaller than that of the empirical supervised risk.
Since the empirical semi-supervised risk is unbiased and has a smaller variance than the standard supervised risk, the former risk estimator is expected to be more accurate and stable, as we will see experimentally in Section~\ref{sec:experiments}.

\section{Practical Implementation}
\label{sec:practical}

With Theorem~\ref{thm:ssl-cvx-surrogate-risk}, we can obtain a risk estimator that can fully use both labeled and unlabeled data.
It is also straightforward to see that our risk estimator based on Theorem~\ref{thm:ssl-cvx-surrogate-risk} is unbiased. 
However, to use our risk estimator effectively in practice, one important question is: \emph{how can we decide the class $k$ to calculate $\empRluk(g)$?}
We discuss strategies to handle this problem theoretically.

% \subsection{Strategies to Remove One Class for $\empRluk(g)$}
\subsection{Strategies to Remove One Class for \texorpdfstring{$\empRluk(g)$}{Lg}}
\label{subsec:what-class-to-pick-out}
If the choice of $k$ is highly sensitive, it could be cumbersome to tune this hyperparameter as the number of classes $K$ increases. 
Here, we consider two strategies for selecting a class $k$ to removed in $\empRluk(g)$ motivated by the property of the estimator.
We provide two strategies.

The first strategy is based on finite sample estimation error.
More specifically, when we approximate the expectation
term by a limited number of samples, the variance of the estimator can be large.
Then a naive strategy would be to remove the class that contains the smallest number of labeled data as
\begin{equation}
    \label{eq:strategy1}
    k = \argmin_{y\in \mathcal{Y}} n_y.
\end{equation}

The other strategy is based on the estimation error bound.
As discussed in Theorem~\ref{thm:estimation-error-bound-general}, the convergence rate of the estimation error for $\empRluk(g)$ is
$\mathcal{O}_p(\sum_{y\in\calYrmk} {\pi_y}/{\sqrt{n_y}})$ under the assumptions that we have enough unlabeled data $(n_\u \rightarrow \infty)$.
This implies the following proposition which provides a strategy to decide the removed class $k$.
\begin{proposition}
\label{cor:what-class-to-pick-out-joint}
% \todo{no p
  Assume that labeled data are obtained under the assumption in~\eqref{eq:joint-assumption},
  and the above assumptions are satisfied.
  Then, 
  \begin{align}
      \label{eq:strategy2}
      k &= \argmax_{y\in \mathcal{Y}} {n_y}
  \end{align}
  gives the lowest upper bound of the estimation error as $n_{\mathrm{L}} \rightarrow \infty$.
\end{proposition}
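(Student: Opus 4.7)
The plan is to read off the $k$-dependent terms of the bound in Theorem~\ref{thm:estimation-error-bound}, upper-bound them using the model assumption on the Rademacher complexity, and then substitute the empirical class-prior induced by the joint-distribution assumption~\eqref{eq:joint-assumption} to reduce the problem to a one-line combinatorial minimisation over $k \in \calY$.

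Since $\mathfrak{R}(\calG; n_\u)$ and $1/\sqrt{n_\u}$ are independent of $k$, only the two sums $\sum_{y \in \calYrmk} \mathfrak{R}(\calG; n_y)$ and $\sum_{y \in \calYrmk} \pi_y/\sqrt{n_y}$ matter. Applying the assumption $\mathfrak{R}(\calG; n_y) \le C_\calG/\sqrt{n_y}$ and substituting the empirical prior $\hat{\pi}_y = n_y/n_\l$ implied by~\eqref{eq:joint-assumption}, the Hoeffding-type summand $\pi_y/\sqrt{n_y}$ becomes $\sqrt{n_y}/n_\l$. Using the identity $\sum_{y \in \calYrmk} h(n_y) = \sum_{y \in \calY} h(n_y) - h(n_k)$, the $k$-dependent contribution collapses to $-h(n_k)$ for a function of the form $h(n) = A/\sqrt{n} + B\sqrt{n}$ with positive $A, B$, so the bound is minimised by choosing $k$ to maximise $h(n_k)$.

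The main obstacle is that $A/\sqrt{n_k}$ and $B\sqrt{n_k}$ pull in opposite directions: maximising the former favours the \emph{smallest} class, while maximising the latter favours the \emph{largest}. To break this tension I would sharpen the Rademacher step by noting that in $\empRluk(g)$ the class-$y$ empirical mean is multiplied by $\pi_y$, so the standard contraction argument produces a Rademacher contribution already carrying a $\pi_y$ factor; after substituting $\hat{\pi}_y = n_y/n_\l$ this summand also reduces to a constant multiple of $\sqrt{n_y}/n_\l$, and both $k$-dependent pieces become proportional to $\sum_{y \in \calYrmk} \sqrt{n_y}$.

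At this point the minimisation over $k$ is immediate: writing the remaining $k$-dependent quantity as $\sum_{y \in \calY} \sqrt{n_y} - \sqrt{n_k}$, the unique minimiser is $k = \argmax_{y \in \calY} n_y$, which is precisely the strategy in~\eqref{eq:strategy2}. No refinement of concentration inequalities is needed beyond this accounting.
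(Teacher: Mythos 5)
Your argument is correct and follows the route the paper intends: under the sampling scheme~\eqref{eq:joint-assumption} one identifies $\pi_y$ with $n_y/\nL$, so each $k$-dependent summand $\pi_y/\sqrt{n_y}$ becomes $\sqrt{n_y}/\nL$, and minimizing $\sum_{y\in\calYrmk}\sqrt{n_y}$ over $k$ amounts to deleting the largest class. The paper gives no formal proof of this proposition---it simply invokes the rate $\mathcal{O}_p\bigl(\sum_{y\in\calYrmk}\pi_y/\sqrt{n_y}+1/\sqrt{\nU}\bigr)$ and silently ignores the Rademacher term---so your observation that the bound of Theorem~\ref{thm:estimation-error-bound}, taken literally, does \emph{not} yield the claim is a genuine and worthwhile point: under $\mathfrak{R}(\calG;n)\le C_\calG/\sqrt{n}$ the term $\sum_{y\in\calYrmk}\mathfrak{R}(\calG;n_y)$ is smallest when the \emph{smallest} class is removed, the opposite prescription. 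Your repair is sound: the class-$y$ block of $\empRluk$ is a $\pi_y$-weighted empirical mean, so symmetrization produces a factor $\pi_y$ in front of the per-class Rademacher complexity before contraction, and the paper's Lemma~\ref{lem:labeled-except-k-bound} merely discards this factor via $\pi_y\le 1$; retaining it makes both $k$-dependent pieces proportional to $\sum_{y\in\calYrmk}\sqrt{n_y}/\nL$, after which the minimization is immediate. The only caveats are cosmetic: the minimizer is unique only when the largest class is unique, and the statement should be read as comparing a particular (sharpened) upper bound rather than the estimation error itself.
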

It is interesting to see that the above two strategies give completely opposite solutions as the removed class.
We will experimentally compare these strategies in Section~\ref{sec:experiments}, % Section~\ref{sec:experiments}.
where we find that both strategies performed similarly although they are completely different, indicating that the choice of $k$ may not be critical to the performance.

\subsection{Order Constraints}\label{sec:thr-con}
In ordinal regression problems, the threshold parameters should be ordered, i.e., $\theta_1 \leq \theta_2 \leq \cdots \leq \theta_{K-1}$~\citep{pedregosa:consistency}.
Here, we introduce a simple trick to constrain the threshold parameters $\btheta$ by adding the term
\begin{align}
\label{eq:order-constraints}
  \Omega(\btheta) \coloneqq 
  \mu \sum_{i=1}^{K-2} \max \left\{0,\ -\log (\theta_{i+1} - \theta_i) \right\},
\end{align}
where $\mu \geq 0$ is a regularization parameter for the order constraints.
We show that this simple trick works well in the experiments.
In fact, even without any regularization term on $\btheta$,
we empirically observed that the values of threshold constraints $\btheta$ are usually ordered.
This observation suggests that the order constraints are not difficult to satisfy when optimizing $\empRiskSEMI$.
This threshold regularization scheme is based on the idea of the log-barrier method~\citep{boyd2004convex},
in which we impose a high cost $-\log(-c(\btheta))$ to the objective function for an inequality constraint $c(\btheta) \le 0$ for $c:\Theta \rightarrow \R$.
However, only using $-\log (\theta_{i+1} - \theta_i)$ may lead to the following two problems. 
First, the algorithm may make $\theta_{i+1} - \theta_i$ large more than necessary while $\theta_{i+1} - \theta_i \ge 0$ is only required.
Second, the function $-\log(\theta_{i+1} - \theta_i)$ becomes negative if $\theta_{i+1} - \theta_i > 1$.
This may cause an overall risk to be arbitrarily negative by making $\theta_{i+1} - \theta_i$ larger, and cause instability in the optimization procedure when combining it with the empirical version of~\eqref{eq:ssl-or-surrogate-risk}.
As a result, we further introduce $\max\{0,\cdot\}$ in~\eqref{eq:order-constraints} to mitigate these problems.

Here, we investigate the objective function when the linear-in-parameter model $f(\x) = \bw^\top \bphi(\x)$ is employed as $f$.
Our next theorem states a sufficient condition to guarantee that, for a certain task surrogate loss function, the optimization problem is convex with respect to both the model and order constraints parameters.
\begin{theorem}
  \label{thm:convex-ssl-or}
  Let $C_\ell$ be a positive constant.
  If the AT or LS losses are adopted as the task surrogate loss,
  and the binary surrogate loss $\ell(z)$ is convex and satisfies
  \begin{align}
    \ell(z) - \ell(-z) = -C_\ell z
    \label{eq:margin-loss-cond}
    ,
  \end{align}
  then the objective function $\hat{J}_\ell(\bw, \btheta) \coloneqq \empRiskSEMI(g) + \Omega(\btheta)$ is convex with respect to $\bw$ and $\btheta$.
\end{theorem}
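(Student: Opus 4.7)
The plan is to split
$\hat{J}_\ell = (1-\gamma)\empSVrisk(g) + \gamma \empRU(g) + \gamma [\empRLone(g) - \empRLtwo(g)] + \Omega(\btheta)$
and verify convexity summand by summand; only the third piece is non-obvious because it subtracts a convex function, and I expect condition~\eqref{eq:margin-loss-cond} to make this subtraction collapse to an affine function of $(\bw,\btheta)$.

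First I would dispatch the easy pieces. Since $\alpha_i(\x)=\theta_i-\bw^\top\bphi(\x)$ is affine in $(\bw,\btheta)$ and $\ell$ is convex, both $\ell(\alpha_i(\x))$ and $\ell(-\alpha_i(\x))$ are convex in $(\bw,\btheta)$ as compositions of a convex function with an affine map, so $\psi_\AT(\balpha(\x),y)$ defined in~\eqref{eq:at-loss} is convex in $(\bw,\btheta)$ for every fixed $y$ and $\x$. Because $\gamma\in[0,1]$, the summands $(1-\gamma)\empSVrisk$ and $\gamma\empRU$ are convex as nonnegative combinations of such terms. For the regularizer, each $-\log(\theta_{i+1}-\theta_i)$ is convex in $\btheta$ (composition of convex $-\log$ with an affine argument), the sum over $i$ is convex, and $\mu\max\{0,\cdot\}$ preserves convexity, so $\Omega(\btheta)$ is convex.

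For the remaining piece I would directly compute $\psi_\AT(\balpha(\xyj),y)-\psi_\AT(\balpha(\xyj),k)$ for $y\in\calYrmk$ by comparing the two sums in~\eqref{eq:at-loss}. Indices outside the interval between $y$ and $k$ cancel; for $y<k$ what remains is $\sum_{i=y}^{k-1}[\ell(\alpha_i(\xyj))-\ell(-\alpha_i(\xyj))]$, which by~\eqref{eq:margin-loss-cond} equals $-C_\ell\sum_{i=y}^{k-1}\alpha_i(\xyj)$; the case $y>k$ is symmetric and yields $C_\ell\sum_{i=k}^{y-1}\alpha_i(\xyj)$. In either case the difference is affine in $(\bw,\btheta)$ since each $\alpha_i$ is. Hence $\gamma[\empRLone(g)-\empRLtwo(g)]$ is a nonnegative weighted sum of affine functions, and is itself affine; adding the three convex summands then yields convexity of $\hat{J}_\ell$. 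The main hurdle is exactly this telescoping step: without~\eqref{eq:margin-loss-cond} the $-\gamma\empRLtwo$ contribution cannot be absorbed into an affine term, and the rest of the bookkeeping is routine.
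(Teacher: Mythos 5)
Your proof is correct and takes essentially the same route as the paper's: both reduce the problem to showing that the labeled difference term $\sum_{y\in\calYrmk}\frac{\pi_y}{n_y}\sum_j\bigl(\psi_\AT(\balpha(\xyj),y)-\psi_\AT(\balpha(\xyj),k)\bigr)$ telescopes, via the linear-odd condition~\eqref{eq:margin-loss-cond}, to an affine function of $(\bw,\btheta)$, with the remaining terms convex by composition with affine maps. Your write-up is if anything slightly more explicit than the paper's about verifying convexity of $\empSVrisk$, $\empRU$, and $\Omega(\btheta)$, which the paper simply asserts.
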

The proof of Theorem~\ref{thm:convex-ssl-or} is given in Appendix~\ref{sec:proof-of-convex-ssl-or}.
The condition~\eqref{eq:margin-loss-cond} is known as the linear-odd condition~\citep{patrini2016loss}.
Examples of binary surrogate losses are shown in Figure~\ref{fig:binary-surr} and Table~\ref{tab:binary-surr}.
In our experiments, we used the logistic loss as the binary surrogate loss.
Note that the objective function with the IT loss is not always convex.

% ===============================================
\begin{figure}[t]
    \centering
    \includegraphics[width=0.67\columnwidth]{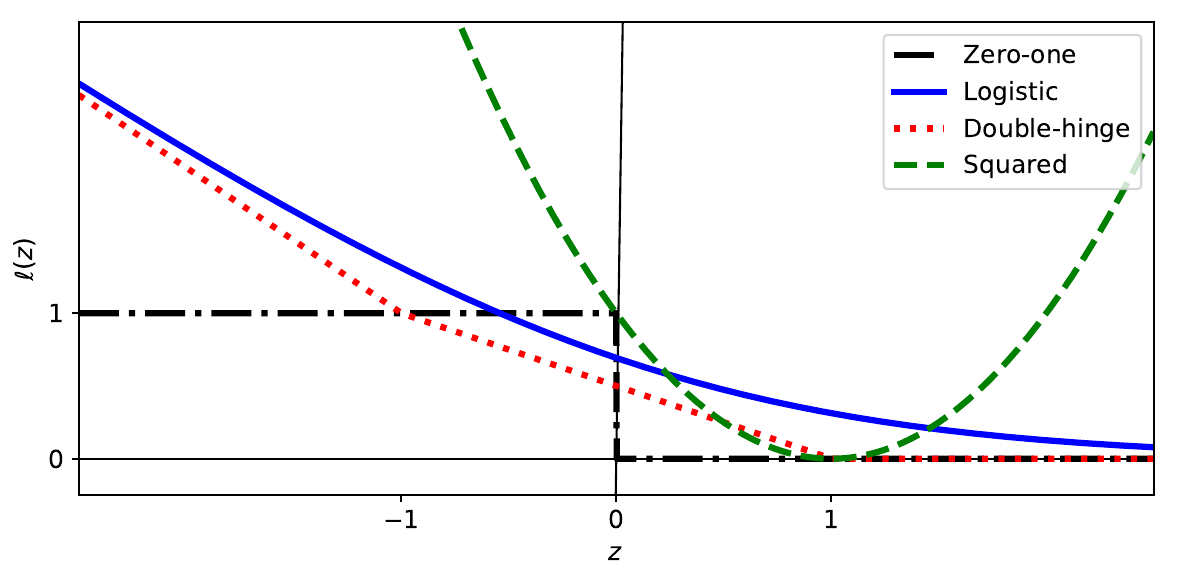}
    \caption{
        Examples of the binary surrogate losses satisfying~\eqref{eq:margin-loss-cond}.
        The zero-one function is depicted for reference.
    }
    \label{fig:binary-surr}

    \centering
    \captionof{table}{Examples of binary surrogate losses and their satisfiability of the sufficient condition in~\eqref{eq:margin-loss-cond}.}
    \label{tab:binary-surr}
    \begin{tabular}{CCC  C  C  C  C}
        \hline
        \text{Loss name} & \ell(z)  & \text{Eq.~\eqref{eq:margin-loss-cond}}\\ \hline
        \text{Hinge} & \max(0, 1-z)  & \text{No} \\
        \text{Exponential} & \exp(-z) & \text{No} \\
        \text{Logistic} & \log(1+e^{-z}) & \text{Yes} \\
        \text{Double-hinge} & \max(-z, \max(0, \frac{1}{2} - \frac{1}{2}z)) & \text{Yes} \\
        \text{Squared} & (1-z)^{2}  & \text{Yes} \\ \hline
    \end{tabular}

\end{figure}
% ===============================================

\subsection{Non-Negative Risk Estimator}
\label{subsec:nn-risk}
We discuss a risk modification technique called a \emph{non-negative risk estimator}.
This technique was originally proposed in~\citet{kiryoNNPU2017} for classification from positive and unlabeled data based on an unbiased risk estimator.
We can see that the sum of the second and third terms of our empirical risk estimator~\eqref{eq:ssl-or-surrogate-estimator}, $\empRU(g) - \empRLtwo(g)$, can be negative while the corresponding expected risk $\RU(g) - \RLtwo(g) = \pi_k \expect_{X|Y=k}\left[ \psi(\balpha(X), k) \right]$ is always non-negative.
This observation suggests that the model can excessively reduce the empirical risk by maximizing $\empRLtwo(g)$.
To prevent this problem, following~\citet{kiryoNNPU2017}, we modify the empirical risk estimator~\eqref{eq:ssl-or-surrogate-estimator} as
\begin{align}
  \empRLone(g)
  + \max \left\{0,\empRU(g) - \empRLtwo(g) \right\}
  \label{eq:nn-ssl-or-surrogate-estimator}
  .
\end{align}
This approach is later generalized in~\citet{Lu2020mitigating} in the context of unlabeled-unlabeled classification.
Specifically, they considered the LeakyReLU-like regularization function instead of the max operator used in~\eqref{eq:nn-ssl-or-surrogate-estimator}.
They empirically showed that this regularization technique makes trained models generalize better.
Following this observation, we proposed to modify our empirical risk estimator~\eqref{eq:ssl-or-surrogate-estimator} as
\begin{align}
  \empRLone(g)
  + \mathrm{GLReLU} \left(\empRU(g) - \empRLtwo(g) \right)
  \label{eq:generalized-lrelu-ssl-or-surrogate-estimator}
  ,
\end{align}
where the Generalized Leaky ReLU function $\mathrm{GLReLU}: \R \rightarrow \R$ is defined as 
$\mathrm{GLReLU}(x) \coloneqq x \ind{x\ge0} + \lambda x \ind{x<0}$ for $\lambda \le 0$.
We will employ this regularization technique in experiments.

% \subsection{Experiments}
\section{Experiments}
\label{sec:experiments}

In this section, we begin by numerically investigating that unlabeled data helps reduce the variance of the risk estimator as theoretically discussed in Section~\ref{sec:theory}.
Besides, we experimentally investigate which strategy is better to remove one class for $\empRluk(g)$ as discussed in Section~\ref{subsec:what-class-to-pick-out}.
Finally, we present the experimental comparison results of semi-supervised ordinal regression on benchmark datasets.

\noindent \textbf{Common Setup.}
As evaluation metrics, we adopted the~\emph{mean absolute error},~\emph{mean zero-one error}, and~\emph{mean squared error}.
Note that each metric coincides with the task surrogate risk, which adopts the AT, IT, and LS losses as the task surrogate loss, respectively.
We did experiments on the AT, IT, and LS losses as the task surrogate loss and the logistic loss as the binary surrogate loss.
For validation of the hyperparameters,
we used the hold-out method by splitting the training data set with the ratio of $2:1$.
For both models, 
we fixed the hyper-parameters $\gamma$ and $\mu$ to $0.5$ and $10$, respectively.
Note that we empirically observed that if $\mu$ is large enough, $\mu$ is insensitive to the performance.
We ran the experiments $20$ times to calculate the mean and standard error of the performances.
Also, we used the non-negative risk estimator described in Section~\ref{subsec:nn-risk} with $\lambda = -0.2$ to prevent over-fitting.
We used Chainer~\citep{tokui2015chainer} to implement our models.
% \noindent\textbf{Benchmark Datasets.}
We obtained datasets from % The dataset \green{} obtained % from the work from~\citet{chu2005GP}
a survey paper on ordinal regression~\citep{gutierrez2016ordinal},
and the website on ordinal regression benchmark data\footnote{\url{https://www.gagolewski.com/resources/data/ordinal-regression/}}.
The detail of the dataset description is given in the following.
% \edit{The dataset statistics are shown in Table~\ref{tab:datasets-stat}.}
% \edit{The column ``Imbalance`` is Yes if the maximum prior is larger than $0.4$ else No.}

\noindent \textbf{Baselines.}
We compared our proposed methods (SEMI) against the following baselines.

\begin{enumerate}
    \item 
    \noindent \textbf{Supervised Ordinal Regression (SV):} 
    Supervised ordinal regression here is based on empirical risk minimization of the task surrogate risk, which is described in~\citet{pedregosa:consistency} and Section~\ref{sec:preliminaries}.
    Recall that the different task surrogate losses are used for each objective function.
    In other words, the AT, IT, and LS losses are used when the mean absolute error, mean zero-one error, and mean squared error are used as evaluation metrics, respectively.
    \item
    \noindent \textbf{Transductive Ordinal Regression (TOR):} 
    Transductive Ordinal Regression~\citep{seah2012transductive}
    is a method based on pseudo-labeling, and TOR assumes that the data satisfies the cluster assumption.
    TOR tries to minimize objective function by repeatedly minimizing the sum of risks on labeled data and pseudo-labeled data.
    Note that the classifier obtained by the training can be used as an inductive classifier.
    We noticed that the TOR is essentially a pseudo-labeling method based on supervised empirical risk minimization with AT loss,
    while it is not mentioned in the paper, implying that we can extend the use IT and LS loss instead of AT loss.
    Thus, we used IT and LS loss in our experiments when adopting mean zero-one error and mean squared error, respectively.
    \item 
    \noindent \textbf{Semi-Supervised Manifold Ordinal Regression (SSMOR):} 
    Semi-Supervised Manifold Ordinal Regression~\citep{liu2011sslor_manifold} is a method based on the manifold assumption.
    SSMOR first minimizes the objective function, 
    which is the sum of smoothness constraint (nearby points should have the same label) and fitting constraint (fitting on the given labels).
    After optimizing the objective function, 
    SSMOR tries to obtain labels for new data points by calculating thresholds in a heuristic manner.
    However, we found that the threshold calculated via this approach can be not ordered, and its performance is poor.
    Thus, we first sorted the data points based on the prediction score and then allocated labels with the class prior.
\end{enumerate}

\subsection{Variance Reduction and Performance Comparison on Small Datasets}
Here, we empirically investigate the effect of the variance reduction discussed in Section~\ref{sec:theory}, and compare the empirical performance of proposed method on small datasets.
The class size was fixed to $K=3$ by merging some classes into one class for each dataset.
We sub-sampled labeled data with $\nL = 20$.
while the number of unlabeled data depends on the dataset size. Full dataset statistics is given in Appendix~\ref{subsec:dataset-stats}.
We used a linear-in-input model $f(\x)=\bw^\top \x$.
For training our proposed method, we trained the model for $1000$ epochs using Adam with $\alpha=5.0\times 10^{-2}$ (full batch size).
The candidates of the weight decay parameter were $\{10^{-6}, 10^{-4}, 10^{-2}\}$.

\begin{figure*}[t]
    \begin{center}
    \setlength{\subfigwidth}{.32\linewidth}
    \addtolength{\subfigwidth}{-.32\subfigcolsep}
    \begin{minipage}[t]{\subfigwidth}
        \centering
         \subfigure[{\tt bank1-5}, AT]{\includegraphics[scale=0.40]{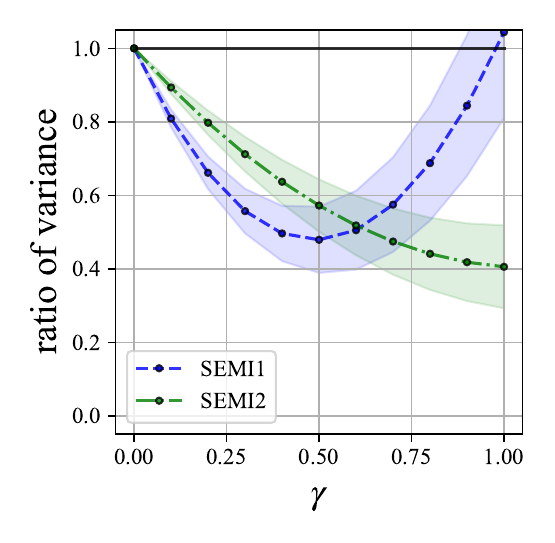}}
    \end{minipage}\hfill
    \begin{minipage}[t]{\subfigwidth}
        \centering
         \subfigure[{\tt bank1-5}, IT]{\includegraphics[scale=0.40]{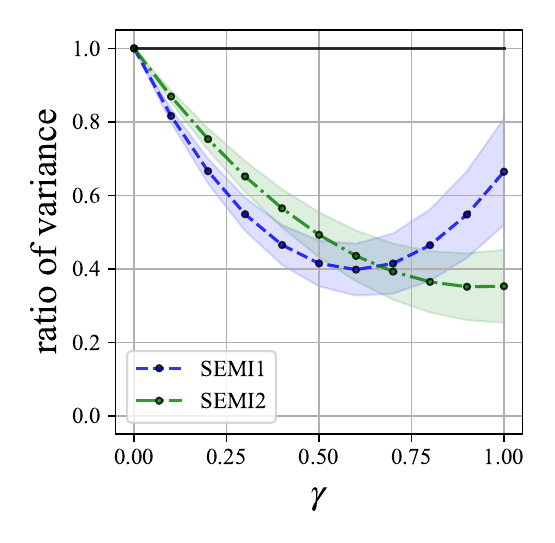}}
    \end{minipage}\hfill
    \begin{minipage}[t]{\subfigwidth}
        \centering
         \subfigure[{\tt bank1-5}, LS]{\includegraphics[scale=0.40]{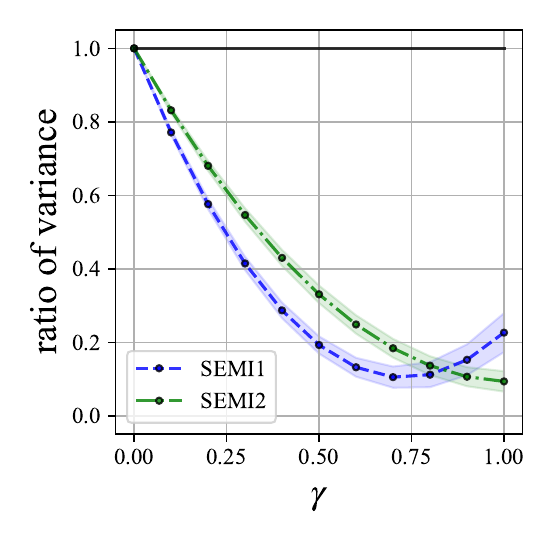}}
    \end{minipage}\hfill
    \begin{minipage}[t]{\subfigwidth}
        \centering
         \subfigure[{\tt census2-5}, AT]{\includegraphics[scale=0.40]{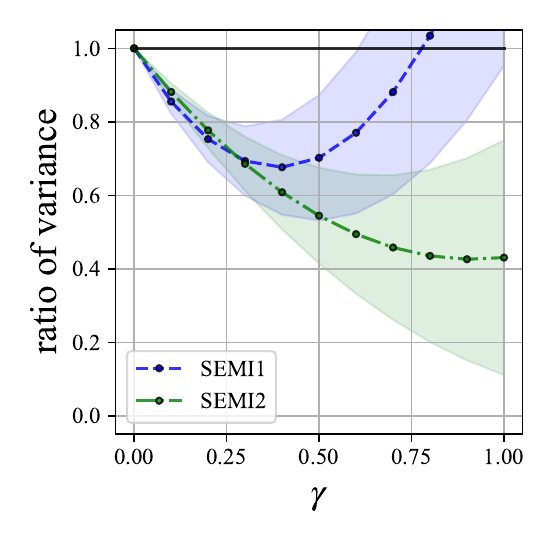}}
    \end{minipage}\hfill
    \begin{minipage}[t]{\subfigwidth}
        \centering
         \subfigure[{\tt census2-5}, IT]{\includegraphics[scale=0.40]{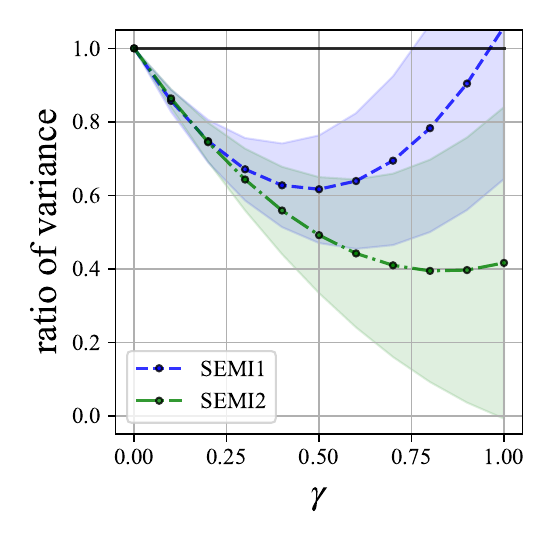}}
    \end{minipage}\hfill
    \begin{minipage}[t]{\subfigwidth}
        \centering
         \subfigure[{\tt census2-5}, LS]{\includegraphics[scale=0.40]{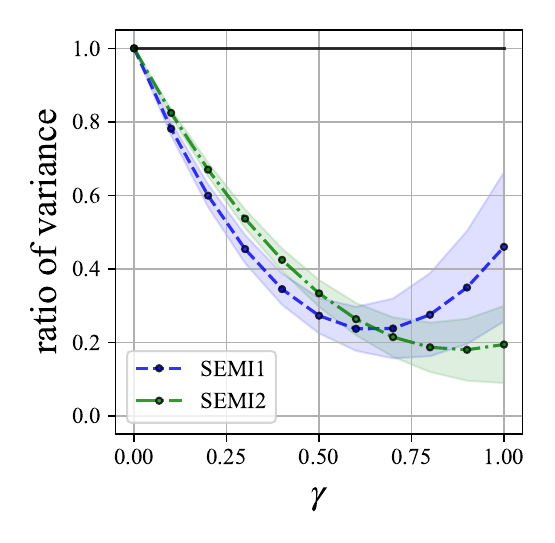}}
    \end{minipage}\hfill
    \end{center}
    \caption{The ratio between the variance of the empirical semi-supervised risk to the supervised risk when we adopted AT, IT, and LS as the task surrogate loss and used linear-in-input model.
    The ratio below 1.0 implies that the variance of the semi-supervised risk is lower than that of the supervised risk.
    }
    \label{fig:variance_reduction_selected}
\end{figure*}

\noindent \textbf{Variance Reduction.}
Here, before going to the performance comparison,
we investigate the effect of variance reduction by using unlabeled data when estimating the training risk.
We adopted the randomly initialized ordinal regressor $g_\rand$, 
which is the linear-in-input model, 
to compute the variance of the empirical supervised risk $\Var[\empSVrisk(g_\rand)]$ and the empirical semi-supervised risk $\Var[\empRiskSEMI(g_\rand)]$.
We selected the removed class $k$ using the strategies described in Section~\ref{subsec:what-class-to-pick-out}.
We denote SEMI1 as our proposed semi-supervised methods, where the strategy of removing a class is based on the finite sample approximation~\eqref{eq:strategy1}, while SEMI2 denotes the strategy based on the estimation error bound~\eqref{eq:strategy2}.
Then, we calculated the ratio between the variance of the empirical semi-supervised risk and supervised risk
\[\Var[\empRiskSEMI(g_\rand)] / \Var[\empSVrisk(g_\rand)]. \]

Figure~\ref{fig:variance_reduction_selected}
shows the ratio between the variance of the empirical semi-supervised risk and supervised risk,
which is evaluated on different $\gamma \in \{ 0.0, 0.1, 0.2, \ldots, 1.0 \}$.
Additional results of experiments are given in Appendix~\ref{subsec:additional_experiments}.
It can be observed that the ratios were less than $1$ in most cases.
This indicates that the variance of our empirical semi-supervised risk is smaller than that of the empirical supervised risk, as suggested by Theorem~\ref{thm:variance-reduction}.

\noindent \textbf{Performance Comparison.}
Tables~\ref{tab:real-result-linear} shows the performances of the proposed framework against the supervised method.
We can see that there is no clear difference between the simple strategy (SEMI1) and the strategy based on the estimation error bound (SEMI2).
As for the performance comparison, 
We can see that the proposed methods (SEMI1, SEMI2) perform better than the standard supervised learning (SV).
It is worth noting that our framework performs well in all task losses, indicating the broad applicability of our proposed framework.

\begin{table}[t]
\centering
\caption{
The mean and standard error of the mean absolute error, mean zero-one error, and mean squared error on small benchmark datasets. 
The experiments were conducted 20 times.
Outperforming methods in each evaluation metric an dataset are highlighted in boldface using the Wilcoxon signed-rank test with a significance level of 5\%.
}
\label{tab:real-result-linear}
\scalebox{0.65}[0.65]{  % for arXiv
% \scalebox{0.55}[0.55]{  % for neco
\begin{tabular}{l|ccc|ccc|ccc}
\toprule
{}  & \multicolumn{3}{|c|}{Mean Absolute Error} & \multicolumn{3}{c|}{Mean Zero-one Error} & \multicolumn{3}{c}{Mean Squared Error} \\
\midrule
{} &     SV  &    SEMI1 &    SEMI2 &     SV  &    SEMI1 &    SEMI2  &     SV  &    SEMI1 &    SEMI2  \\
\midrule
{\tt abalone }         &       0.381\;(0.02) &  {\bf 0.370\;(0.04)} &  {\bf 0.356\;(0.03)} &       0.372\;(0.02) &  {\bf 0.368\;(0.03)} &  {\bf 0.353\;(0.03)}  &  0.447\;(0.06) &        0.392\;(0.05) &  {\bf 0.365\;(0.03)}   \\
{\tt bank1-5 }         &       0.312\;(0.06) &  {\bf 0.280\;(0.06)} &  {\bf 0.285\;(0.05)} &       0.309\;(0.06) &  {\bf 0.277\;(0.06)} &  {\bf 0.283\;(0.05)}  &  0.353\;(0.09) &  {\bf 0.281\;(0.05)} &  {\bf 0.290\;(0.06)}   \\
{\tt bank2-5 }         & {\bf 0.518\;(0.04)} &  {\bf 0.518\;(0.04)} &        0.616\;(0.04) &       0.484\;(0.03) &  {\bf 0.475\;(0.03)} &        0.552\;(0.03)  &  1.110\;(0.08) &  {\bf 0.701\;(0.07)} &  {\bf 0.710\;(0.09)}   \\
{\tt census1-5 }       &       0.453\;(0.07) &  {\bf 0.414\;(0.04)} &  {\bf 0.416\;(0.04)} &       0.425\;(0.06) &  {\bf 0.400\;(0.03)} &  {\bf 0.397\;(0.03)}  &  0.544\;(0.07) &        0.468\;(0.04) &  {\bf 0.431\;(0.03)}   \\
{\tt census2-5 }       &       0.534\;(0.07) &  {\bf 0.496\;(0.06)} &  {\bf 0.500\;(0.05)} &       0.488\;(0.06) &  {\bf 0.453\;(0.05)} &  {\bf 0.457\;(0.04)}  &  0.755\;(0.11) &        0.627\;(0.10) &  {\bf 0.530\;(0.06)}   \\
{\tt computer1-5 }     &       0.308\;(0.05) &  {\bf 0.295\;(0.05)} &  {\bf 0.302\;(0.04)} & {\bf 0.299\;(0.05)} &  {\bf 0.291\;(0.04)} &  {\bf 0.300\;(0.04)}  &  0.461\;(0.10) &        0.403\;(0.06) &  {\bf 0.329\;(0.06)}   \\
{\tt computer2-5 }     &       0.312\;(0.04) &  {\bf 0.286\;(0.03)} &        0.317\;(0.04) &       0.306\;(0.04) &  {\bf 0.283\;(0.03)} &        0.310\;(0.04)  &  0.511\;(0.08) &        0.432\;(0.06) &  {\bf 0.356\;(0.05)}   \\
{\tt fireman } & {\bf 0.556\;(0.08)} &  {\bf 0.556\;(0.08)} &  {\bf 0.554\;(0.07)} & {\bf 0.471\;(0.06)} &  {\bf 0.464\;(0.06)} &  {\bf 0.461\;(0.04)}  &  0.637\;(0.12) &        0.626\;(0.11) &  {\bf 0.591\;(0.08)}   \\
{\tt kinematics }      &       0.635\;(0.06) &  {\bf 0.606\;(0.05)} &  {\bf 0.608\;(0.06)} & {\bf 0.536\;(0.03)} &  {\bf 0.526\;(0.04)} &  {\bf 0.528\;(0.03)}  &  0.779\;(0.10) &  {\bf 0.702\;(0.07)} &  {\bf 0.743\;(0.12)}   \\
{\tt lev }             &       0.339\;(0.07) &  {\bf 0.308\;(0.04)} &  {\bf 0.301\;(0.04)} &       0.335\;(0.06) &  {\bf 0.308\;(0.03)} &  {\bf 0.299\;(0.04)}  &  0.354\;(0.07) &  {\bf 0.345\;(0.13)} &  {\bf 0.304\;(0.05)}   \\
{\tt swd }             &       0.701\;(0.05) &        0.700\;(0.12) &  {\bf 0.620\;(0.05)} &       0.579\;(0.03) &        0.566\;(0.03) &  {\bf 0.537\;(0.04)}  &  0.873\;(0.12) &        0.765\;(0.10) &  {\bf 0.683\;(0.08)}   \\
{\tt toy }             &       0.266\;(0.07) &  {\bf 0.215\;(0.05)} &  {\bf 0.226\;(0.05)} &       0.235\;(0.05) &  {\bf 0.212\;(0.04)} &  {\bf 0.215\;(0.04)}  &  0.361\;(0.10) &  {\bf 0.250\;(0.08)} &        0.293\;(0.08)   \\
\bottomrule
\end{tabular}
} % end of scale box
\end{table}

\noindent \textbf{The Effect of the Number of Unlabeled Data.}
Here, we investigate the effect of the number of unlabeled data on the performance improvement against the supervised empirical risk minimization approach. 
We first trained the ordinal regressors with the linear-in-input model for a different number of unlabeled data.
Then, we calculated the ratio between the error of the semi-supervised method to the supervised method.
Note again that the different evaluation metrics are used for the different task surrogate losses.

Figure~\ref{fig:nu_change_selected} shows the ratio between the error of the empirical semi-supervised risk and supervised risk, which is evaluated on the number of unlabeled data.
Additional results of experiments are given in Appendix~\ref{subsec:additional_experiments}.
It can be observed that SEMI1 and SEMI2 outperform the supervised method. 
However, it is also observed that the performance gain is saturated at a certain point depending on the dataset, i.e., increasing more unlabeled data may no longer improve the performance of both SEMI1 and SEMI2. 
Note that this is also a common phenomenon observed in the literature of semi-supervised learning~\citep{sakai-pnu, Oliver2018realistic, Sakai2018AUC}.  % [sakaiPNU, sakaiPU-AUC, [1], [2]]. 
Nevertheless, it is a challenging future work to consider a way to improve the way to utilize unlabeled data, for example, by developing a new regularization technique for the unbiased risk estimator approach in the context of semi-supervised learning.

% TODO: includegraphics scale 0.40 -> 0.49, and 0.32 -> 0.37 for arXiv

\begin{figure*}[t]
    \begin{center}
    \setlength{\subfigwidth}{.30\linewidth}
    \addtolength{\subfigwidth}{-.30\subfigcolsep}
    \begin{minipage}[t]{\subfigwidth}
        \centering
         \subfigure[{\tt bank1-5}, AT]{\includegraphics[scale=0.32]{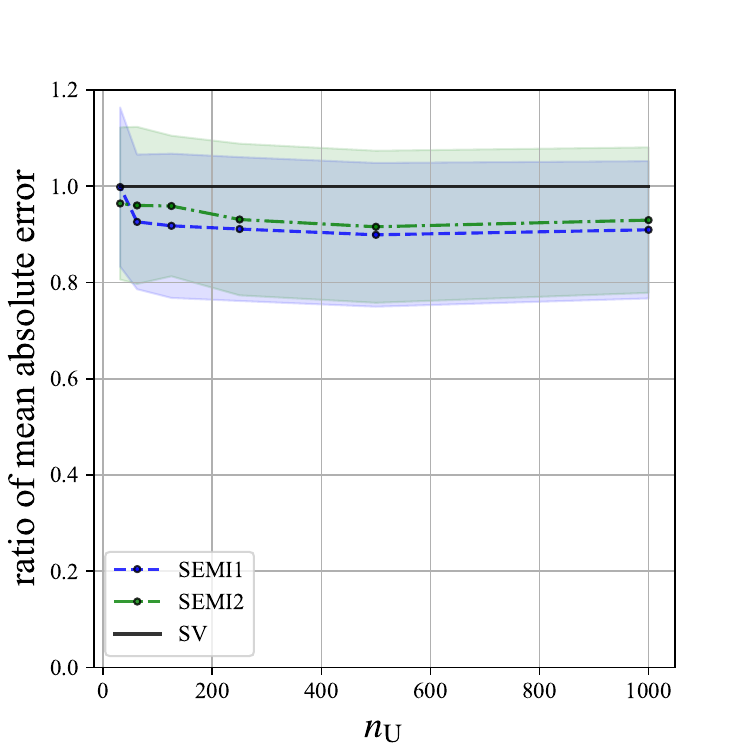}}
    \end{minipage}\hfill
    \begin{minipage}[t]{\subfigwidth}
        \centering
         \subfigure[{\tt bank1-5}, IT]{\includegraphics[scale=0.32]{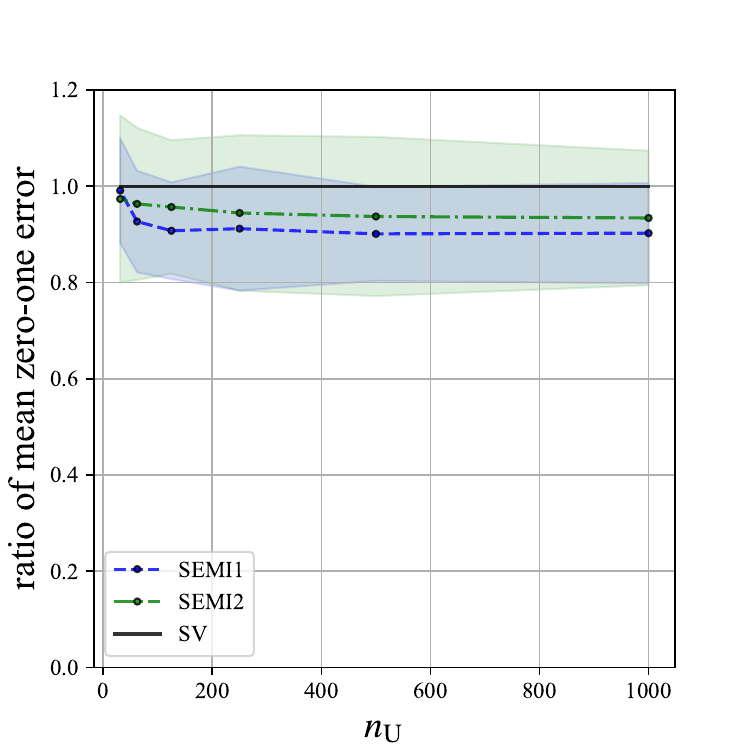}}
    \end{minipage}\hfill
    \begin{minipage}[t]{\subfigwidth}
        \centering
         \subfigure[{\tt bank1-5}, LS]{\includegraphics[scale=0.32]{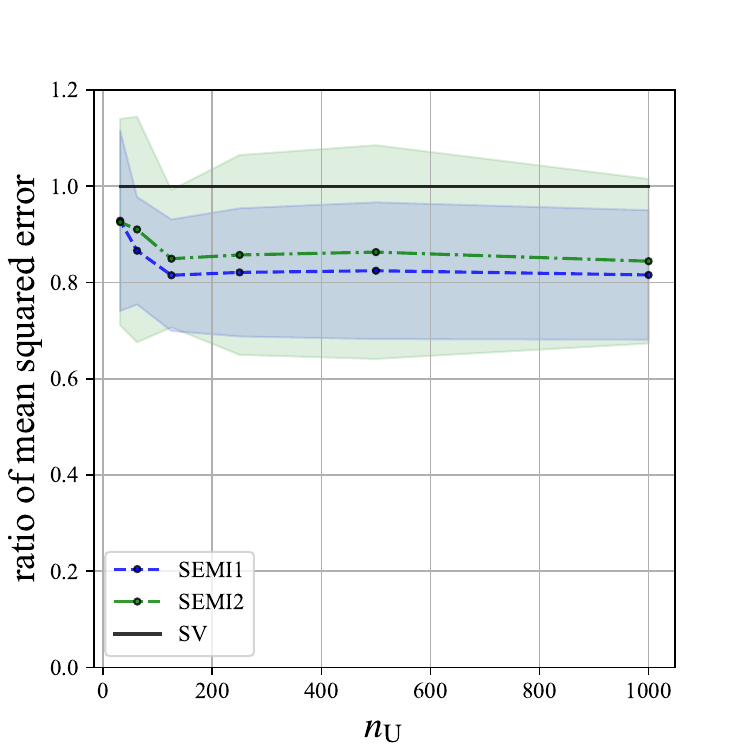}}
    \end{minipage}\hfill
    \vspace{-7pt}
    \begin{minipage}[t]{\subfigwidth}
        \centering
         \subfigure[{\tt census2-5}, AT]{\includegraphics[scale=0.32]{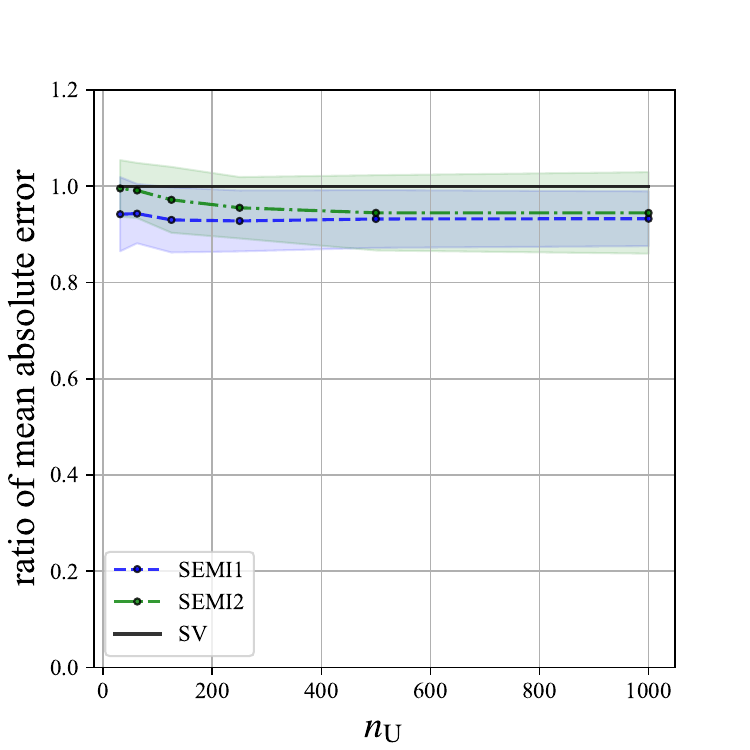}}
    \end{minipage}\hfill
    \begin{minipage}[t]{\subfigwidth}
        \centering
         \subfigure[{\tt census2-5}, IT]{\includegraphics[scale=0.32]{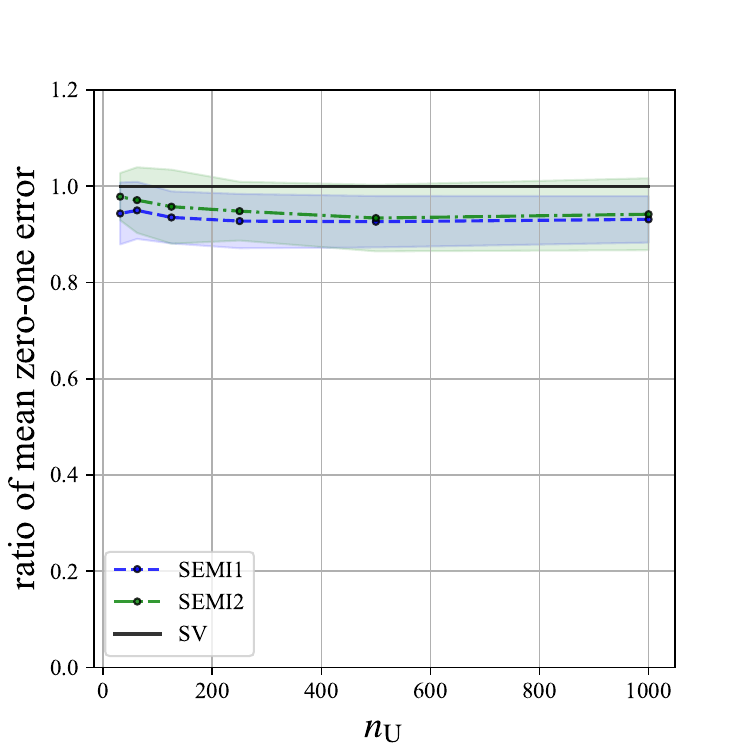}}
    \end{minipage}\hfill
    \begin{minipage}[t]{\subfigwidth}
        \centering
         \subfigure[{\tt census2-5}, LS]{\includegraphics[scale=0.32]{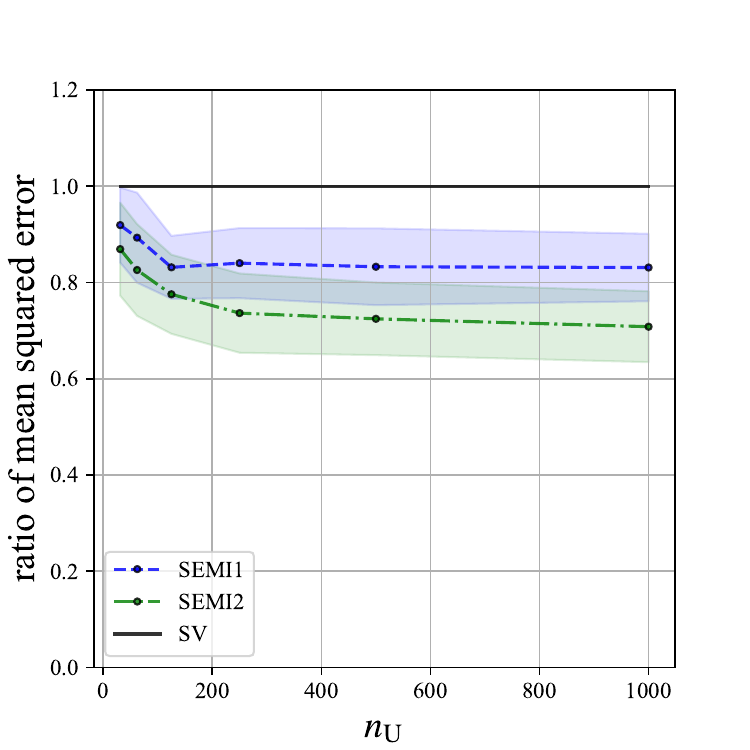}}
    \end{minipage}\hfill
    \end{center}
    \caption{The ratio between the error of the empirical semi-supervised risk to the supervised risk when we adopted AT, IT, and LS as the task surrogate loss and used a linear-in-input model.
    The ratio below 1.0 implies that the error of the semi-supervised method is better than that of the supervised risk.
    }
    \label{fig:nu_change_selected}
\end{figure*}

\subsection{Performance Comparison on Large Datasets}
Here, we compare the empirical performance of proposed framework on large datasets.
We sub-sampled labeled data with $\nL = 500$, and $\nU = 2000$, and the number of test size was fixed to $2000$. The statistics of datasets used in the experiments is given in Appendix~\ref{sec:additional_experiments}.
We used linear-in-parameter models with Gaussian kernel 
$f(\x)=\sum_{i=1}^{\nL} w_{i} \exp (-\left\|\x-\x_{i}\right\|^{2} /\left(2 \sigma^{2}\right))$,
and neural networks with one hidden layer (of size $256$) and ReLU as an activation function.
For the Gaussian kernel, the bandwidth candidates were $\{0.5, 1.0, 2.0 \} \cdot \textrm{median} (\left\|\x_{i}-\x_{j}\right\|_{i, j=1}^{\nL} )$ where $\{\x_i\}_{i=1}^{\nL} = \mathcal{X}_\mathrm{L}$ .
For training our proposed method, we trained the model for $1000$ epochs using Adam with $\alpha=5.0\times 10^{-3}$ (full batch size).
The weight decay parameter was fixed to $10^{-4}$ for both models.

\noindent \textbf{Performance Comparison.}
Tables~\ref{tab:real-result-AT-large} to~\ref{tab:real-result-LS-large} 
show the performances of the proposed methods against the baseline methods.
We can see that the proposed methods (SEMI1-NN and SEMI2-NN) perform best than the baseline methods on most benchmark datasets.
Specifically, we can observe that the method based on the manifold assumption (SSMOR) performs poorly
while the performance method based on the cluster assumption (TOR) is somewhat comparable with the proposed methods.
This is because the datasets used in the experiments do not satisfy the manifold assumption as the SSMOR is original proposed in the context of image ranking,
while TOR does not heavily rely on the geometric assumption.
As well as the case for the small datasets,
our framework performs well in all task losses, indicating the broad applicability of our proposed framework.
However, unlike neural networks, when a linear-in-parameter model with Gaussian kernel is used (SEMI1-Ker and SEMI2-Ker), we found that our method failed to improve the performance over the supervised method.
This could be due to the risk modification in~\eqref{eq:generalized-lrelu-ssl-or-surrogate-estimator} might not be suitable for a kernel model since it was originally designed for using with neural networks~\citep{Lu2020mitigating}. 
We hypothesize that the poor performance of the kernel method when using~\eqref{eq:generalized-lrelu-ssl-or-surrogate-estimator} can be attributed to the fact that the modification makes the risk formulation to be no longer convex, which causes difficulty in optimization.
Note that the supervised baseline does not use the risk modification in~\eqref{eq:generalized-lrelu-ssl-or-surrogate-estimator}. 
Hence, the supervised baseline has a convex formulation and therefore is easier to train.
We note that in the case of neural networks, the optimization problem is non-convex regardless of whether we use a risk modification or not. 
Thus, the modification does not affect the convexity of the problem
To improve the performance of the kernel model with our proposed method, it is an important future work to explore a risk modification that can retain the convexity of the formulation.

% TODO: includegraphics scale 0.40 -> 0.49, and 0.32 -> 0.37 for arXiv

\begin{table}[t]
% \begin{table}[H]
\centering
\caption{
The mean and standard error of the mean absolute error on benchmark datasets. 
Outperforming methods in each dataset are highlighted in boldface using the Wilcoxon signed-rank test with a significance level of 5\%.
The experiments were conducted 20 times.
}
\label{tab:real-result-AT-large}
\scalebox{0.65}[0.65]{  % for arXiv
\begin{tabular}{lcccccccc}
\toprule
{} & SSMOR &  TOR &   SV-Ker & SEMI1-Ker & SEMI2-Ker &   SV-NN & SEMI1-NN & SEMI2-NN \\
\midrule
{\tt bank1-5 }         &                 0.935\;(0.16) &  {\bf 0.250\;(0.01)} &  {\bf 0.249\;(0.01)} &         0.294\;(0.05) &         0.339\;(0.06) &        0.267\;(0.01) &         0.251\;(0.01) &   {\bf 0.248\;(0.01)} \\
{\tt bank2-5 }         &                 1.184\;(0.07) &        0.907\;(0.03) &        0.883\;(0.02) &         0.887\;(0.03) &         0.915\;(0.03) &        0.834\;(0.02) &         0.839\;(0.02) &   {\bf 0.828\;(0.02)} \\
{\tt census1-5 }       &                 1.190\;(0.11) &        0.803\;(0.03) &        0.785\;(0.03) &         0.792\;(0.02) &         0.803\;(0.02) &        0.785\;(0.03) &   {\bf 0.686\;(0.02)} &         0.697\;(0.02) \\
{\tt census2-5 }       &                 1.206\;(0.14) &        0.784\;(0.02) &        0.768\;(0.02) &         0.779\;(0.02) &         0.798\;(0.03) &        0.745\;(0.02) &   {\bf 0.693\;(0.02)} &   {\bf 0.689\;(0.02)} \\
{\tt computer1-5 }     &                 0.671\;(0.05) &        0.444\;(0.02) &        0.441\;(0.02) &         0.445\;(0.02) &         0.459\;(0.02) &        0.474\;(0.03) &   {\bf 0.413\;(0.02)} &   {\bf 0.409\;(0.02)} \\
{\tt computer2-5 }     &                 0.668\;(0.07) &        0.379\;(0.01) &        0.376\;(0.01) &         0.387\;(0.01) &         0.397\;(0.02) &        0.457\;(0.01) &   {\bf 0.367\;(0.01)} &   {\bf 0.370\;(0.02)} \\
{\tt fireman } &                 2.999\;(0.44) &        3.690\;(0.07) &        3.649\;(0.06) &         3.732\;(0.13) &         3.798\;(0.18) &  {\bf 1.298\;(0.05)} &         1.602\;(0.08) &         1.593\;(0.09) \\
{\tt kinematics }      &                 1.797\;(0.13) &        1.442\;(0.02) &        1.454\;(0.02) &         1.474\;(0.02) &         1.474\;(0.02) &  {\bf 1.155\;(0.03)} &         1.175\;(0.04) &         1.175\;(0.04) \\
\bottomrule
\end{tabular}
} % end of scale box
\end{table}

\bigskip

\begin{table}[H]
% \begin{table}[t]
\centering
\caption{
The mean and standard error of the zero-one error on benchmark datasets. 
Outperforming methods in each dataset are highlighted in boldface using the Wilcoxon signed-rank test with a significance level of 5\%.
The experiments were conducted 20 times.
}
\label{tab:real-result-IT-large}
\scalebox{0.65}[0.65]{  % for arXiv
% \scalebox{0.60}[0.60]{   % for neco
\begin{tabular}{lcccccccc}
\toprule
{} & SSMOR &  TOR &   SV-Ker & SEMI1-Ker & SEMI2-Ker &   SV-NN & SEMI1-NN & SEMI2-NN \\
\midrule
{\tt bank1-5 }         &                 0.608\;(0.04) &  {\bf 0.238\;(0.01)} &  {\bf 0.238\;(0.01)} &         0.244\;(0.01) &         0.246\;(0.01) &        0.273\;(0.01) &         0.255\;(0.01) &         0.259\;(0.01) \\
{\tt bank2-5 }         &                 0.706\;(0.02) &        0.646\;(0.01) &        0.651\;(0.02) &         0.641\;(0.01) &         0.647\;(0.01) &  {\bf 0.602\;(0.01)} &   {\bf 0.601\;(0.01)} &         0.605\;(0.01) \\
{\tt census1-5 }       &                 0.697\;(0.02) &        0.569\;(0.01) &        0.579\;(0.01) &         0.584\;(0.02) &         0.585\;(0.01) &        0.574\;(0.01) &   {\bf 0.556\;(0.01)} &   {\bf 0.560\;(0.01)} \\
{\tt census2-5 }       &                 0.689\;(0.03) &        0.568\;(0.01) &        0.572\;(0.01) &         0.579\;(0.01) &         0.576\;(0.01) &        0.549\;(0.01) &         0.541\;(0.01) &   {\bf 0.538\;(0.01)} \\
{\tt computer1-5 }     &                 0.544\;(0.04) &        0.386\;(0.01) &        0.386\;(0.01) &         0.386\;(0.01) &         0.386\;(0.01) &        0.416\;(0.02) &   {\bf 0.376\;(0.02)} &         0.383\;(0.01) \\
{\tt computer2-5 }     &                 0.537\;(0.04) &  {\bf 0.339\;(0.01)} &  {\bf 0.339\;(0.01)} &   {\bf 0.341\;(0.01)} &         0.344\;(0.02) &        0.398\;(0.01) &   {\bf 0.343\;(0.01)} &         0.357\;(0.02) \\
{\tt fireman } &                 0.863\;(0.02) &        0.927\;(0.01) &        0.926\;(0.00) &         0.927\;(0.01) &         0.926\;(0.01) &  {\bf 0.682\;(0.01)} &         0.718\;(0.01) &         0.718\;(0.02) \\
{\tt kinematics }      &                 0.785\;(0.01) &        0.768\;(0.01) &        0.779\;(0.01) &         0.781\;(0.01) &         0.781\;(0.01) &  {\bf 0.705\;(0.01)} &         0.711\;(0.01) &         0.711\;(0.01) \\
\bottomrule
\end{tabular}
} % end of scale box
\end{table}

\bigskip

\begin{table}[H]
% \begin{table}[t]
\centering
\caption{
The mean and the standard error of the mean squared error on benchmark datasets. 
Outperforming methods in each dataset are highlighted in boldface using the Wilcoxon signed-rank test with a significance level of 5\%.
The experiments were conducted 20 times.
}
\label{tab:real-result-LS-large}
\scalebox{0.65}[0.65]{  % for arXiv
% \scalebox{0.60}[0.60]{  % for neco
\begin{tabular}{lcccccccc}
\toprule
{} & SSMOR &  TOR &   SV-Ker & SEMI1-Ker & SEMI2-Ker &   SV-NN & SEMI1-NN & SEMI2-NN \\
\midrule
{\tt bank1-5 }         &                 1.719\;(0.47) &       0.364\;(0.02) &        0.363\;(0.02) &         0.415\;(0.08) &         0.432\;(0.06) &        0.317\;(0.02) &         0.313\;(0.02) &   {\bf 0.306\;(0.02)} \\
{\tt bank2-5 }         &                 2.448\;(0.23) &       1.282\;(0.05) &  {\bf 1.256\;(0.04)} &         1.595\;(0.07) &         1.380\;(0.08) &        1.464\;(0.07) &         1.481\;(0.07) &         1.476\;(0.07) \\
{\tt census1-5 }       &                 2.524\;(0.44) &       1.233\;(0.08) &        1.183\;(0.06) &         1.188\;(0.07) &         1.222\;(0.06) &        1.264\;(0.08) &   {\bf 1.053\;(0.07)} &   {\bf 1.068\;(0.06)} \\
{\tt census2-5 }       &                 2.493\;(0.51) &       1.205\;(0.07) &        1.172\;(0.06) &         1.140\;(0.04) &         1.198\;(0.06) &        1.189\;(0.06) &         1.100\;(0.05) &   {\bf 1.079\;(0.04)} \\
{\tt computer1-5 }     &                 0.958\;(0.09) &       0.550\;(0.03) &        0.538\;(0.03) &         0.542\;(0.03) &         0.569\;(0.05) &        0.618\;(0.05) &         0.536\;(0.03) &   {\bf 0.515\;(0.03)} \\
{\tt computer2-5 }     &                 0.944\;(0.16) &       0.457\;(0.02) &  {\bf 0.453\;(0.02)} &   {\bf 0.456\;(0.02)} &         0.484\;(0.02) &        0.604\;(0.05) &         0.502\;(0.05) &         0.480\;(0.03) \\
{\tt fireman } &                15.753\;(4.34) &       6.472\;(0.24) &        6.519\;(0.24) &         7.553\;(1.32) &         7.226\;(1.01) &  {\bf 2.766\;(0.19)} &         3.319\;(0.33) &         3.168\;(0.28) \\
{\tt kinematics }      &                 5.633\;(0.74) &       2.878\;(0.14) &        2.751\;(0.08) &         2.994\;(0.14) &         2.994\;(0.14) &        2.597\;(0.11) &   {\bf 2.362\;(0.13)} &   {\bf 2.362\;(0.13)} \\
\bottomrule
\end{tabular}
} % end of scale box
\end{table}

\section{Conclusions}
\label{sec:conclusion}
We presented a novel framework to incorporate unlabeled data in ordinal regression based on empirical risk minimization.
We proposed an unbiased risk estimator that is applicable to all well-known task losses,
such as the absolute loss, squared loss, and zero-one loss.
We also elucidated the property of the proposed unbiased risk estimator
through the analysis of the estimation error bound to guarantee that the proposed risk estimator is consistent.
Experimental results showed that our proposed framework could effectively make use of unlabeled data,
resulting in better scores compared to the standard supervised learning method in three task losses in terms of the mean absolute error, mean zero-one error, and mean squared error.
In the future work, we plan to consider a new regularization technique for the unbiased risk estimator approach in the context of semi-supervised learning so that the method can much more utilize the information of unlabeled data.

\section*{Acknowledgements}
The authors would like to thank Han Bao, Yusuke Konno, and Tomoya Sakai for helpful discussions,
and Kento Nozawa, Ikko Yamane, and Soma Yokoi for maintaining servers for our experiments.
TT was supported by JST AIP Challenge Program, UTokyo Toyota-Dwango AI Scholarship, and Softbank AI Scholarship. NC was supported by MEXT Scholarship, JST AIP Challenge Program, and Google PhD Fellowship Program. IS was supported by JST CREST Grant Number JPMJCR17A1, and MS was supported by JST CREST JPMJCR18A2.

% \input{arxiv_main.bbl}
% \bibliographystyle{plainnat}
% \bibliography{ref}

\newpage
\appendix
\section{Proof of Lemma~\ref{thm:ssl-or-surrogate-risk}}
\label{sec:proof-of-ssl-or-surrogate-risk}

\begin{proof}
We can rewrite the task surrogate risk as follows:
\begin{align}\label{eq:proof-for-lem-1}
\calS(g) &= \expect_{X,Y} \left[ \psi(\balpha(X), Y) \right] \nonumber \\
&= \sum_{y=1}^{K}\pi_y \expect_{X|Y=y} \left[ \psi(\balpha(X), y) \right] \nonumber\\
&= \sum_{y\in\calYrmk} \pi_y \expect_{X|Y=y} \left[ \psi(\balpha(X), y) \right]
+\pi_k \expect_{X|Y=k}\left[ \psi(\balpha(X), k) \right].
\end{align}
By expanding the marginal distribution, we have
\begin{align}\label{eq:expand-unlabeled}
\expect_\u \left[ \psi(\balpha(X), k) \right] = \sum_{y=1}^{K}\pi_y \expect_{X|Y=y} \left[ \psi(\balpha(X), k) \right],
\end{align}
Then, we can express 
$\pi_k \expect_{X|Y=k}\left[ \psi(\balpha(X), k) \right]$ 
in terms of the expectation of unlabeled data and the class-conditional expectations of all classes except class $k$ as
\begin{align}
    \pi_k \expect_{X|Y=k}\left[ \psi(\balpha(X), k) \right] 
    = 
    \expect_\u \left[ \psi(\balpha(X), k) \right]
    - 
    \sum_{y\in\calYrmk} \pi_y \expect_{X|Y=y} \left[ \psi(\balpha(X), k) \right] .
    \label{eq:express-k-by-u}
\end{align}
Replacing the last term of the right-hand side of~\eqref{eq:proof-for-lem-1} with the right-hand side of~\eqref{eq:express-k-by-u}, we complete the proof.
\end{proof}

\section{Proof of Proposition~\ref{thm:consistency}}
\label{sec:proof-of-consistency}

\begin{proof}
  From Theorem~\ref{thm:ssl-cvx-surrogate-risk}, we have $\RiskSEMI(g) = \calS(g)$.
  Also it is known that $\calS(g)$ is Fisher-consistent to $\calR(g)$,
  if we adopt the all threshold, cumulative link, least absolute deviation, immediate threshold, or least squares as the task surrogate loss $\psi$~\citep{pedregosa:consistency}.
  By combining all of the results mentioned above, the proof is completed.
\end{proof}

\section{Technical Lemmas}
\label{sec:tech-lems}
The following lemma is used when deriving the estimation error bounds in Appendix~\ref{sec:proof-of-estimation-error-bound}.

\begin{lemma}[McDiarmid's inequality (Theorem D.8 in~\citet{mohri2018foundations})]
\label{lem:McDiarmid-ineq}
    Let $X_1, \dots, X_n$ be a set of independent random variables taking values in $\calX$ and assume that there exists $c_1, \dots, c_n > 0$ such that $ f : \calX^n \rightarrow \R$ satisfies
    \begin{align*}
        |f(x_1,\dots,x_i,\dots,x_n) - f(x_1,\dots,x'_i,\dots,x_n)| \le c_i
    \end{align*}
    for all $i \in \{1,\dots,n\}$ and any $x_1, \dots, x_n, x'_i \in \calX$.
    Then,
    \begin{align*}
        \mathbb{P} \left\{ {f(X_1, \dots, X_n) - \expect[f(X_1, \dots, X_n)] \ge \epsilon} \right\} &\le \exp\left( \frac{-2 \epsilon^2}{\sum_{i=1}^n c_i} \right)  \\
        \mathbb{P} \left\{ {f(X_1, \dots, X_n) - \expect[f(X_1, \dots, X_n)] \le -\epsilon} \right\} &\le \exp\left( \frac{-2 \epsilon^2}{\sum_{i=1}^n c_i} \right).
    \end{align*}
\end{lemma}

The following lemma is used when bounding the Rademacher complexities of hypothesis classes in Appendix~\ref{sec:proof-of-Rademacher-OR-linear}.

\begin{lemma}[Talagrand's lemma (Lemma 5.7 in~\citet{mohri2018foundations})]
\label{lem:Talagrand-lem}
    Let $\Phi : \R \rightarrow \R$ be $\rho$-Lipschitz function. Then, for any set $\calH \subset \R^\calX$,
    \begin{align*}
        \Rad(\Phi \circ \calH)
        \le
        \rho \, \Rad(\calH)
        .
    \end{align*}
\end{lemma}

\section{Analysis of Estimation Error Bounds (Proof of Theorem~\ref{thm:estimation-error-bound-general})}
\label{sec:proof-of-estimation-error-bound}

Before proving Theorem~\ref{thm:estimation-error-bound-general}, we define and recall notations and give some lemmas.
Defining
\begin{align*}
  \Rl(g) 
  \coloneqq  
  \RLone(g) - \RLtwo(g) 
  = 
  \sum_{y\in\calYrmk} \pi_y 
  \underbrace{\expect_{X|Y=y} \left( \psi(\balpha(X), y) -  \psi(\balpha(X), k) \right)}_{\eqqcolon \Rpsily(g)}, 
\end{align*}
we have
\begin{align*}
  \Rluk(g) 
  = 
  \Rl(g) + \Ru(g),
\end{align*}
where we recall that
\begin{align*}
    \calS_{\u}(g) 
    = 
    \expect_\u \left[ \psi(\balpha(X), k) \right]
    .   
\end{align*}
In a similar manner, we define the empirical version of each risk.
Defining
\begin{align*}
  \empRl(g) 
  \coloneqq
  \empRLone(g) - \empRLtwo(g) 
  = 
  \sum_{y\in\calYrmk} \pi_y
  \underbrace{\frac{1}{n_y} \sum_{j=1}^{n_y} (\psi(\balpha(\xyj), y) - \psi(\balpha(\xyj), k) )}_{\eqqcolon \empRpsily(g)},
\end{align*}
we have
\begin{align*}
  \empRluk(g) 
  = 
  \empRl(g) + \empRu(g),
\end{align*}
where we recall that
\begin{align*}
    \empRu(g) 
    = 
    \frac{1}{n_\u} \sum_{j=1}^{n_\u} \psi(\balpha(\xuj), k).
\end{align*}
First, we prove three lemmas in the following.
\begin{lemma}
  \label{lem:labeled-except-k-bound}
  Assume that there exists a constant $C_\psi > 0$ such that $\psi(y, \balpha) \leq C_\psi$ for any $y \in \R,\ \balpha \in \R^{K-1}$.
  Then, for any $\delta > 0$, with probability at least $1 - \frac{K-1}{K+1}\delta$,
  \begin{align}
    &
    \sup_{g \in \mathcal{G}} \left| \Rl(g) - \empRl(g) \right|  \nonumber \\
    &\quad\leq 
    2 \sum_{y\in\calYrmk} \pi_y \left( \Rad(\mathcal{H}^{(y)}; n_y) + \Rad(\mathcal{H}^{(k)}; n_y) \right) 
    + 
    2\sqrt{2} (K-1) C_\psi \sqrt{\ln{\frac{2(K+1)}{\delta}}} \sum_{y\in\calYrmk} \frac{\pi_y}{\sqrt{n_y}}
    .
  \end{align}
\end{lemma}

\begin{proof}
Recalling that
  \begin{align*}
    % \Rpsily(g) 
    % &=
    % \pi_y \expect_{X|Y=y} \left( \psi(\balpha(X), y) -  \psi(\balpha(X), k) \right), \\
    \empRpsily(g) 
    &=
    \frac{1}{n_y} \sum_{j=1}^{n_y} ( \psi(\balpha(\xyj), y) - \psi(\balpha(\xyj), k)),
  \end{align*}
we have
\begin{align}
    &\left| \sup_{g \in \mathcal{G}} \left( \Rpsily(g) - \empRpsily(g) \right) -  \sup_{g \in \mathcal{G}} \left( \Rpsily(g) - \empRpsily^m(g) \right) \right|  \nonumber \\
    &\leq 
    \sup_{g \in \mathcal{G}} \left| \empRpsily^m(g) - \empRpsily(g) \right| \nonumber \\
    &\leq 
    \sup_{g \in \mathcal{G}} \left| \frac{1}{n_y} \left[ \left( \psi(\balpha(\xym), y) - \psi(\balpha(\xym), k) \right) - \left( \psi(\balpha(\xymp), y) - \psi(\balpha(\xymp), k) \right) \right] \right| \nonumber \\
    &\leq 
    \frac{4 C_\psi}{n_y},
\end{align}
  where $\empRpsily^m(g)$ is the slight modification of $\empRpsily(g)$ where $\xym$ is changed to $\xymp$.
  Thus, by McDiarmid's inequality (Lemma~\ref{lem:McDiarmid-ineq} in Appendix~\ref{sec:tech-lems}),
  we have
  \begin{equation}
      \mathbb{P}\left\{\sup_g ( \Rpsily(g) - \empRpsily(g) ) - \expect_{S_y|Y=y} \left[ \sup_g ( \Rpsily(g) - \empRpsily(g)) \right] \geq \epsilon \right\}
      \leq \exp{\left( -\frac{\epsilon^2 n_y}{8 C_\psi^2} \right)}
      ,
  \end{equation}
  where $S_y | Y = y $ denotes the distribution over $\calX^{n_y}$ when conditioned as $Y=y$.
  Therefore, with probability at least $1-\frac{\delta}{2(K+1)}$, it holds that
  \begin{equation}
    \sup_g ( \Rpsily(g) - \empRpsily(g) )
    \leq
    \expect_{S_y|Y=y} \left[ \sup_g ( \Rpsily(g) - \empRpsily(g)) \right] + 2\sqrt{2} C_\psi \sqrt{\ln{\frac{2(K+1)}{\delta}}} \frac{1}{\sqrt{n_y}}
    .
    \label{eq:estim_y}
  \end{equation}
  The first term of the right-hand side of the above inequality can be bounded as
  \begin{align}
    &
    \expect_{S_y|Y=y} \left[ \sup_g ( \Rpsily(g) - \empRpsily(g)) \right]  \\
    &=
    \expect_{S_y|Y=y} \left[ \sup_g \left( \expect_{S'_y|Y=y} \left [ \empRpsily'(g) - \empRpsily(g) \right] \right) \right]  \nonumber \\ 
    &\leq
    \expect_{S_y, S'_y |Y=y} \left[ \sup_g \left( \empRpsily'(g) - \empRpsily(g) \right) \right]  
    \label{eq:k-rm-1}
    \\ 
    &=
    \expect_{S_y, S'_y |Y=y} \left[ \sup_g \frac{1}{n_y}  \sum_{j=1}^{n_y} \left( 
        ( \psi(\balpha(\xyjpr), y) - \psi(\balpha(\xyjpr), k) ) 
        - 
        ( \psi(\balpha(\xyj), y) - \psi(\balpha(\xyj), k) )
    \right) \right]  
    \nonumber \\ 
    &=
    \expect_{S_y, S'_y |Y=y} \expect_{\bsigma} \left[ \sup_g \frac{1}{n_y} \sum_{j=1}^{n_y} \sigma_j
    \left( 
        ( \psi(\balpha(\xyjpr), y) - \psi(\balpha(\xyjpr), k) ) 
        - 
        ( \psi(\balpha(\xyj), y) - \psi(\balpha(\xyj), k) )
    \right) \right]  
    \label{eq:k-rm-2}
    \\ 
    &\leq
    \expect_{S'_y|Y=y} \expect_{\bsigma}  \left[ \sup_g \frac{1}{n_y} \sum_{j=1}^{n_y} 
        \sigma_j \psi(\balpha(\xyjpr), y) 
     \right] 
    +
    \expect_{S'_y|Y=y} \expect_{\bsigma} \left[ \sup_g \frac{1}{n_y} \sum_{j=1}^{n_y} 
        (-\sigma_j) \psi(\balpha(\xyjpr), k) 
    \right]  \nonumber  \\
    &\qquad+
    \expect_{S_y |Y=y} \expect_{\bsigma} \left[ \sup_g \frac{1}{n_y} \sum_{j=1}^{n_y}
        (-\sigma_j)  \psi(\balpha(\xyj), y) 
    \right] 
    +
    \expect_{S_y |Y=y} \expect_{\bsigma} \left[ \sup_g \frac{1}{n_y} \sum_{j=1}^{n_y}  
        \sigma_j \psi(\balpha(\xyj), k) 
    \right]
    \label{eq:k-rm-3}
    \\ 
    &\leq 
    2 ( \Rad(\calH^{(y)}; n_y) +  \Rad(\calH^{(k)}; n_y) ) 
    ,
  \end{align}
  where~\eqref{eq:k-rm-1} and~\eqref{eq:k-rm-3} follow since the subadditivity of the supremum,~\eqref{eq:k-rm-2} follows from the fact that the distribution $S_y | Y=y$ and $S'_y | Y=y$ are the same,
  and in the last inequality we the fact that $-\sigma_j$ and $\sigma_j$ follow the same distribution.
  Repeating the same argument and together with~\eqref{eq:estim_y}, de Morgan's Laws and the union bound, wit probability at least $1-\frac{\delta}{K+1}$, we have
  \begin{equation}
      \sup_g \left| \Rpsily(g) - \empRpsily(g) \right|
      \leq
      2 ( \Rad(\calH^{(y)}; n_y) +  \Rad(\calH^{(k)}; n_y) ) + 2\sqrt{2} C_\psi \sqrt{\ln{\frac{2(K+1)}{\delta}}} \frac{1}{\sqrt{n_y}}.
  \end{equation}
   Therefore, Lemma~\ref{lem:labeled-except-k-bound} holds as a consequence of the inequality
   \begin{align*}
       \sup_{g \in \mathcal{G}} \left| \Rl(g) - \empRl(g) \right|
       =
       \sup_g \left| \sum_{y\in\calYrmk} \pi_y (\Rpsily(g) - \empRpsily(g)) \right| 
       \leq
       \sum_{y\in\calYrmk} \pi_y \sup_g \left| \Rpsily(g) - \empRpsily(g) \right|
       ,
   \end{align*}
   de Morgan's Laws, and the union bound.
\end{proof}

\begin{lemma}
  \label{lem:unlabeled-bound}
%   Assume the binary surrogate loss function $\ell$ is $\rho$-Lipschitz,
  Assume that there exists a constant $C_\psi > 0$ such that $\psi(y, \balpha) \leq C_\psi$ for any $y \in \calY,\ \balpha \in \R^{K-1}$.
  Then, for any $\delta > 0$, with probability at least $1 - \frac{\delta}{K+1}$,
  \begin{align*}
    \sup_{g \in \mathcal{G}} \left| \Ru(g) - \empRu(g) \right|
    \leq 2 \mathfrak{R}(\mathcal{H}^{(k)}; \nU) + \sqrt{2} C_\psi \sqrt{\ln{\frac{2(K+1)}{\delta}}} \frac{1}{\sqrt{\nU}}
    .
  \end{align*}
\end{lemma}

\begin{proof}
  This lemma can be proven similarly to Lemma~\ref{lem:labeled-except-k-bound}.
\end{proof}

\begin{lemma}
  \label{lem:sv-bound}
  Assume that there exists a constant $C_\psi > 0$ such that $\psi(y, \balpha) \leq C_\psi$ for any $y \in \calY,\ \balpha \in \R^{K-1}$.
  Then, for any $\delta > 0$, with probability at least $1 - \frac{\delta}{K+1}$,
  \begin{align*}
    \sup_{g \in \mathcal{G}} \left| \SVrisk(g) - \empSVrisk(g) \right|
    \leq 2 \mathfrak{R}(\mathcal{H}; n_{\mathrm{L}}) + \sqrt{2} C_\psi \sqrt{\ln{\frac{2(K+1)}{\delta}}} \frac{1}{\sqrt{\nL}}
    .
  \end{align*}
\end{lemma}

\begin{proof}
    This lemma can be proven similarly to Lemma~\ref{lem:labeled-except-k-bound}.
\end{proof}

\begin{proof}[Proof of Theorem~\ref{thm:estimation-error-bound-general}]
Let us bound the estimation error $\calS(\ghatk) - \calS(g^*)$.
Note that the equality $\RiskSEMI(g) = \calS(g)$ holds for any $g \in \mathcal{G}$.
Then, we have
\begin{align}
  \calS(\ghatk) - \calS(g^*)
  &=
  \RiskSEMI(\ghatk) - \RiskSEMI(g^*)
  % && \text{(by Theorem~\ref{thm:ssl-or-surrogate-risk})} 
  \nonumber
  \\
  &= 
  \left( \RiskSEMI(\ghatk) - \empRiskSEMI(\ghatk) \right) 
  + 
  \left( \empRiskSEMI(\ghatk) - \empRiskSEMI(g^*)) \right) \nonumber \\
  & \qquad + \left(\empRiskSEMI(g^*) - \RiskSEMI(g^*) \right)
  \nonumber
  \\
  & \le \left(\RiskSEMI(\ghatk) - \empRiskSEMI(\ghatk) \right) + 0 + \left(\empRiskSEMI(g^*) - \RiskSEMI(g^*) \right)
  &&
  \label{eq:est-bound-1}
  \\
  & \le 2\sup_{g \in \mathcal{G}} \left| \RiskSEMI(g) - \empRiskSEMI(g) \right| \nonumber
  \\
  & \le 
  2 \gamma
  \left(
    \sup_{g \in \mathcal{G}} \left| \Rl(g) - \empRl(g) \right| 
  + 
    \sup_{g \in \mathcal{G}} \left| \Ru(g) - \empRu(g) \right|  
  \right)  \nonumber \\
  &\qquad+
  2 (1-\gamma) \sup_{g \in \mathcal{G}} \left| \calS(g) - \hat{\calS}(g) \right|
  ,
  % && \text{(by subadditivity of $\sup$)}
  \label{eq:su-risk-decomposition}
\end{align}
where~\eqref{eq:est-bound-1} follows since $\ghatk$ is the minimizer of $\empRiskSEMI$,
and the last inequality follows from the subadditivity of the supremum.
Hence, combining Eq.~\eqref{eq:su-risk-decomposition} and Lemmas~\ref{lem:labeled-except-k-bound} to~\ref{lem:sv-bound},
with probability at least $1 - \delta$, we have
\begin{align}
    &
    \calS(\ghatk) - \calS(g^*)   \nonumber \\
    &\le
    4 \left(
        \gamma
        \left(
            \sum_{y\in\calYrmk} \pi_y
            \left( \Rad(\mathcal{H}^{(y)}; n_y) + \Rad(\mathcal{H}^{(k)}; n_y) \right) 
            + 
            \Rad( \calH^{(k)}; n_{\mathrm{U}} )
        \right)
        +
        (1-\gamma) \Rad(\calH; n_{\mathrm{L}})
    \right)   \nonumber \\
    &\qquad+
    2\sqrt{2} C_\psi \sqrt{\ln\frac{2(K+1)}{\delta}}
    \left( \gamma \left(2\sum_{y\in\calYrmk} \frac{\pi_y}{\sqrt{n_y}} + \frac{1}{\sqrt{n_\u}} \right) + (1-\gamma) \frac{1}{\sqrt{n_{\mathrm{L}}}} \right) 
    .
\end{align}
Combining the above inequality with Theorems~\ref{thm:rad-bound-OR-general-y-dependent} and~\ref{thm:rad-bound-OR-general}, we completed the proof of Theorem~\ref{thm:estimation-error-bound-general}.
\end{proof}

% ==========
\section{Upper Bounds for Rademacher Complexities in Ordinal Regression (Proof of Lemma~\ref{lem:OR-rad-bound-linear})}
\label{sec:proof-of-Rademacher-OR-linear}
Before proving Lemma~\ref{lem:OR-rad-bound-linear},
we prepare the following lemma and two theorems.
In this section, we write $[n] \coloneqq \{1, \dots, n\}$ for $n\in\N$.
Recall that 
\begin{align}
\begin{aligned}
    \calH_\AT &= \{ (\x, y) \mapsto \psi_\AT(\balpha(\x), y) : f\in\calF, \btheta\in\Theta \},   \\
    \calH_\IT &= \{ (\x, y) \mapsto \psi_\IT(\balpha(\x), y) : f\in\calF, \btheta\in\Theta \},   \\
    \calH_\LS &= \{ (\x, y) \mapsto \psi_\LS(\balpha(\x), y) : f\in\calF \}.  
\end{aligned}
\end{align}
and
\begin{align}
\begin{aligned}
    \calH_\AT^{(y)} &= \{ \x \mapsto \psi_\AT(\balpha(\x), y) : f\in\calF, \btheta\in\Theta \},   \\
    \calH_\IT^{(y)} &= \{ \x \mapsto \psi_\IT(\balpha(\x), y) : f\in\calF, \btheta\in\Theta \},   \\
    \calH_\LS^{(y)} &= \{ \x \mapsto \psi_\LS(\balpha(\x), y) : f\in\calF \}.  
\end{aligned}
\end{align}

\begin{lemma}
\label{lem:decomposing-hypothesis-class}
Let $n\in\N$ and $\calZ \subset \calY$, and assume that 
\begin{align}
    \Rad(\calH^{(y)}; n) = 0 
\end{align}
for $y \in \calZ$.
Then,
\begin{align}
    \Rad(\calH; n) \le \sum_{y\in\mathcal{Z}} \Rad(\calH^{(y)}; n)
    .
\end{align}
\end{lemma}

\begin{proof}
Using the identity $\ind{y=y_i} = \frac12 + \frac{2 \, \ind{y=y_i} - 1}{2}$, we have
\begin{align}
    \Rad(\calH ; n)
    &=
    \expect
    \left[  
        \sup_{h \in \calH} \frac1n \sum_{i\in[n]} \sigma_i h(\x_i, y_i)
    \right]  \nonumber  \\ 
    &=
    \expect
    \left[  
        \sup_{h \in \calH} \frac1n \sum_{i\in[n]} \sum_{y\in\calY} \sigma_i h(\x_i, y) \ind{y=y_i}
    \right]  \nonumber  \\ 
    &\le
    \expect
    \left[  
        \sup_{h \in \calH} \frac1n \sum_{i\in[n]} \sum_{y\in\calY} \sigma_i h(\x_i, y) \ind{y=y_i}
    \right]  
    \label{eq:decompse-1}
    \\ 
    &=
    \sum_{y\in\calY}
    \expect
    \left[  
        \sup_{h \in \calH} \frac1n \sum_{i\in[n]} \sigma_i h(\x_i, y) 
        \left( \frac12 + \frac{2 \, \ind{y=y_i} - 1}{2}  \right)
    \right]  \nonumber  \\ 
    &\le 
    \frac12 \sum_{y\in\calY}
    \expect
    \left[  
        \sup_{h \in \calH} \frac1n \sum_{i\in[n]} \sigma_i h(\x_i, y) 
    \right] 
    +
    \frac12 \sum_{y\in\calY}
    \expect
    \left[  
        \sup_{h \in \calH} \frac1n \sum_{i\in[n]} \sigma_i ( 2 \, \ind{y=y_i} - 1 ) h(\x_i, y) 
    \right] 
    \label{eq:decompse-2}
    \\
    &=
    \frac12 \sum_{y\in\calY}
    \expect
    \left[  
        \sup_{h \in \calH} \frac1n \sum_{i\in[n]} \sigma_i h(\x_i, y) 
    \right] 
    +
    \frac12 \sum_{y\in\calY}
    \expect
    \left[  
        \sup_{h \in \calH} \frac1n \sum_{i\in[n]} \sigma_i h(\x_i, y) 
    \right] 
    \label{eq:decompse-3}
    \\
    &=
    \sum_{y\in\calY} \Rad(\calH^{(y)}; n) 
    \nonumber \\
    &=
    \sum_{y\in\calZ} \Rad(\calH^{(y)}; n)
    ,
\end{align}
where~\eqref{eq:decompse-1} and~\eqref{eq:decompse-2} follow from the subadditivity of the supremum,
\eqref{eq:decompse-3} follows from the fact that $\sigma_i ( 2 \, \ind{y=y_i} - 1 ) $  and $\sigma_i$ follow the same distribution,
and the last inequality follows from the assumption.
\end{proof}

Next, we give the upper bounds for the Rademacher complexities with $\calH_\AT^{(y)}$, $\calH_\IT^{(y)}$, and $\calH_\LS^{(y)}$ without any assumptions on the decision functions class $\calF$.
Let us recall the theorem for the convenience.
\begingroup
\def\thetheorem{\ref{thm:rad-bound-OR-general-y-dependent}}
\begin{theorem}
    Fix $y\in\calY$. Let $n \in \N$ assume for the AT and IT losses that the binary surrogate loss $\ell$ is $\rho$-Lipschitz.
    Then, the expected Rademacher complexities of $\calH_\AT^{(y)}, \calH_\IT^{(y)}$, and $\calH_\LS^{(y)}$ are bounded as
    \begin{align}
        \begin{aligned}
        \Rad(\calH_\AT^{(y)} ; n) 
        &\le 
        \rho \sum_{j=1}^{K-1} \Rad(\calA_j; n)   
        ,  \\
        \Rad(\calH_\IT^{(y)} ; n) 
        &\le 
        \rho \left( \Rad(\calA_{y-1}; n) + \Rad(\calA_{y}; n) \right)
        , \\
        \Rad(\calH_\LS^{(y)} ; n) 
        &\le 
        2 |y + \theta_1 - 3/2| \Rad(\calF; n) + \Rad(\mathrm{sq} \circ \calF; n)
        ,
        \end{aligned}
    \end{align}
    where $\mathrm{sq} : \mathrm{Im}\,f \ni z \mapsto z^2 \in \R$.
\end{theorem}
\addtocounter{theorem}{-1}
\endgroup

\begin{proof}

We prove the theorem for each loss.

\noindent \textbf{AT loss.}
We have
\begin{align}
    \Rad(\calH_\AT^{(y)} ; n)
    &=
    \expect
    \left[  
        \sup_{h \in \calH_\AT^{(y)}} \frac1n \sum_{i\in[n]} \sigma_i h(\x_i, y)
    \right]  \nonumber  \\ 
    &=
    \expect
    \left[  
        \sup_{f\in\calF, \btheta\in\Theta} \frac1n \sum_{i\in[n]} \sigma_i 
        \left(
            \sum_{j=1}^{y-1} \ell(-\alpha_j(\x_i)) + \sum_{j=y}^{K-1} \ell(\alpha_j(\x_i)) 
        \right)
    \right] \nonumber  \\ 
    &\le
    \expect
    \left[  
        \sup_{f\in\calF, \btheta\in\Theta} \frac1n \sum_{i\in[n]} \sigma_i 
        \left(
            \sum_{j=1}^{y-1} \ell(-\alpha_j(\x_i)) 
        \right)
    \right]
    +
    \expect
    \left[  
        \sup_{f\in\calF, \btheta\in\Theta} \frac1n \sum_{i\in[n]} \sigma_i 
        \left(
            \sum_{j=y}^{K-1} \ell(\alpha_j(\x_i)) 
        \right)
    \right]
    ,  
    \label{eq:Rad-AT-1-y-dep}
    \\
    &\le
    \sum_{j=1}^{y-1} 
    \expect
    \left[  
        \sup_{f\in\calF, \btheta\in\Theta} \frac1n \sum_{i\in[n]} \sigma_i 
        \left(
            \ell(-\alpha_j(\x_i)) 
        \right)
    \right]
    +
    \sum_{j=y}^{K-1} 
    \expect
    \left[  
        \sup_{f\in\calF, \btheta\in\Theta} \frac1n \sum_{i\in[n]} \sigma_i 
        \left(
            \ell(\alpha_j(\x_i)) 
        \right)
    \right]
    ,  
    \label{eq:Rad-AT-2-y-dep} 
    \\
    &\le
    \sum_{j=1}^{K-1}
    \Rad(\ell \circ \calA_j; n)  
    \label{eq:Rad-AT-3-y-dep} 
    \\
    &\le
    \rho
    \sum_{j=1}^{K-1}
    \Rad(\calA_j; n)   
    ,
\end{align}
where~\eqref{eq:Rad-AT-1-y-dep} and~\eqref{eq:Rad-AT-2-y-dep} follow from the subadditivity of the supremum,
\eqref{eq:Rad-AT-3-y-dep} follow from the fact that $-\sigma_i$ and $\sigma_i$ follows the same distribution,
and the last inequality follows from the Talagrand's lemma (Lemma~\ref{lem:Talagrand-lem} in Appendix~\ref{sec:tech-lems}).

\noindent \textbf{IT loss.}
The proof for the case of the IT loss follows a very similar argument to that of the AT loss,
and we include the proof for completeness.
We have
\begin{align}
    \Rad(\calH_\IT^{(y)} ; n)
    &=
    \expect
    \left[  
        \sup_{h\in\calH_\IT^{(y)}} \frac1n \sum_{i\in[n]} \sigma_i 
        \left(
            \ell(-\alpha_{y-1}(\x_i)) + \ell(\alpha_{y}(\x_i)) 
        \right)
    \right] \nonumber  \\ 
    &\le
    \expect
    \left[  
        \sup_{f\in\calF, \btheta\in\Theta} \frac1n \sum_{i\in[n]} \sigma_i 
            \ell(-\alpha_{y-1}(\x_i)) 
    \right]  
    +
    \expect
    \left[  
        \sup_{f\in\calF, \btheta\in\Theta} \frac1n \sum_{i\in[n]} \sigma_i 
        \ell(\alpha_{y}(\x_i)) 
    \right]
    \label{eq:Rad-IT-1-y-dep}
    ,
    \\
    &\le
    \Rad(\ell \circ (-\calA_{y-1}); n)  
    +
    \Rad(\ell \circ \calA_{y}; n)  
    \label{eq:Rad-IT-2-y-dep} 
    \\
    &\le
    \rho ( \Rad(\calA_{y-1}; n) + \Rad(\calA_y; n) ) 
    ,
\end{align}
where~\eqref{eq:Rad-IT-1-y-dep} and~\eqref{eq:Rad-IT-2-y-dep} follows from the subadditivity of the supremum,
the last inequality follows from the fact that $-\sigma_i$ and $\sigma_i$ follows the same distribution and the Talagrand's lemma.

\noindent \textbf{LS loss.}
We give the upper bound for the case of the LS loss.
We have
\begin{align}
    \Rad(\calH_\LS^{(y)} ; n)
    &=
    \expect
    \left[  
        \sup_{h\in\calH_\LS^{(y)}} \frac1n \sum_{i\in[n]} \sigma_i 
        h(\alpha(\x_i), y)
    \right] \nonumber  \\ 
    &=
    \expect
    \left[  
        \sup_{f\in\calF} \frac1n \sum_{i\in[n]} \sigma_i 
        \left(
            y - f(\x_i) + \theta_1 - \frac32
        \right)^2
    \right] \nonumber  \\ 
    &\le
    \expect
    \left[  
        \frac1n \sum_{i\in[n]} \sigma_i 
        \left(
            y + \theta_1 - \frac32
        \right)
    \right] 
    +
    \expect
    \left[  
        \sup_{f\in\calF} \frac1n \sum_{i\in[n]} \sigma_i f(\x_i)^2
    \right]  \nonumber \\
    &\qquad+
    2
    \expect 
    \left[  
        \sup_{f\in\calF} \frac1n \sum_{i\in[n]} (-\sigma_i)
        f(\x_i) (y + \theta_1 - 3/2)
    \right] 
    .
    \label{eq:Rad-LS-1-dep-y}
\end{align}
The first term in~\eqref{eq:Rad-LS-1-dep-y} is 0.
The third term in~\eqref{eq:Rad-LS-1-dep-y} can be rewritten as
\begin{align}
    &
    2
    \expect
    \left[  
        \sup_{f\in\calF} \frac1n \sum_{i\in[n]} (-\sigma_i)
        f(\x_i) (y + \theta_1 - 3/2)
    \right]  \nonumber \\
    &=
    2
    \expect
    \left[  
        \sup_{f\in\calF} \frac1n \sum_{i\in[n]} (-\sigma_i)\, \mathrm{sign}(y + \theta_1 - 3/2)
        |y + \theta_1 - 3/2|   f(\x_i)
    \right]  \nonumber  \\ 
    &=
    2 |y + \theta_1 - 3/2| 
    \expect
    \left[  
        \sup_{f\in\calF} \frac1n \sum_{i\in[n]} (-\sigma_i)\, \mathrm{sign}(y + \theta_1 - 3/2)
         f(\x_i)
    \right]  \nonumber  \\ 
    &=
    2 |y + \theta_1 - 3/2| 
    \expect
    \left[  
        \sup_{f\in\calF} \frac1n \sum_{i\in[n]} \sigma_i
         f(\x_i)
    \right]  \label{eq:Rad-LS-2-y-dep} \\
    &=
    2 |y + \theta_1 - 3/2| \; \Rad(\calF; n)
    ,
\end{align}
where in~\eqref{eq:Rad-LS-2-y-dep} we used the fact that $-\sigma_i \, \mathrm{sign}(y + \theta_1 - 3/2)$ and  $\sigma_i$ follow the same distribution.
Summing up the above argument, the proof is completed.
\end{proof}

Next, we give the upper bounds for the Rademacher complexities with $\calH_\AT$, $\calH_\IT$, and $\calH_\LS$ without any assumptions on a class of decision functions $\calF$
using Lemma~\ref{lem:decomposing-hypothesis-class} and Theorem~\ref{thm:rad-bound-OR-general-y-dependent}
Let us recall the theorem for the convenience.
\begingroup
\def\thetheorem{\ref{thm:rad-bound-OR-general}}
\begin{theorem}[Upper bounds of Rademacher complexities with general decision functions class]
% \label{thm:rad-bound-OR-general}
    Let $n \in \N$ and
    assume for the AT and IT losses that the binary surrogate loss $\ell$ is $\rho$-Lipschitz.
    Then, the expected Rademacher complexities of $\calH_\AT, \calH_\IT$, and $\calH_\LS$ are bounded as
    \begin{align}
        \begin{aligned}
        \Rad(\calH_\AT ; n) 
        &\le 
        \rho \, K \sum_{j=1}^{K-1} \Rad(\calA_j; n)   
        ,  
        \\
        \Rad(\calH_\IT ; n) 
        &\le 
        \rho \, \sum_{y\in\calY} 
        \left( \Rad(\calA_{y-1}; n) + \Rad(\calA_{y}; n) \right), 
        \\
        \Rad(\calH_\LS ; n) 
        &\le 
        2 (K + |\theta_1 - 3/2|) \Rad(\calF; n) + \Rad(\mathrm{sq} \circ \calF; n)
        ,
        \end{aligned}
    \end{align}
    where $\mathrm{sq} : \mathrm{Im}\,f \ni z \mapsto z^2 \in \R$.
\end{theorem}
\addtocounter{theorem}{-1}
\endgroup

\begin{proof}[Proof of Theorem~\ref{thm:rad-bound-OR-general}]
For the AT and IT losses, the result of Theorem~\ref{thm:rad-bound-OR-general} directly follows from Lemma~\ref{lem:decomposing-hypothesis-class} and Theorem~\ref{thm:rad-bound-OR-general-y-dependent}.

\noindent \textbf{AT loss.}
For the case of the AT loss, we have
\begin{align}
    \Rad(\calH_\AT ; n) 
    \le 
    \sum_{y\in\calY} \Rad(\calH_\AT^{(y)}; n) 
    \le 
    \rho \,
    \sum_{y\in\calY} 
    \sum_{j=1}^{K-1} \Rad(\calA_j; n)   
    &=
    \rho \, K \sum_{j=1}^{K-1} \Rad(\calA_j; n) 
    ,
\end{align}
where in the first inequality we used Lemma~\ref{lem:decomposing-hypothesis-class},
and in the last inequality we used Theorem~\ref{thm:rad-bound-OR-general-y-dependent}.

\noindent \textbf{IT loss.}
In a similar manner, we can bound for the case of IT loss as
\begin{align}
    \Rad(\calH_\IT ; n) 
    \le 
    \sum_{y\in\calY} \Rad(\calH_\IT^{(y)}; n) 
    \le 
    \rho \,
    \sum_{y\in\calY} 
    \left( \Rad(\calA_{y-1}; n) + \Rad(\calA_{y}; n) \right)
    .
\end{align}

\noindent \textbf{LS loss.}
Next, we give the upper bound for the case of the LS loss, where we will not rely on Lemma~\ref{lem:decomposing-hypothesis-class} and Theorem~\ref{thm:rad-bound-OR-general-y-dependent}.
We have
\begin{align}
    \Rad(\calH_\LS ; n)
    &=
    \expect
    \left[  
        \sup_{h\in\calH_\LS} \frac1n \sum_{i\in[n]} \sigma_i 
        h(\alpha(\x_i), y_i)
    \right] \nonumber  \\ 
    &=
    \expect
    \left[  
        \sup_{f\in\calF} \frac1n \sum_{i\in[n]} \sigma_i 
        \left(
            y_i - f(\x_i) + \theta_1 - \frac32
        \right)^2
    \right] \nonumber  \\ 
    &\le
    \expect
    \left[  
        \frac1n \sum_{i\in[n]} \sigma_i 
        \left(
            y_i^2 + 2 y_i(\theta_1 - 3/2) + (\theta_1 - 3/2)^2
        \right)
    \right] 
    +
    2
    \expect
    \left[  
        \sup_{f\in\calF} \frac1n \sum_{i\in[n]} (- \sigma_i)
        y_i f(\x_i)
    \right] 
    \nonumber  \\ 
    &\qquad+ 
    \expect
    \left[  
        \sup_{f\in\calF} \frac1n \sum_{i\in[n]} \sigma_i f(\x_i)^2
    \right] 
    +
    2
    \expect
    \left[  
        \sup_{f\in\calF} \frac1n \sum_{i\in[n]} (-\sigma_i)
        f(\x_i) (\theta_1 - 3/2)
    \right] 
    .
    \label{eq:Rad-LS-1}
\end{align}
The first term in~\eqref{eq:Rad-LS-1} is 0.
The second term in~\eqref{eq:Rad-LS-1} is bounded as 
\begin{align}
    2
    \expect
    \left[  
        \sup_{f\in\calF} \frac1n \sum_{i\in[n]} (- \sigma_i)
        y_i f(\x_i)
    \right] 
    \le
    2 K
    \expect
    \left[  
        \sup_{f\in\calF} \frac1n \sum_{i\in[n]} \sigma_i f(\x_i)
    \right] 
    =
    2 K \Rad(\calF; n)
    ,
\end{align}
since $\sigma_i$ and $-\sigma_i$ follow the same distribution.
The forth term in~\eqref{eq:Rad-LS-1} can be rewritten as
\begin{align}
    2
    \expect
    \left[  
        \sup_{f\in\calF} \frac1n \sum_{i\in[n]} (-\sigma_i)
        f(\x_i) (\theta_1 - 3/2)
    \right] 
    &=
    2
    \expect
    \left[  
        \sup_{f\in\calF} \frac1n \sum_{i\in[n]} (-\sigma_i)\, \mathrm{sign}(\theta_1 - 3/2)
        |\theta_1 - 3/2|  f(\x_i)
    \right]  \nonumber  \\ 
    &=
    2 |\theta_1 - 3/2|
    \expect
    \left[  
        \sup_{f\in\calF} \frac1n \sum_{i\in[n]} (-\sigma_i)\, \mathrm{sign}(\theta_1 - 3/2)
         f(\x_i)
    \right]  \nonumber  \\ 
    &=
    2 |\theta_1 - 3/2|
    \expect
    \left[  
        \sup_{f\in\calF} \frac1n \sum_{i\in[n]} \sigma_i
         f(\x_i)
    \right]  \label{eq:Rad-LS-2} \\
    &=
    2 |\theta_1 - 3/2| \; \Rad(\calF; n)
    ,
\end{align}
where in~\eqref{eq:Rad-LS-2} we used the fact that $-\sigma_i \, \mathrm{sign}(\theta_1 - 3/2)$ and $\sigma_i$ follow the same distribution.
Summing up the above argument, the proof is completed.
\end{proof}

\begin{remark}
The upper bound for the case of the LS loss can be similarly obtained as the cases of the AT and IT losses.
However, it loosen the bound by the factor of $K$ compared to the analysis given above.
\end{remark}

The next lemma is the well-known bound for the Rademacher complexity of the linear-in-parameter models.
\begin{lemma}[Theorem 5.10 in \citet{mohri2018foundations}]
\label{lem:rad-bound-linear}
    Let $\calF$ be the class of linear-in-parameter models, that is,
    \begin{align}
        \mathcal{F} = \{f(\x) = \bw^\top\bm{\phi}(\x) \colon \|\bw\|_2 \le C_{\bw},\; \|\bm{\phi}(\x)\|_2 \le C_{\bm{\phi}} \}
    \end{align}
    for positive constants $C_{\bw}$ and $C_{\bm{\phi}}$.
    Then, 
    \begin{align}
        \Rad(\mathcal{F};n) 
        \le 
        \frac{C_{\bw} C_{\bphi}}{\sqrt{n}}
        .
        \label{eq:model-assumption}
    \end{align}
\end{lemma}

Finally, we prove the Lemma~\ref{lem:OR-rad-bound-linear} in the following.
\begin{proof}[Proof of Lemma~\ref{lem:OR-rad-bound-linear}]
The result of Lemma~\ref{lem:OR-rad-bound-linear} directly follows from Theorems~\ref{thm:rad-bound-OR-general-y-dependent},~\ref{thm:rad-bound-OR-general} and Lemma~\ref{lem:rad-bound-linear} as follows.

\noindent \textbf{AT and IT losses.}
Fix $j \in [K]$.
Being aware that we have
\begin{align}
    \alpha_j(\x) = \theta_j - w^\top \phi(\x) 
    = 
    (\theta_j, \bw^\top)
    \begin{pmatrix}
        1  \\
        -\x
    \end{pmatrix}
    ,
\end{align}
and 
\begin{align}
    \| (\theta_j, \bw^\top) \|_2 
    \le 
    \|\btheta\|_2 + \|\bw\|_2
    \le
    C_{\btheta} + C_{\bw}
    ,
\end{align}
we can use Lemma~\ref{lem:rad-bound-linear} and have
\begin{align}
    \Rad(\calA_j; n) 
    \le
    \frac{(C_{\btheta} + C_{\bw}) \, C_{\bphi}}{\sqrt{n}}
    .
\end{align}
In a similar manner, using Theorem~\ref{thm:rad-bound-OR-general}, we have
\begin{align}
    \Rad(\calH_\AT^{(y)} ; n) 
    \le
    \rho\,K \Rad(\calA_j; n)
    \le
    \frac{(C_{\btheta} + C_{\bw}) \, C_{\bphi} \rho K}{\sqrt{n}}
\end{align}
and
\begin{align}
    \Rad(\calH_\IT^{(y)} ; n) 
    \le
    \rho (\Rad(\calA_{y-1}; n) + \Rad(\calA_y; n))
    \le
    \frac{2 (C_{\btheta} + C_{\bw}) \, C_{\bphi} \rho }{\sqrt{n}}
    ,
\end{align}
which completes the proof for the cases of the AT and IT losses.

\noindent \textbf{LS loss.}
Since $|f(\x)| \le \|w\|_2 \|\bphi(\x)\|_2 \le C_{\bw} C_{\bphi}$ by the Cauchy–Schwarz inequality,
the map $\mathrm{sq} : \mathrm{Im}\,f \rightarrow \R$ is $2 C_{\bw} C_{\bphi}$-Lipschitz.
Therefore, using Theorem~\ref{thm:rad-bound-OR-general}  and the Talagrand's lemma,
we have
\begin{align}
    \Rad(\calH_\LS^{(y)} ; n) 
    &\le 
    2 |y + \theta_1 - 3/2| \Rad(\calF; n) + \Rad(\mathrm{sq} \circ \calF; n)  \nonumber \\
    &\le
    2 |y + \theta_1 - 3/2|\Rad(\calF; n) + 2 C_{\bw} C_{\bphi} \Rad(\calF; n)  \nonumber \\
    &\le
    \frac{2 (|y + \theta_1 - 3/2|+ C_{\bw} C_{\bphi} ) (C_{\btheta} + C_{\bw}) \, C_{\bphi}  K  }{\sqrt{n}}
    ,
\end{align}
where in the last inequality we used Lemma~\ref{lem:rad-bound-linear}.

Using similar manners as the above arguments,
we can obtain the upper bounds for the cases with $\calH_\AT$, $\calH_\IT$, and $\calH_\LS$.
\end{proof}

\section{Proof of Theorem~\ref{thm:variance-reduction}}
\label{sec:proof-of-variance-reduction}

\begin{proof}
    Take any $g \in \calG$.
    Recalling that $\empsslrisk(g) = \gamma \empRluk + (1-\gamma) \empSVrisk$,
    we can rewrite the variance of the empirical semi-supervised risk $\Var[\empsslrisk(g)]$ as
    \begin{align}
        \Var[\empsslrisk(g)]
        &=
        \gamma^2 \Var[\empRluk(g)] + (1-\gamma)^2 \Var[\empSVrisk(g)] + 2\gamma (1-\gamma) \Cov(\empRluk(g), \empSVrisk(g))
        .
    \end{align}
    Hence, 
    the condition 
    $\Var[\empsslrisk(g)] < \Var[\empSVrisk(g)]$
    is equivalent to 
    \begin{align}
        (\Var[\empRluk(g)] &+ \Var[\empSVrisk(g)] - 2 \Cov(\empRluk(g), \empSVrisk(g)) ) \gamma^2 \nonumber \\
        &
        - 2 ( \Var[\empRluk(g)] -  \Cov(\empRluk(g), \empSVrisk(g)) ) \gamma < 0
        .
    \end{align}
    This condition is indeed satisfied by taking $\gamma$ satisfying
    \begin{align}
        0 < \gamma < \frac{2 \left (\Var[\empRluk(g)] - \Cov(\empRluk(g), \empSVrisk(g)) \right) }{\Var[\empRluk(g)] + \Var[\empSVrisk(g)] - 2 \Cov(\empRluk(g), \empSVrisk(g)) }
        ,
    \end{align}
    and the proof is completed.
\end{proof}

\section{Proof of Theorem~\ref{thm:convex-ssl-or}} 
\label{sec:proof-of-convex-ssl-or}
\begin{proof}
If $\empRl(g)$ is shown to be convex, the objective function 
% $\hat{J}_{\ell}(\bw, \btheta)$
$\hat{J}_\ell(\bw, \btheta) = \hat{\calS}_{\SEMI\text{-}\gamma}^{\backslash k}(g) + \Omega(\btheta)$
is convex with respect to $\bw$ and $\btheta$
since the other terms are convex.

\noindent \textbf{AT loss.}
Recalling that the AT loss is defined as
\begin{align}
    \psi_\AT (\balpha(\x), y) 
    = 
    \sum_{i=1}^{y-1} \ell(-\alpha_i(\x)) + \sum_{i=y}^{K-1} \ell(\alpha_i(\x))
    % \label{eq:at-loss}
    ,
\end{align}
we have
\begin{align}
  \empRl(g)
  &= \sum_{y\in\calYrmk} \frac{{\pi}_y}{n_y} \sum_{j=1}^{n_y} \left( \psi(\balpha(\xyj), y) - \psi(\balpha(\xyj), k) \right) \nonumber \\
  &= \sum_{y\in\calYrmk} \frac{{\pi}_y}{n_y} \sum_{j=1}^{n_y} \left( \sum_{i=1}^{y-1} \ell(-\alpha_i(\xyj)) + \sum_{i=y}^{K-1} \ell(\alpha_i(\xyj)) - \sum_{i=1}^{k-1} \ell(-\alpha_i(\xyj)) - \sum_{i=k}^{K-1} \ell(\alpha_i(\xyj))\right)
  .
\end{align}
Then, the term inside the sum can be rewritten as
\begin{align}
&
  \sum_{i=1}^{y-1} \ell(-\alpha_i(\xyj)) + \sum_{i=y}^{K-1} \ell(\alpha_i(\xyj)) - \sum_{i=1}^{k-1} \ell(-\alpha_i(\xyj)) - \sum_{i=k}^{K-1} \ell(\alpha_i(\xyj)) \nonumber \\
  &= \begin{cases}
       \sum_{i=y}^{k-1} \left( -\ell(-\alpha_i(\xyj)) + \ell(\alpha_i(\xyj)) \right) & (y < k) \\
       - \sum_{i=k}^{y-1} \left( -\ell(-\alpha_i(\xyj)) + \ell(\alpha_i(\xyj)) \right) & (y > k) \\
       0 & (y = k)
     \end{cases} \nonumber \\
  &= \begin{cases}
       -C_\ell \sum_{i=y}^{k-1} \alpha_i(\xyj)  & (y < k) \\
       C_\ell \sum_{i=k}^{y-1} \alpha_i(\xyj)  & (y > k) \\
       0 & (y = k).
     \end{cases}
\end{align}
Therefore, $\empRl(g)$ is convex and the objective function $\hat{J}_{\ell}(\bw, \btheta)$ is convex.

\noindent \textbf{LS loss.}
Recalling that the LS loss is defined as 
\begin{align*}
    \psi_\LS (\balpha(\x), y) = \left(y + \alpha_1(\x) - \frac{3}{2} \right)^2
    ,
\end{align*}
we have
\begin{align}
  \empRl(g)
  &= \sum_{y\in\calYrmk} \frac{{\pi}_y}{n_y} \sum_{j=1}^{n_y} \left( \psi(\balpha(\xyj), y) - \psi(\balpha(\xyj), k) \right) \nonumber \\
  &= \sum_{y\in\calYrmk} \frac{{\pi}_y}{n_y} \sum_{j=1}^{n_y} \left\{  \left(y + \alpha_1(\xyj) - \frac{3}{2} \right)^2 -  \left(k + \alpha_1 - \frac{3}{2} \right)^2 \right\} \nonumber \\
  &= \sum_{y\in\calYrmk} \frac{{\pi}_y}{n_y} \sum_{j=1}^{n_y} \left\{  y^2 + k^2 + 2(y+k) \left(\alpha_1(\xyj) - \frac{3}{2}\right) \right\}
  .
\end{align}
This is convex and subsequently the objective function $\hat{J}_{\ell}(\bw, \btheta)$ is convex, and the proof is completed.
\end{proof}

\section{Dataset Statistics and Additional Results of Experiments}
\label{sec:additional_experiments}

\subsection{Benchmark Dataset Statistics}
\label{subsec:dataset-stats}
The detailed dataset statistics used in the small and large scale experiments are given in Tables~\ref{tab:dataset-stats-small} and~\ref{tab:dataset-stats-large}.
The other statistics is given in Section~\ref{sec:experiments}.

\begin{table}[ht]
\centering
\caption{
Dataset statistics of small scale experiments.
}
\label{tab:dataset-stats-small}
\scalebox{0.9}[0.9]{
\begin{tabular}{lrrrl}
\toprule
{} &  dim. &  \# of unlabeled data &  \# of test data &            class prior \\
\midrule
{\tt abalone}         &   10 &                  871 &            1253 &     [0.2, 0.6, 0.2] \\
{\tt bank1-5}         &    8 &                 1851 &            2000 &     [0.2, 0.6, 0.2] \\
{\tt bank2-5}         &   32 &                 1851 &            2000 &     [0.2, 0.6, 0.2] \\
{\tt census1-5}       &    8 &                 2000 &            2000 &     [0.2, 0.6, 0.2] \\
{\tt census2-5}       &   16 &                 2000 &            2000 &     [0.2, 0.6, 0.2] \\
{\tt computer1-5}     &   12 &                 1851 &            2000 &     [0.2, 0.6, 0.2] \\
{\tt computer2-5}     &   21 &                 1851 &            2000 &     [0.2, 0.6, 0.2] \\
{\tt fireman-example} &    9 &                 2000 &            2000 &   [0.5, 0.25, 0.25] \\
{\tt kinematics}      &    7 &                 1851 &            2000 &  [0.38, 0.38, 0.25] \\
{\tt lev}             &    4 &                  204 &             300 &  [0.09, 0.68, 0.22] \\
{\tt swd}             &   10 &                  204 &             300 &   [0.38, 0.4, 0.22] \\
{\tt toy}             &    2 &                   57 &              90 &   [0.12, 0.78, 0.1] \\
\bottomrule
\end{tabular}
} % end of scale box
\end{table}

\begin{table}[ht]
\centering
\caption{
Dataset statistics of large scale experiments.
}
\label{tab:dataset-stats-large}
\scalebox{0.9}[0.9]{
\begin{tabular}{lrl}
\toprule
{} &  \# of class &                                  class prior \\
\midrule
{\tt bank1-5}         &                 5 &                          [0.2, 0.2, 0.2, 0.2, 0.2] \\
{\tt bank2-5}         &                 5 &                          [0.2, 0.2, 0.2, 0.2, 0.2] \\
{\tt census1-5}       &                 5 &                          [0.2, 0.2, 0.2, 0.2, 0.2] \\
{\tt census2-5}       &                 5 &                          [0.2, 0.2, 0.2, 0.2, 0.2] \\
{\tt computer1-5}     &                 5 &                          [0.2, 0.2, 0.2, 0.2, 0.2] \\
{\tt computer2-5}     &                 5 &                          [0.2, 0.2, 0.2, 0.2, 0.2] \\
{\tt fireman-example} &                16 &   around 0.0625 for all classes \\
{\tt kinematics}      &                 8 &   around 0.125 for all classes \\
\bottomrule
\end{tabular}
} % end of scale box
\end{table}

\subsection{Additional Results of Experiments}
\label{subsec:additional_experiments}

Here we show the extended experimental results for the variance reduction experiments discussed in Section~\ref{sec:experiments}. 
Figures~\ref{fig:variance_reduction_AT_linear-in-input} to~\ref{fig:variance_reduction_LS_linear-in-input}
show the results of the variance comparison.
We can see that the variances of the semi-supervised risks are smaller than that of the supervised risk.
Figures~\ref{fig:nu_change_AT_linear-in-input} to~\ref{fig:nu_change_LS_linear-in-input} show that the effect of the number of unlabeled on the performance improvement.
Note that the experiments on {\tt toy} dataset is not conducted because its number of unlabeled data is too small.
We can see that the performance of the proposed method is indeed improved, but the improvement by adding a large amount of data is limited except when using LS as the task surrogate loss.

% TODO: includegraphics scale 0.40 -> 0.49, and 0.32 -> 0.37 for arXiv

\begin{figure*}[htb]
    \begin{center}
    \setlength{\subfigwidth}{.32\linewidth}
    \addtolength{\subfigwidth}{-.32\subfigcolsep}
    \begin{minipage}[t]{\subfigwidth}
        \centering
         \subfigure[{\tt abalone}, AT]{\includegraphics[scale=0.40]{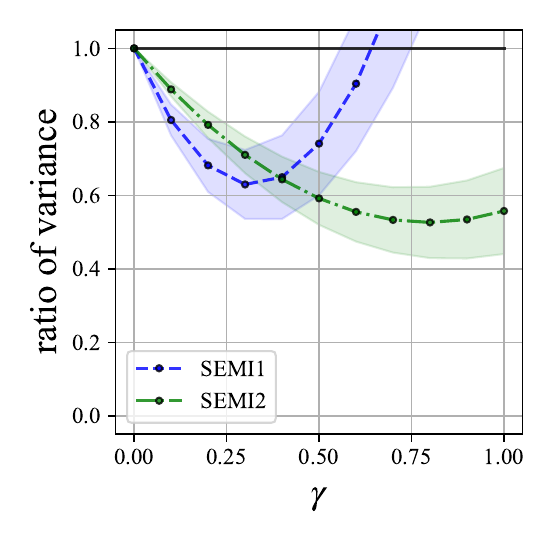}}
    \end{minipage}\hfill
    \begin{minipage}[t]{\subfigwidth}
        \centering
         \subfigure[{\tt bank1-5}, AT]{\includegraphics[scale=0.40]{fig/variance_reduction/sv_AT_bank1-5.pdf}}
    \end{minipage}\hfill
    \begin{minipage}[t]{\subfigwidth}
        \centering
         \subfigure[{\tt bank2-5}, AT]{\includegraphics[scale=0.40]{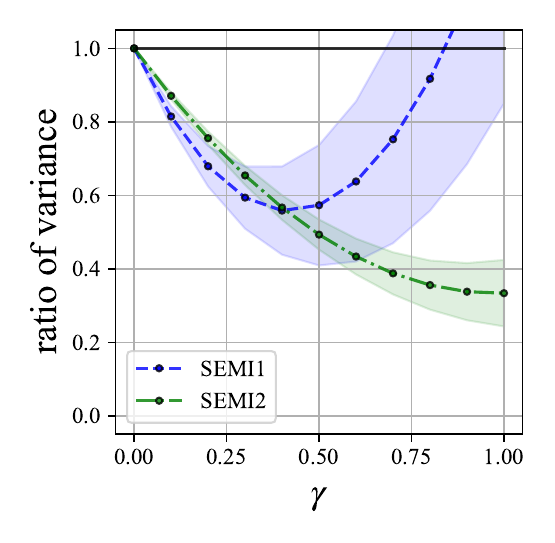}}
    \end{minipage}\hfill
    \begin{minipage}[t]{\subfigwidth}
        \centering
         \subfigure[{\tt census1-5}, AT]{\includegraphics[scale=0.40]{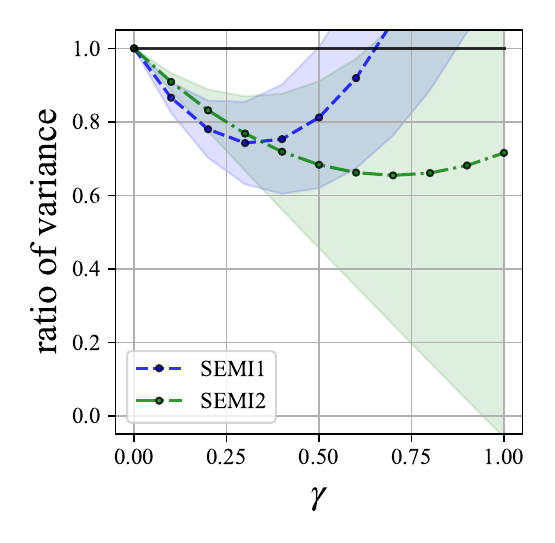}}
    \end{minipage}\hfill
    \begin{minipage}[t]{\subfigwidth}
        \centering
         \subfigure[{\tt census2-5}, AT]{\includegraphics[scale=0.40]{fig/variance_reduction/sv_AT_census2-5.pdf}}
    \end{minipage}\hfill
    \begin{minipage}[t]{\subfigwidth}
        \centering
         \subfigure[{\tt computer1-5}, AT]{\includegraphics[scale=0.40]{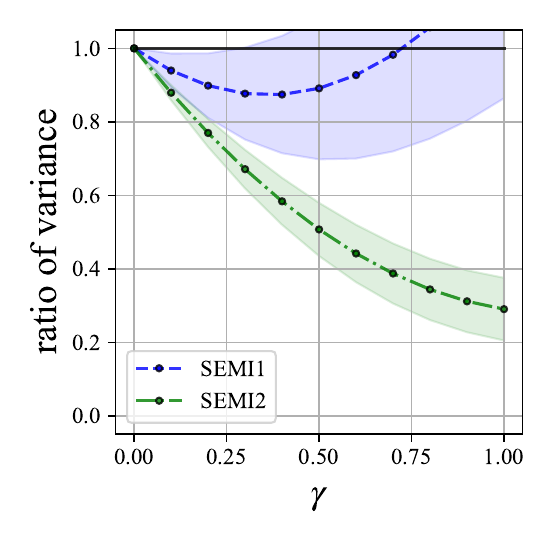}}
    \end{minipage}\hfill
    \begin{minipage}[t]{\subfigwidth}
        \centering
         \subfigure[{\tt computer2-5}, AT]{\includegraphics[scale=0.40]{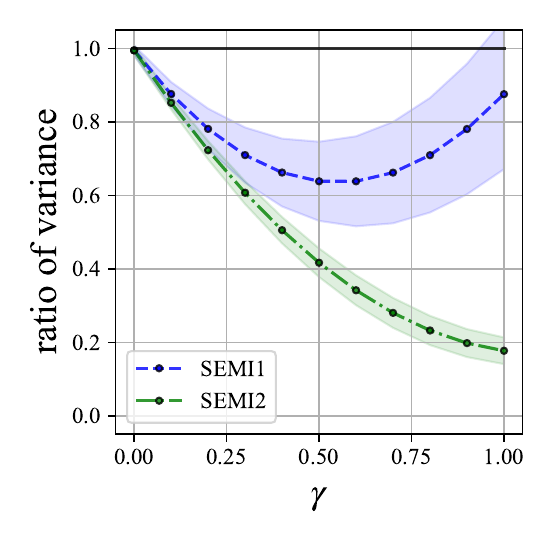}}
    \end{minipage}\hfill
    \begin{minipage}[t]{\subfigwidth}
        \centering
         \subfigure[{\tt fireman-example}, AT]{\includegraphics[scale=0.40]{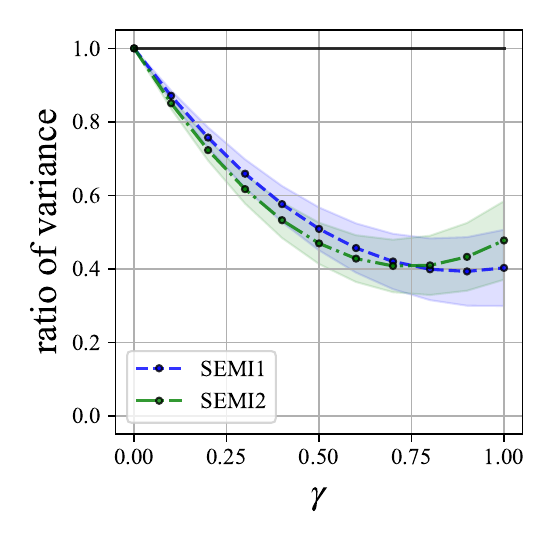}}
    \end{minipage}\hfill
    \begin{minipage}[t]{\subfigwidth}
        \centering
         \subfigure[{\tt kinematics}, AT]{\includegraphics[scale=0.40]{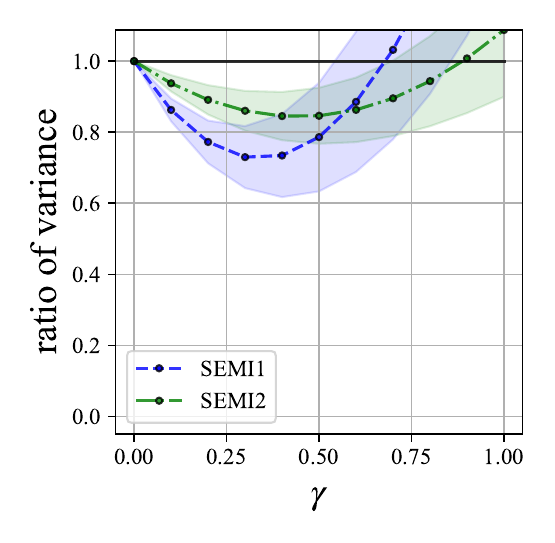}}
    \end{minipage}\hfill
    \begin{minipage}[t]{\subfigwidth}
        \centering
         \subfigure[{\tt lev}, AT]{\includegraphics[scale=0.40]{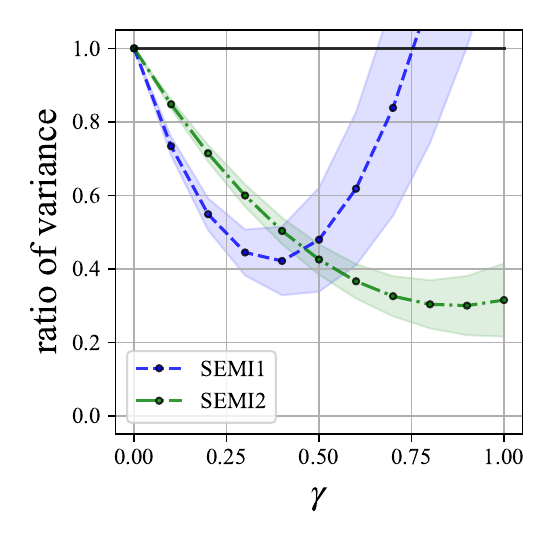}}
    \end{minipage}\hfill
    \begin{minipage}[t]{\subfigwidth}
        \centering
         \subfigure[{\tt swd}, AT]{\includegraphics[scale=0.40]{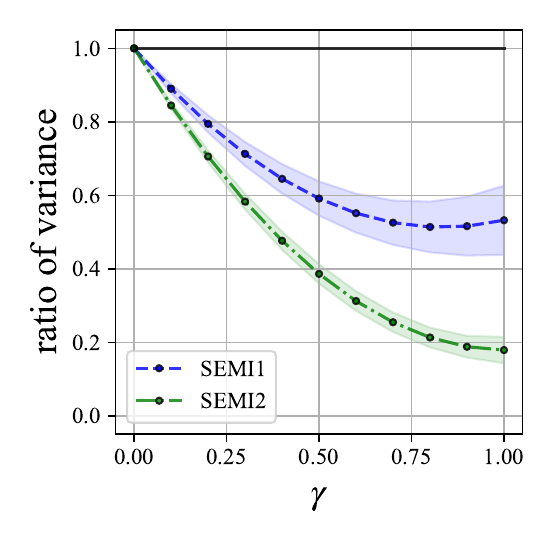}}
    \end{minipage}\hfill
    \begin{minipage}[t]{\subfigwidth}
        \centering
         \subfigure[{\tt toy}, AT]{\includegraphics[scale=0.40]{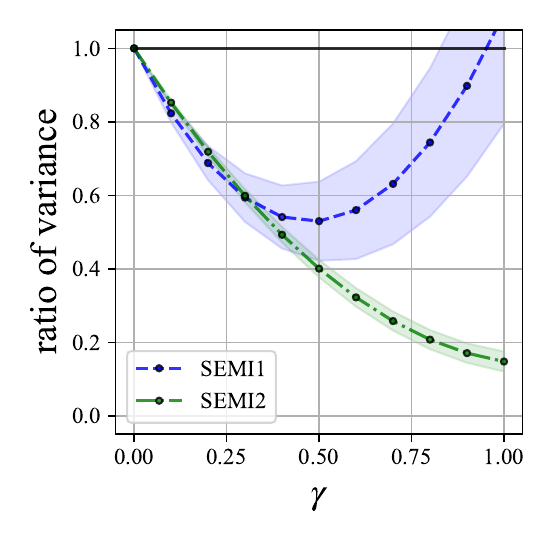}}
    \end{minipage}\hfill
    \end{center}
    \caption{The ratio between the variance of the empirical semi-supervised risk to the supervised risk when we adopted AT as the task surrogate loss and used a linear-in-input model.}
    \label{fig:variance_reduction_AT_linear-in-input}
\end{figure*}

\begin{figure*}[htb]
    \begin{center}
    \setlength{\subfigwidth}{.32\linewidth}
    \addtolength{\subfigwidth}{-.32\subfigcolsep}
    \begin{minipage}[t]{\subfigwidth}
        \centering
         \subfigure[{\tt abalone}, IT]{\includegraphics[scale=0.40]{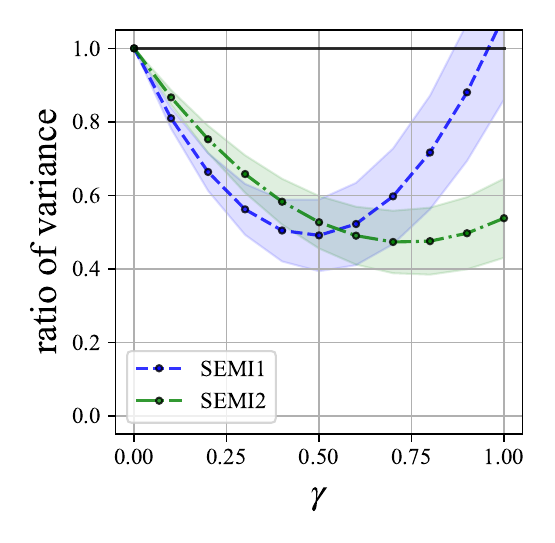}}
    \end{minipage}\hfill
    \begin{minipage}[t]{\subfigwidth}
        \centering
         \subfigure[{\tt bank1-5}, IT]{\includegraphics[scale=0.40]{fig/variance_reduction/sv_IT_bank1-5.pdf}}
    \end{minipage}\hfill
    \begin{minipage}[t]{\subfigwidth}
        \centering
         \subfigure[{\tt bank2-5}, IT]{\includegraphics[scale=0.40]{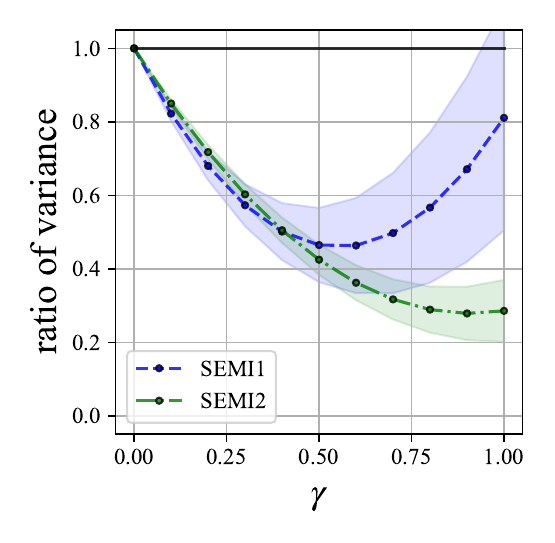}}
    \end{minipage}\hfill
    \begin{minipage}[t]{\subfigwidth}
        \centering
         \subfigure[{\tt census1-5}, IT]{\includegraphics[scale=0.40]{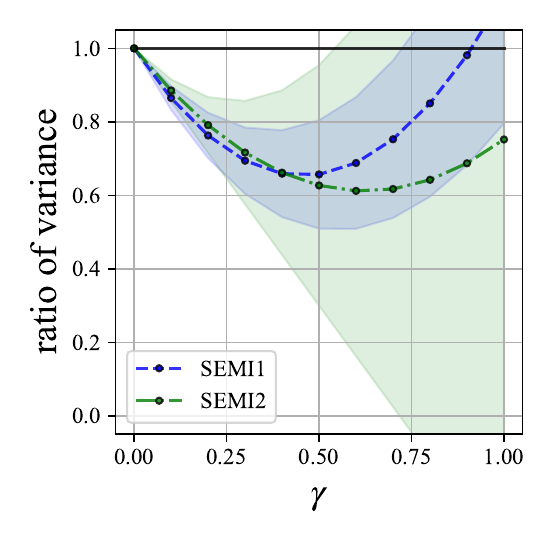}}
    \end{minipage}\hfill
    \begin{minipage}[t]{\subfigwidth}
        \centering
         \subfigure[{\tt census2-5}, IT]{\includegraphics[scale=0.40]{fig/variance_reduction/sv_IT_census2-5.pdf}}
    \end{minipage}\hfill
    \begin{minipage}[t]{\subfigwidth}
        \centering
         \subfigure[{\tt computer1-5}, IT]{\includegraphics[scale=0.40]{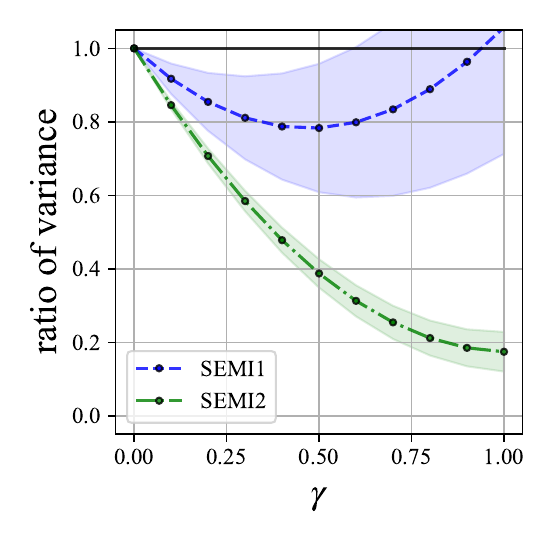}}
    \end{minipage}\hfill
    \begin{minipage}[t]{\subfigwidth}
        \centering
         \subfigure[{\tt computer2-5}, IT]{\includegraphics[scale=0.40]{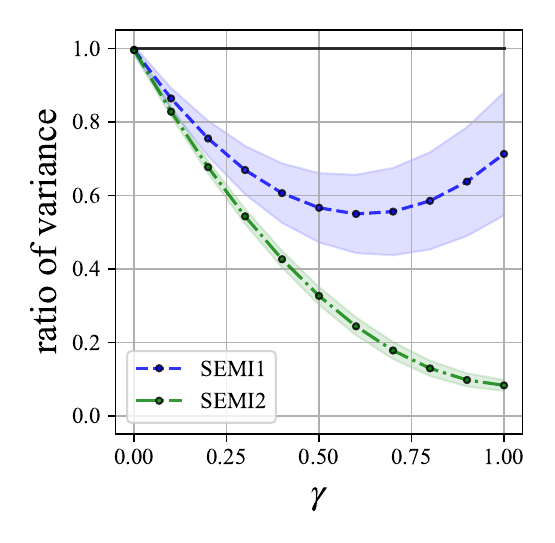}}
    \end{minipage}\hfill
    \begin{minipage}[t]{\subfigwidth}
        \centering
         \subfigure[{\tt fireman-example}, IT]{\includegraphics[scale=0.40]{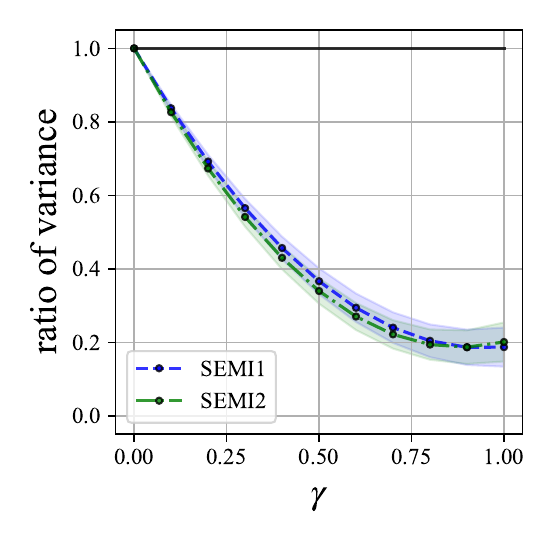}}
    \end{minipage}\hfill
    \begin{minipage}[t]{\subfigwidth}
        \centering
         \subfigure[{\tt kinematics}, IT]{\includegraphics[scale=0.40]{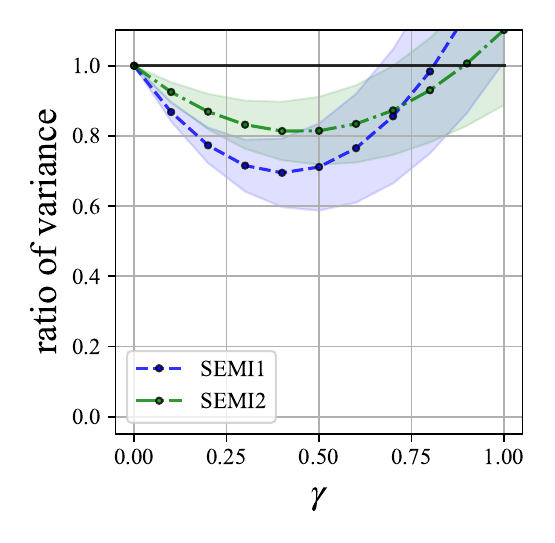}}
    \end{minipage}\hfill
    \begin{minipage}[t]{\subfigwidth}
        \centering
         \subfigure[{\tt lev}, IT]{\includegraphics[scale=0.40]{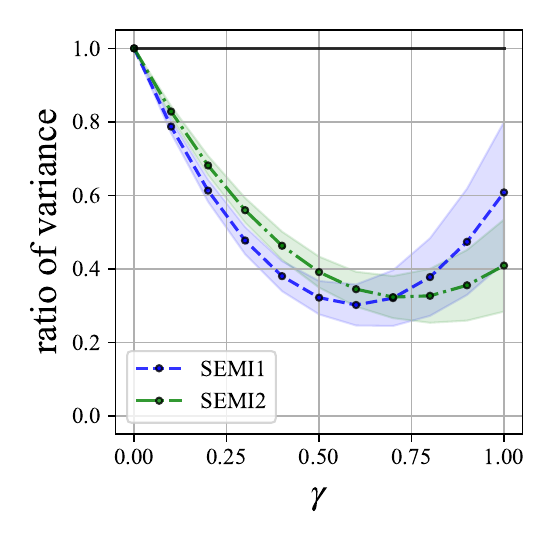}}
    \end{minipage}\hfill
    \begin{minipage}[t]{\subfigwidth}
        \centering
         \subfigure[{\tt swd}, IT]{\includegraphics[scale=0.40]{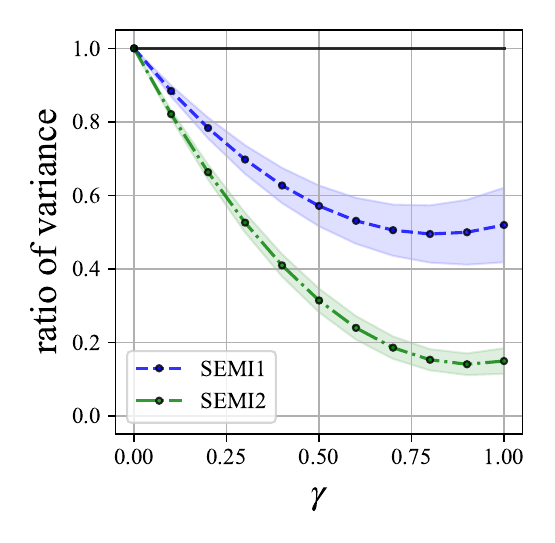}}
    \end{minipage}\hfill
    \begin{minipage}[t]{\subfigwidth}
        \centering
         \subfigure[{\tt toy}, IT]{\includegraphics[scale=0.40]{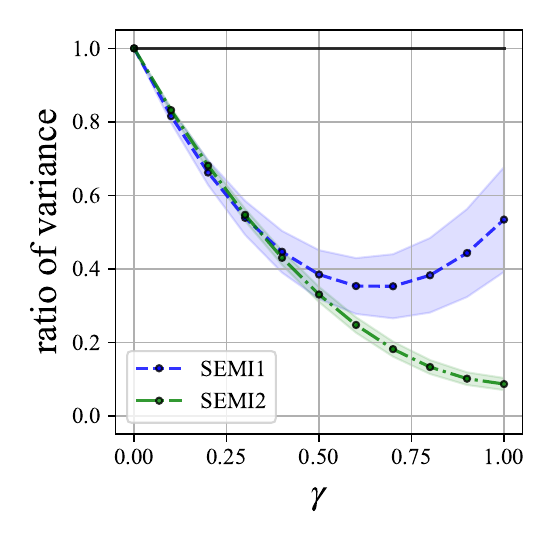}}
    \end{minipage}\hfill
    \end{center}
    \caption{The ratio between the variance of the empirical semi-supervised risk to the supervised risk when we adopted IT as the task surrogate loss and used a linear-in-input model.}
    \label{fig:variance_reduction_IT_linear-in-input}
\end{figure*}

\begin{figure*}[htb]
    \begin{center}
    \setlength{\subfigwidth}{.32\linewidth}
    \addtolength{\subfigwidth}{-.32\subfigcolsep}
    \begin{minipage}[t]{\subfigwidth}
        \centering
         \subfigure[{\tt abalone}, LS]{\includegraphics[scale=0.40]{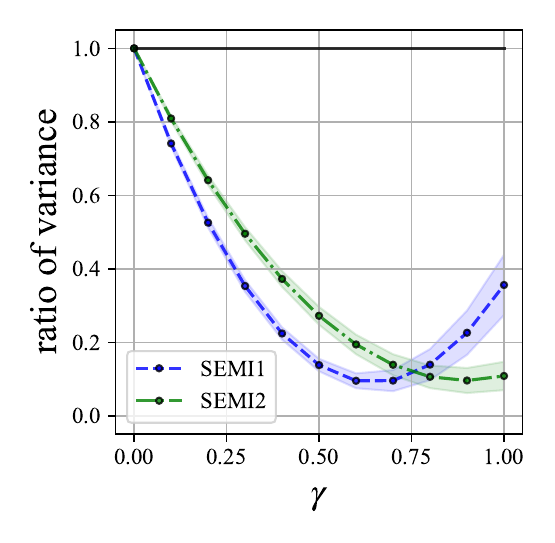}}
    \end{minipage}\hfill
    \begin{minipage}[t]{\subfigwidth}
        \centering
         \subfigure[{\tt bank1-5}, LS]{\includegraphics[scale=0.40]{fig/variance_reduction/sv_LS_bank1-5.pdf}}
    \end{minipage}\hfill
    \begin{minipage}[t]{\subfigwidth}
        \centering
         \subfigure[{\tt bank2-5}, LS]{\includegraphics[scale=0.40]{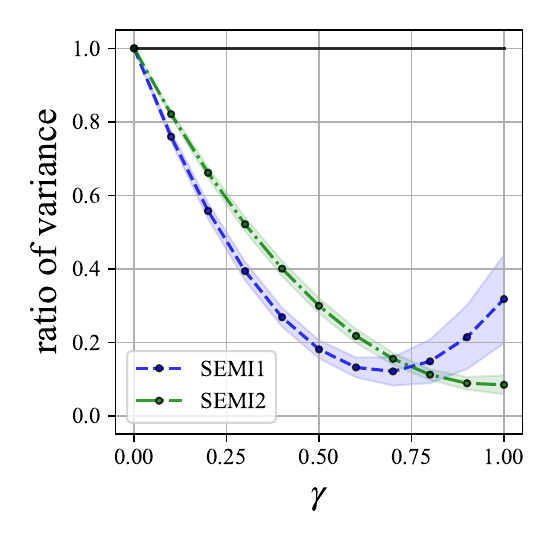}}
    \end{minipage}\hfill
    \begin{minipage}[t]{\subfigwidth}
        \centering
         \subfigure[{\tt census1-5}, LS]{\includegraphics[scale=0.40]{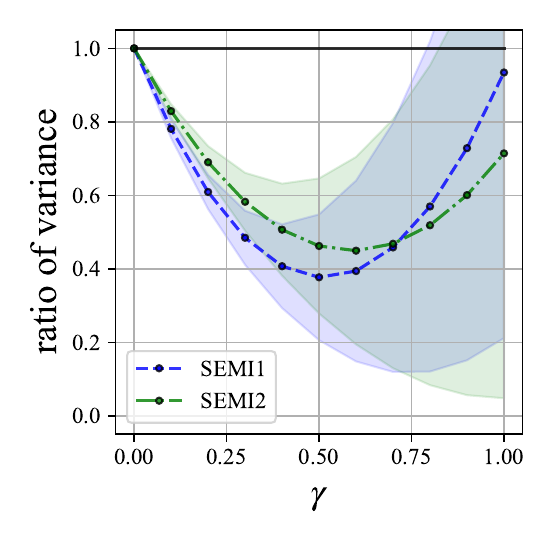}}
    \end{minipage}\hfill
    \begin{minipage}[t]{\subfigwidth}
        \centering
         \subfigure[{\tt census2-5}, LS]{\includegraphics[scale=0.40]{fig/variance_reduction/sv_LS_census2-5.pdf}}
    \end{minipage}\hfill
    \begin{minipage}[t]{\subfigwidth}
        \centering
         \subfigure[{\tt computer1-5}, LS]{\includegraphics[scale=0.40]{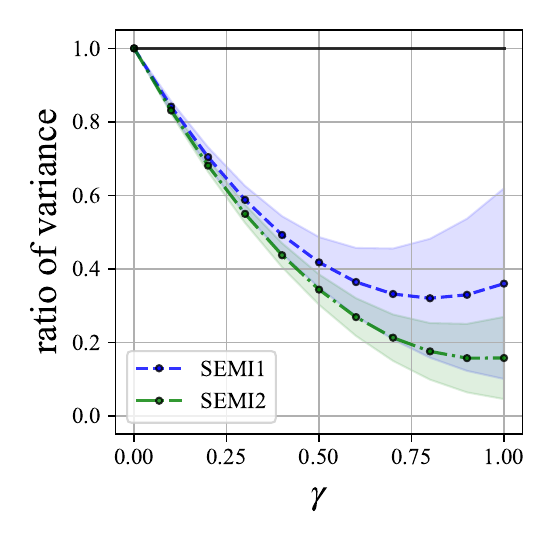}}
    \end{minipage}\hfill
    \begin{minipage}[t]{\subfigwidth}
        \centering
         \subfigure[{\tt computer2-5}, LS]{\includegraphics[scale=0.40]{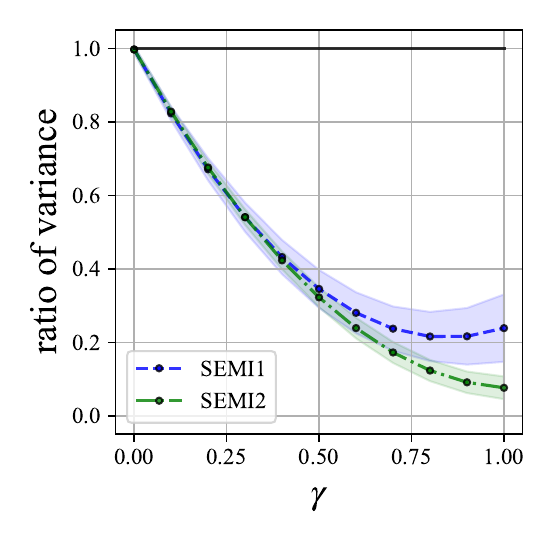}}
    \end{minipage}\hfill
    \begin{minipage}[t]{\subfigwidth}
        \centering
         \subfigure[{\tt fireman-example}, LS]{\includegraphics[scale=0.40]{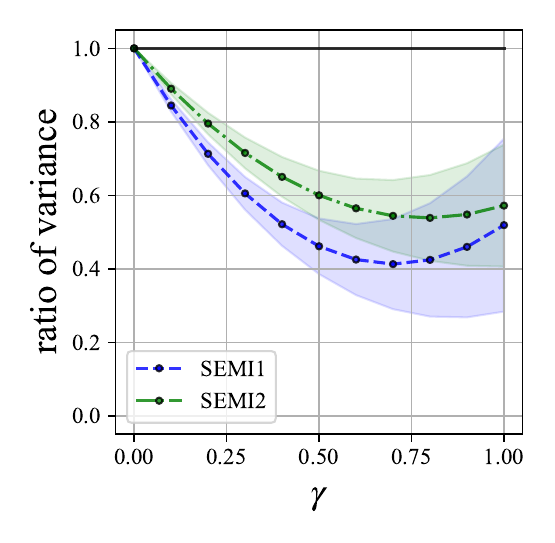}}
    \end{minipage}\hfill
    \begin{minipage}[t]{\subfigwidth}
        \centering
         \subfigure[{\tt kinematics}, LS]{\includegraphics[scale=0.40]{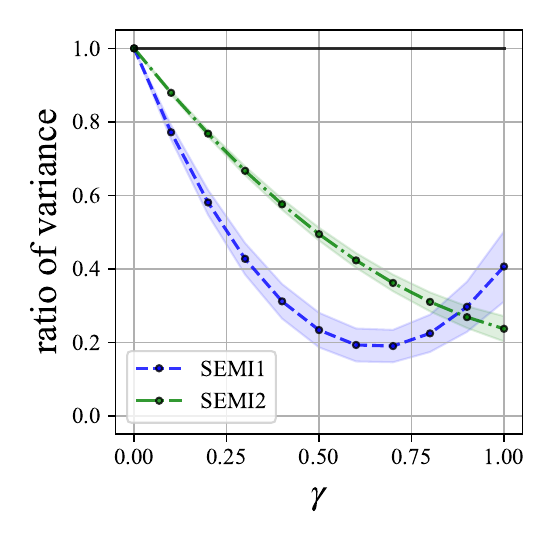}}
    \end{minipage}\hfill
    \begin{minipage}[t]{\subfigwidth}
        \centering
         \subfigure[{\tt lev}, LS]{\includegraphics[scale=0.40]{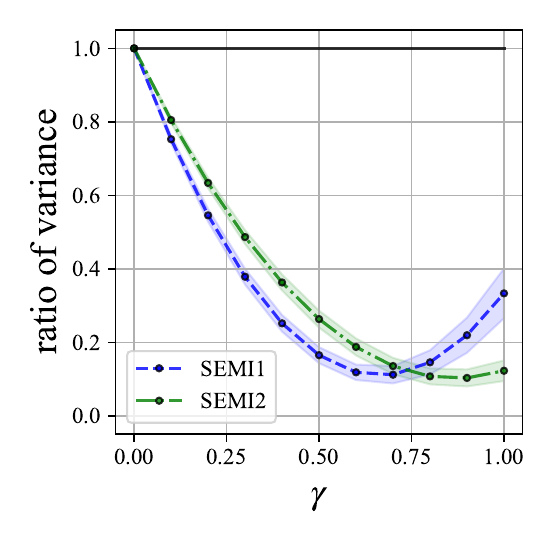}}
    \end{minipage}\hfill
    \begin{minipage}[t]{\subfigwidth}
        \centering
         \subfigure[{\tt swd}, LS]{\includegraphics[scale=0.40]{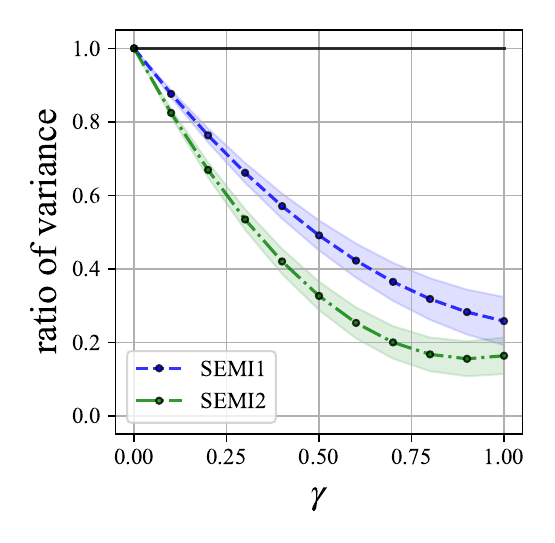}}
    \end{minipage}\hfill
    \begin{minipage}[t]{\subfigwidth}
        \centering
         \subfigure[{\tt toy}, LS]{\includegraphics[scale=0.40]{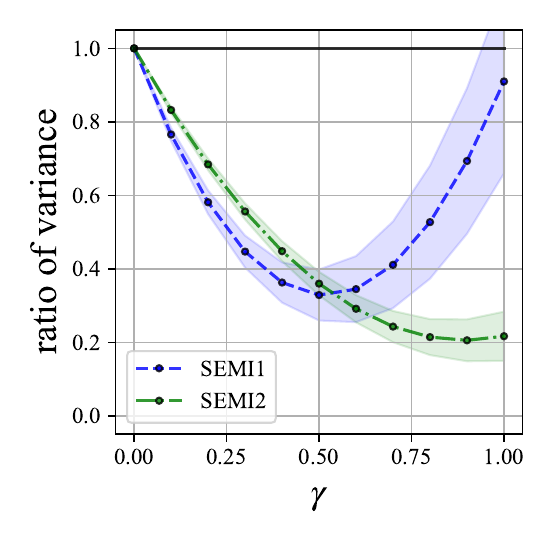}}
    \end{minipage}\hfill
    \end{center}
    \caption{The ratio between the variance of the empirical semi-supervised risk to the supervised risk when we adopted LS as the task surrogate loss and used a linear-in-input model.}
    \label{fig:variance_reduction_LS_linear-in-input}
\end{figure*}

% ===========================================================
% nu_change
% ===========================================================

\begin{figure*}[htb]
    \begin{center}
    \setlength{\subfigwidth}{.30\linewidth}
    \addtolength{\subfigwidth}{-.30\subfigcolsep}
    \begin{minipage}[t]{\subfigwidth}
        \centering
         \subfigure[{\tt abalone}, AT]{\includegraphics[scale=0.32]{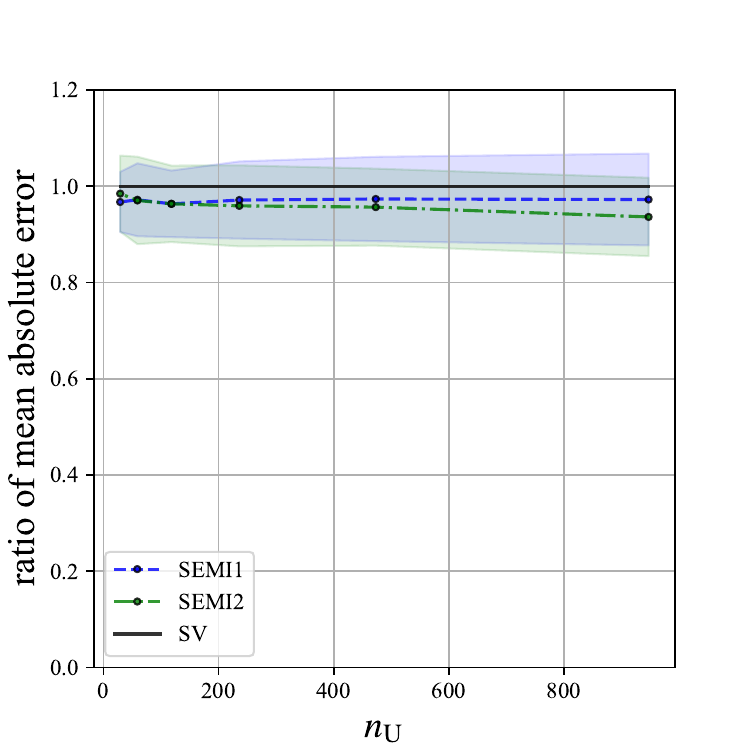}}
        %  \vspace{-5pt}  % <-----
    \end{minipage}\hfill
    \begin{minipage}[t]{\subfigwidth}
        \centering
         \subfigure[{\tt bank1-5}, AT]{\includegraphics[scale=0.32]{fig/nu_change/bank1-5_AT.pdf}}
    \end{minipage}\hfill
    \begin{minipage}[t]{\subfigwidth}
        \centering
         \subfigure[{\tt bank2-5}, AT]{\includegraphics[scale=0.32]{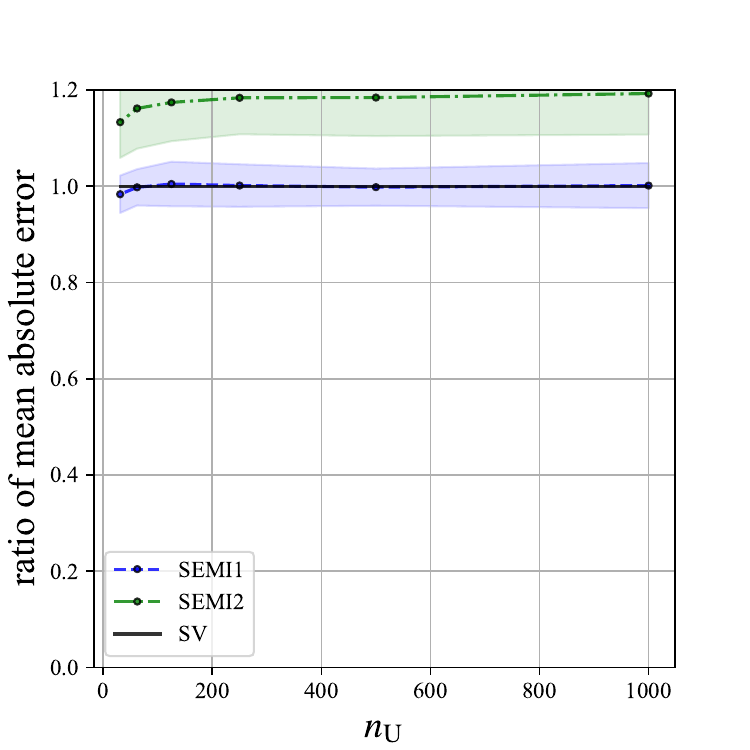}}
    \end{minipage}\hfill
    \vspace{-7pt}  % <-----
    \begin{minipage}[t]{\subfigwidth}
        \centering
         \subfigure[{\tt census1-5}, AT]{\includegraphics[scale=0.32]{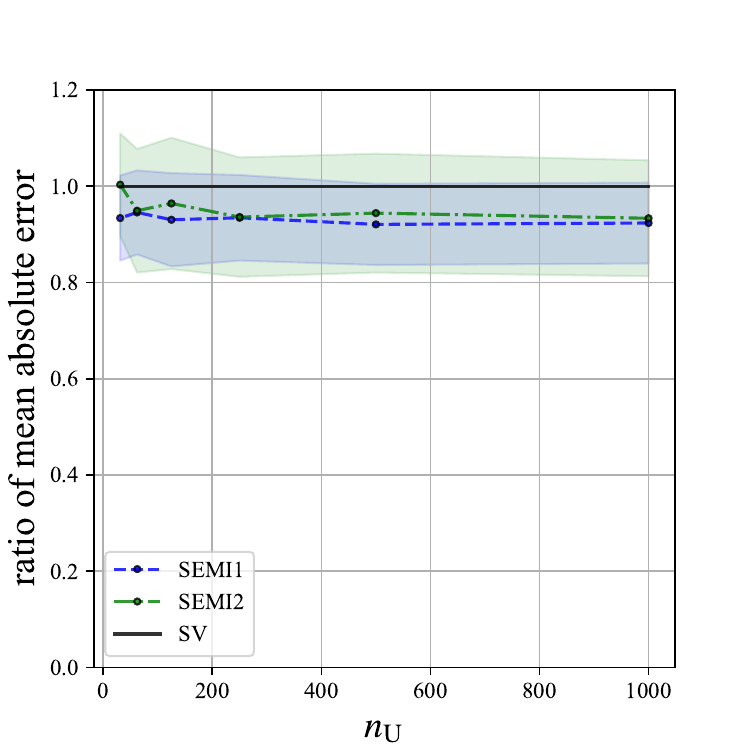}}
    \end{minipage}\hfill
    \begin{minipage}[t]{\subfigwidth}
        \centering
         \subfigure[{\tt census2-5}, AT]{\includegraphics[scale=0.32]{fig/nu_change/census2-5_AT.pdf}}
    \end{minipage}\hfill
    \begin{minipage}[t]{\subfigwidth}
        \centering
         \subfigure[{\tt computer1-5}, AT]{\includegraphics[scale=0.32]{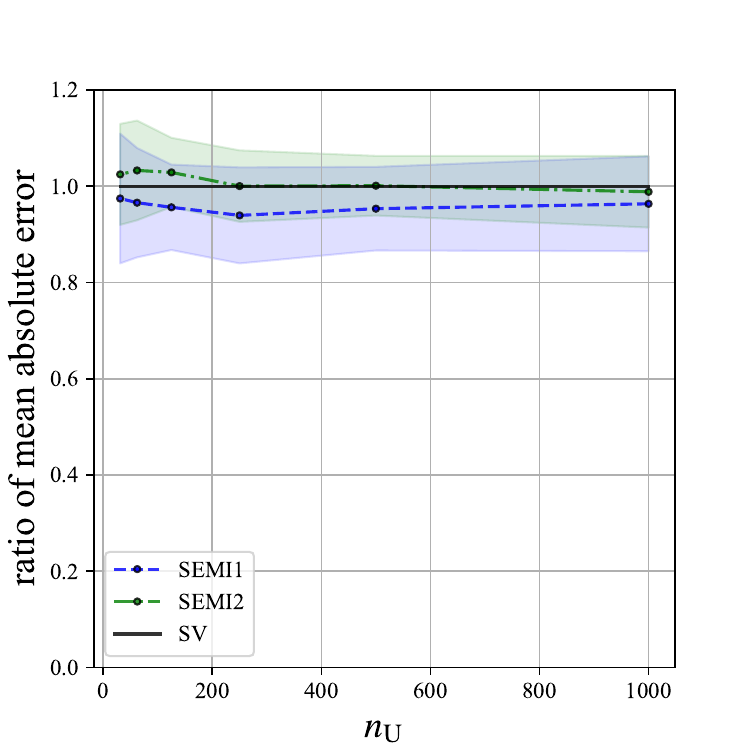}}
    \end{minipage}\hfill
    \vspace{-7pt}  % <-----
    \begin{minipage}[t]{\subfigwidth}
        \centering
         \subfigure[{\tt computer2-5}, AT]{\includegraphics[scale=0.32]{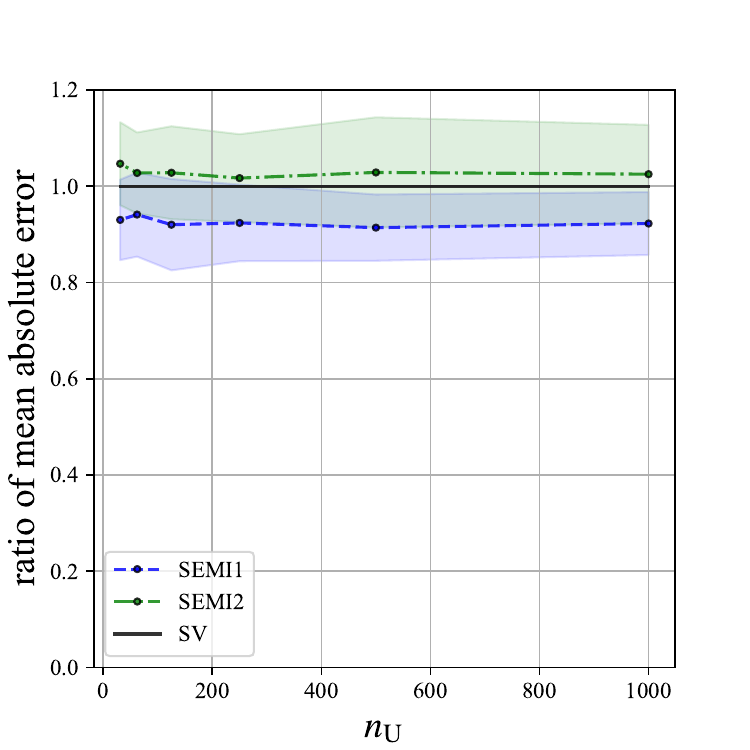}}
    \end{minipage}\hfill
    \begin{minipage}[t]{\subfigwidth}
        \centering
         \subfigure[{\tt fireman-example}, AT]{\includegraphics[scale=0.32]{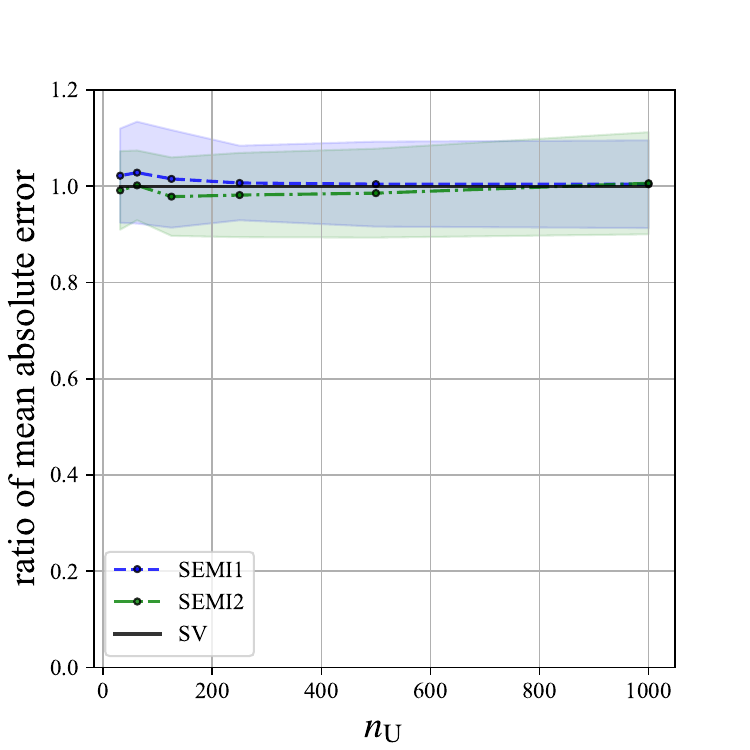}}
    \end{minipage}\hfill
    \begin{minipage}[t]{\subfigwidth}
        \centering
         \subfigure[{\tt kinematics}, AT]{\includegraphics[scale=0.32]{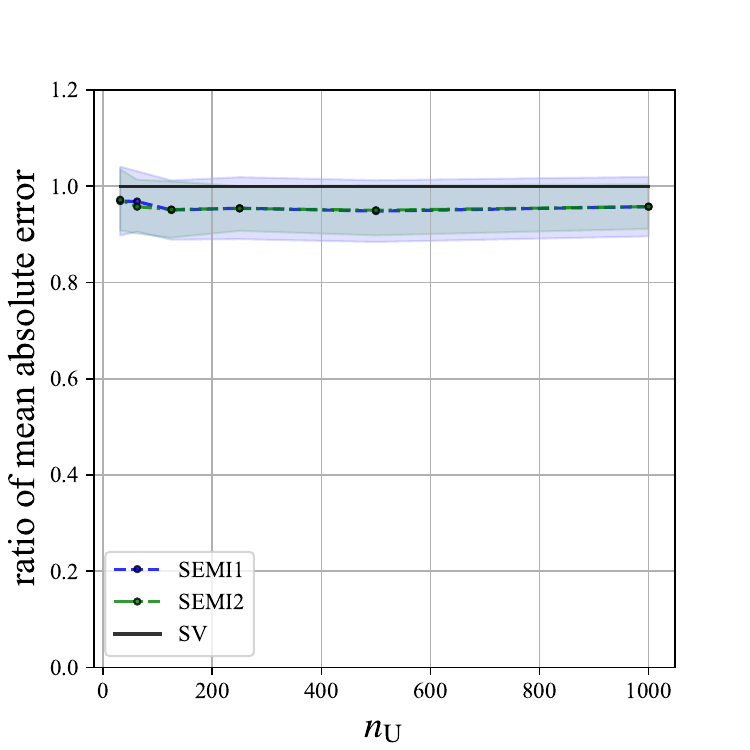}}
    \end{minipage}\hfill
    \vspace{-7pt}  % <-----
    \begin{minipage}[t]{\subfigwidth}
        \centering
         \subfigure[{\tt lev}, AT]{\includegraphics[scale=0.32]{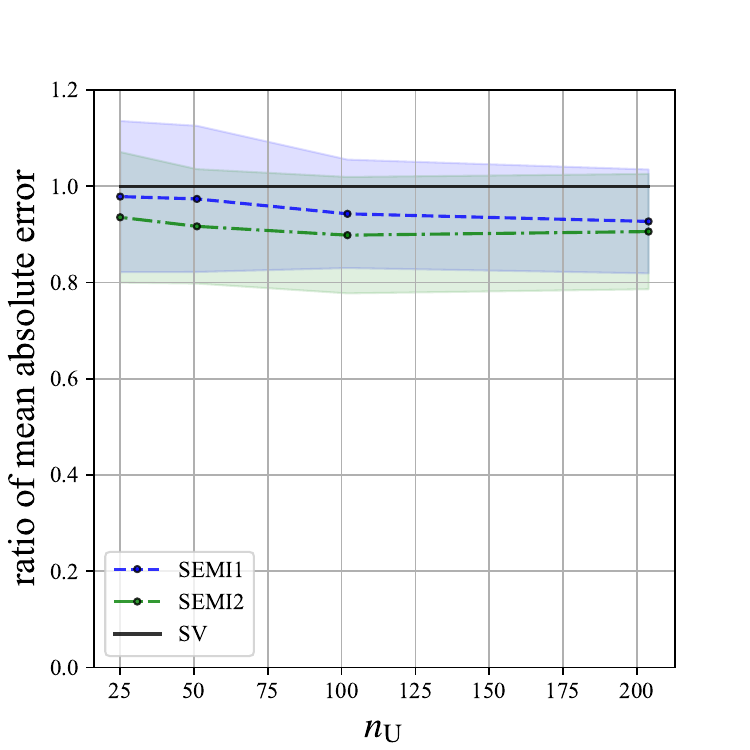}}
    \end{minipage}\hspace{50pt}
    \begin{minipage}[t]{\subfigwidth}
        \centering
         \subfigure[{\tt swd}, AT]{\includegraphics[scale=0.32]{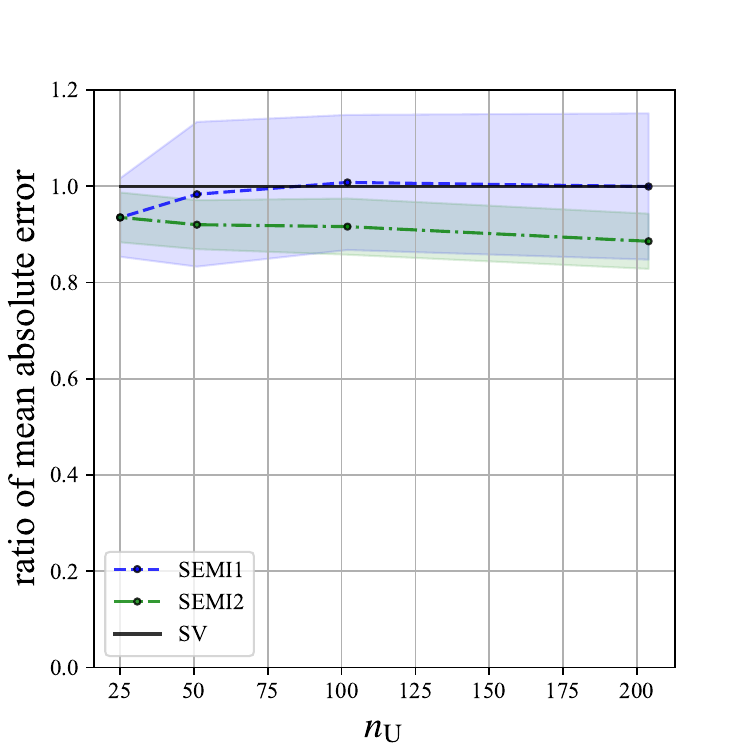}}
    \end{minipage}
    \end{center}
    \caption{
    The ratio between the error of the empirical semi-supervised risk to the supervised risk when we adopted AT as the task surrogate loss and used linear-in-input as a model.
    The ratio below 1.0 implies that the error of the semi-supervised method is better than that of the supervised risk.
    }
    \label{fig:nu_change_AT_linear-in-input}
\end{figure*}

\begin{figure*}[htb]
    \begin{center}
    \setlength{\subfigwidth}{.30\linewidth}
    \addtolength{\subfigwidth}{-.30\subfigcolsep}
    \begin{minipage}[t]{\subfigwidth}
        \centering
         \subfigure[{\tt abalone}, IT]{\includegraphics[scale=0.32]{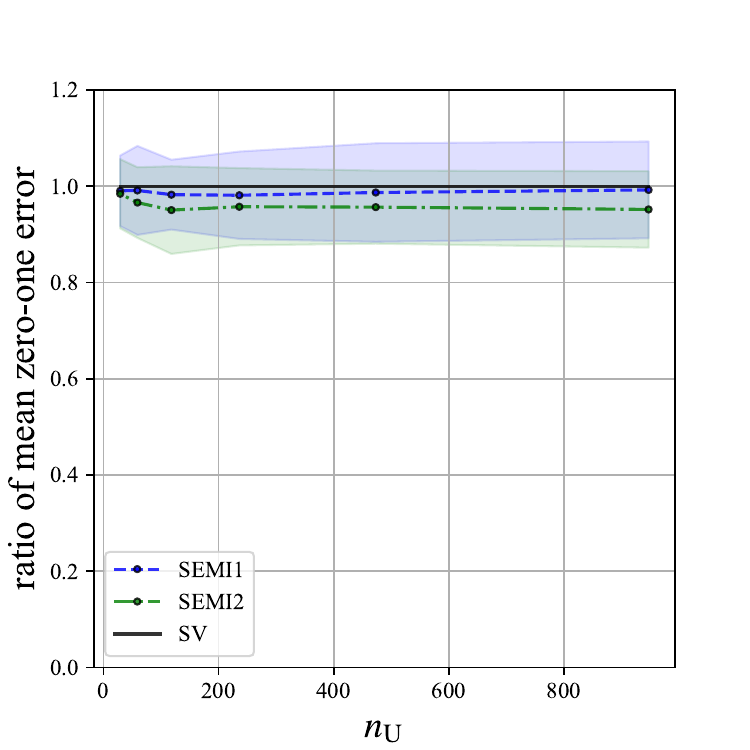}}
    \end{minipage}\hfill
    \begin{minipage}[t]{\subfigwidth}
        \centering
         \subfigure[{\tt bank1-5}, IT]{\includegraphics[scale=0.32]{fig/nu_change/bank1-5_IT.pdf}}
    \end{minipage}\hfill
    \begin{minipage}[t]{\subfigwidth}
        \centering
         \subfigure[{\tt bank2-5}, IT]{\includegraphics[scale=0.32]{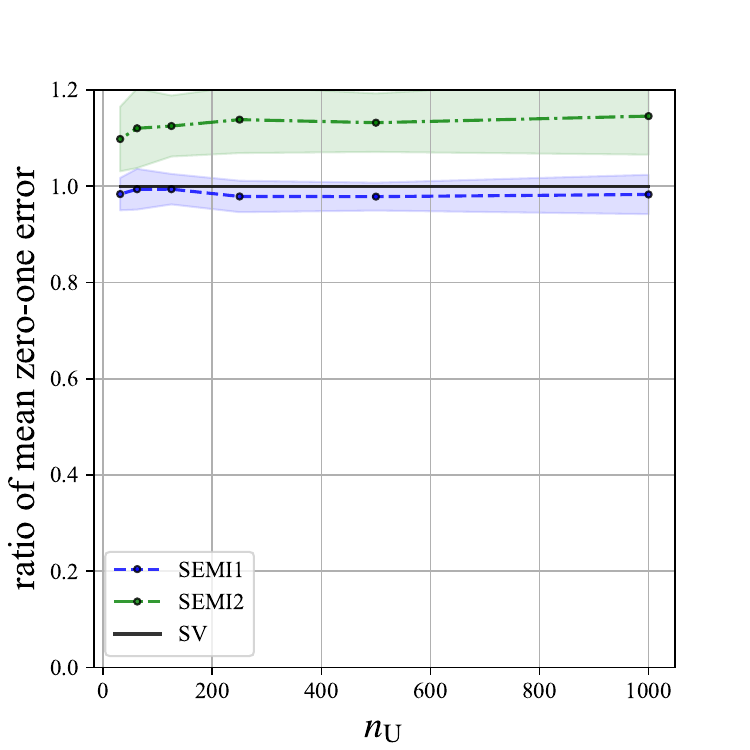}}
    \end{minipage}\hfill
    \vspace{-7pt}  % <-----
    \begin{minipage}[t]{\subfigwidth}
        \centering
         \subfigure[{\tt census1-5}, IT]{\includegraphics[scale=0.32]{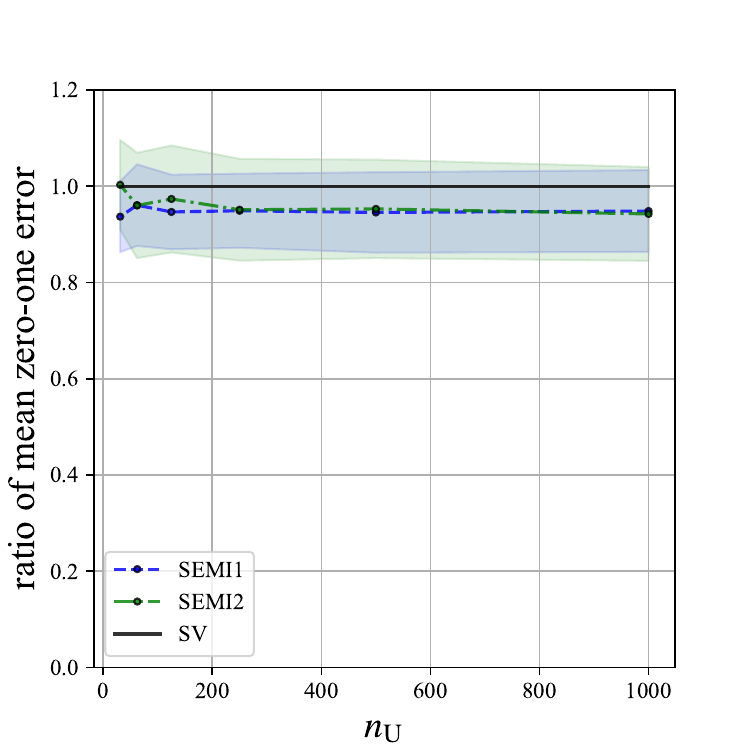}}
    \end{minipage}\hfill
    \begin{minipage}[t]{\subfigwidth}
        \centering
         \subfigure[{\tt census2-5}, IT]{\includegraphics[scale=0.32]{fig/nu_change/census2-5_IT.pdf}}
    \end{minipage}\hfill
    \begin{minipage}[t]{\subfigwidth}
        \centering
         \subfigure[{\tt computer1-5}, IT]{\includegraphics[scale=0.32]{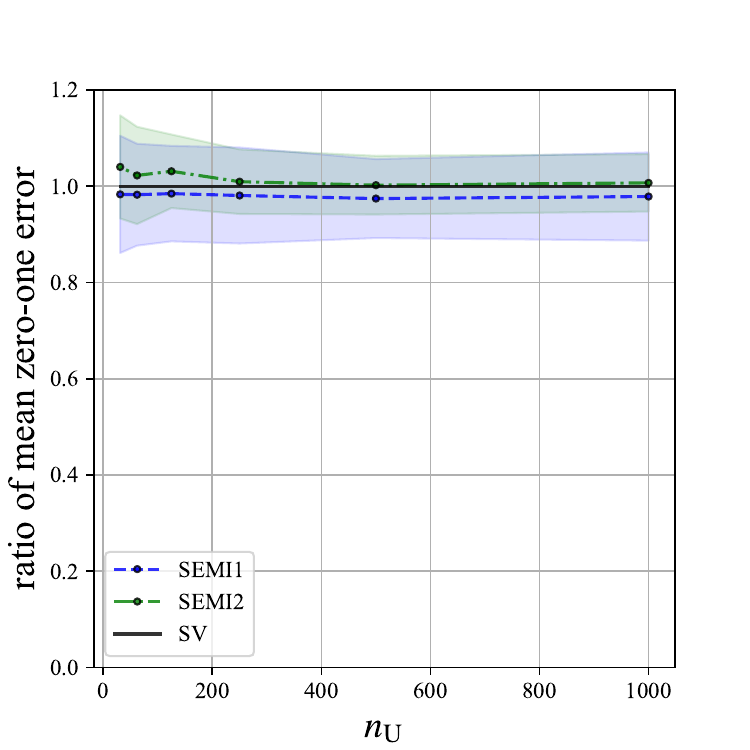}}
    \end{minipage}\hfill
    \vspace{-7pt}  % <-----
    \begin{minipage}[t]{\subfigwidth}
        \centering
         \subfigure[{\tt computer2-5}, IT]{\includegraphics[scale=0.32]{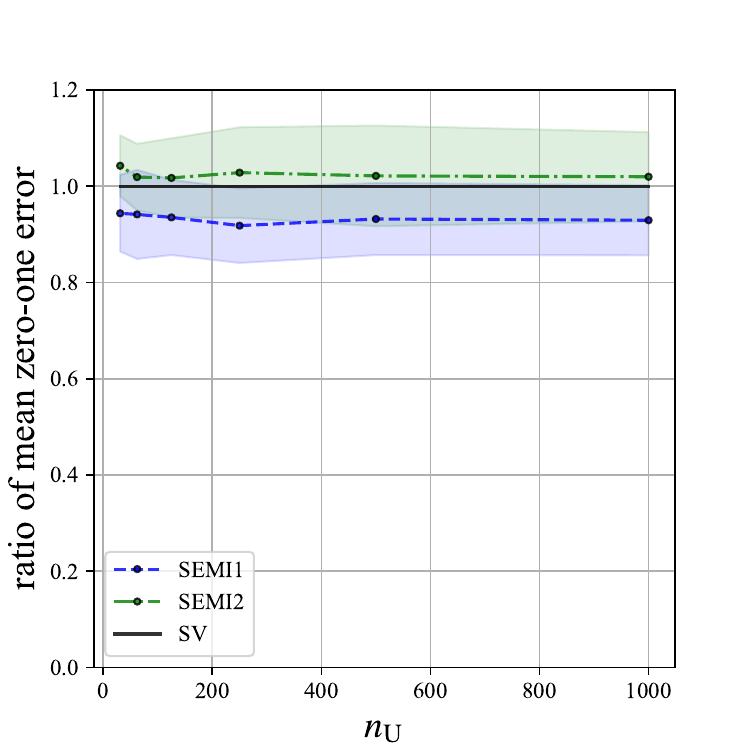}}
    \end{minipage}\hfill
    \begin{minipage}[t]{\subfigwidth}
        \centering
         \subfigure[{\tt fireman-example}, IT]{\includegraphics[scale=0.32]{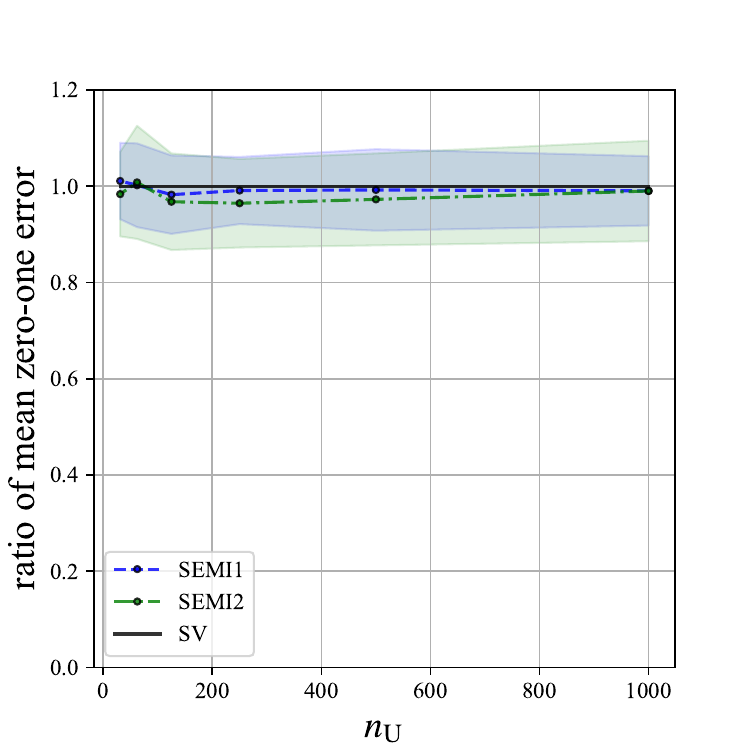}}
    \end{minipage}\hfill
    \begin{minipage}[t]{\subfigwidth}
        \centering
         \subfigure[{\tt kinematics}, IT]{\includegraphics[scale=0.32]{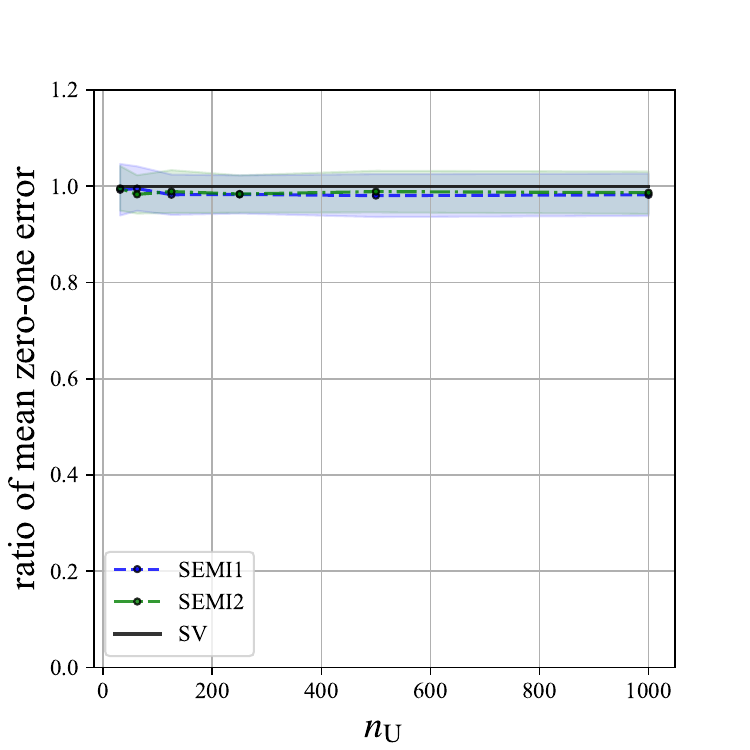}}
    \end{minipage}\hfill
    \vspace{-7pt}  % <-----
    \begin{minipage}[t]{\subfigwidth}
        \centering
         \subfigure[{\tt lev}, IT]{\includegraphics[scale=0.32]{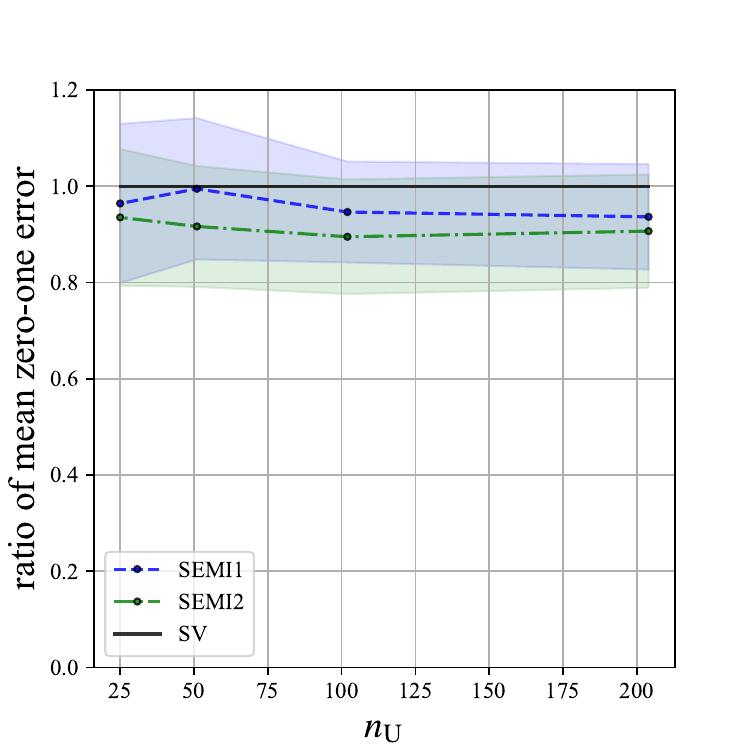}}
    \end{minipage}\hspace{50pt}
    \begin{minipage}[t]{\subfigwidth}
        \centering
         \subfigure[{\tt swd}, IT]{\includegraphics[scale=0.32]{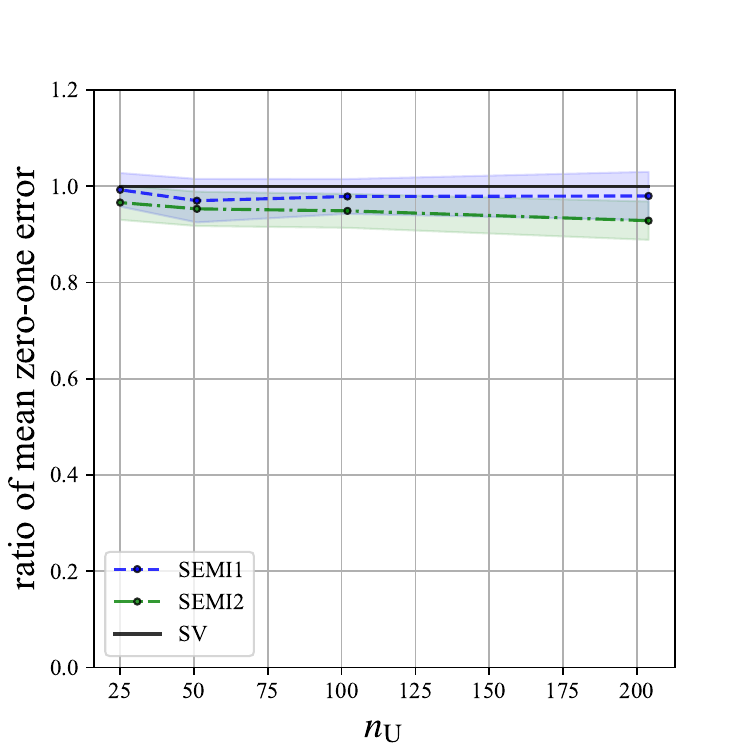}}
    \end{minipage}
    \end{center}
    \caption{
    The ratio between the error of the empirical semi-supervised risk to the supervised risk when we adopted IT as the task surrogate loss and used a linear-in-input model.
    The ratio below 1.0 implies that the error of the semi-supervised method is better than that of the supervised risk.
    }
    \label{fig:nu_change_IT_linear-in-input}
\end{figure*}

\begin{figure*}[htb]
    \begin{center}
    \setlength{\subfigwidth}{.30\linewidth}
    \addtolength{\subfigwidth}{-.30\subfigcolsep}
    \begin{minipage}[t]{\subfigwidth}
        \centering
         \subfigure[{\tt abalone}, LS]{\includegraphics[scale=0.32]{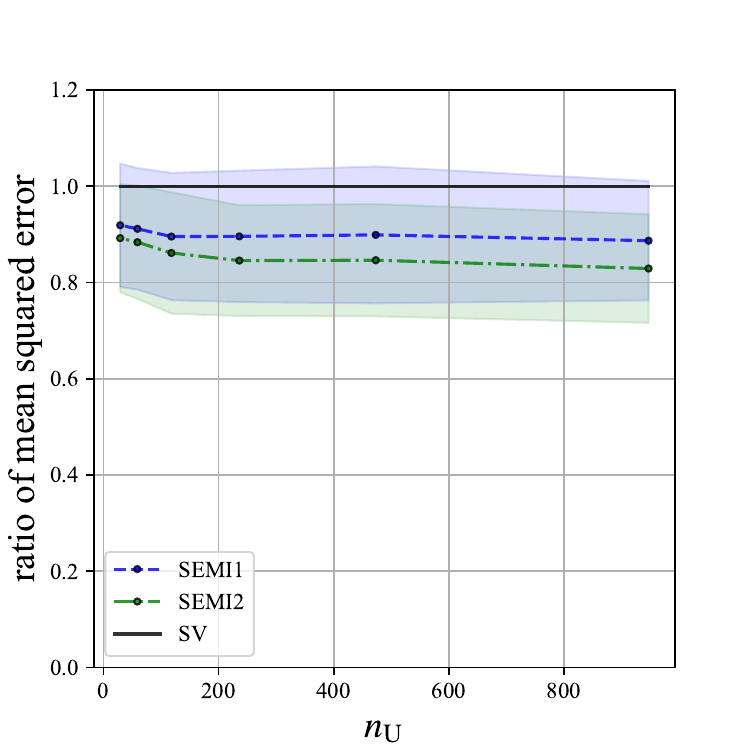}}
    \end{minipage}\hfill
    \begin{minipage}[t]{\subfigwidth}
        \centering
         \subfigure[{\tt bank1-5}, LS]{\includegraphics[scale=0.32]{fig/nu_change/bank1-5_LS.pdf}}
    \end{minipage}\hfill
    \begin{minipage}[t]{\subfigwidth}
        \centering
         \subfigure[{\tt bank2-5}, LS]{\includegraphics[scale=0.32]{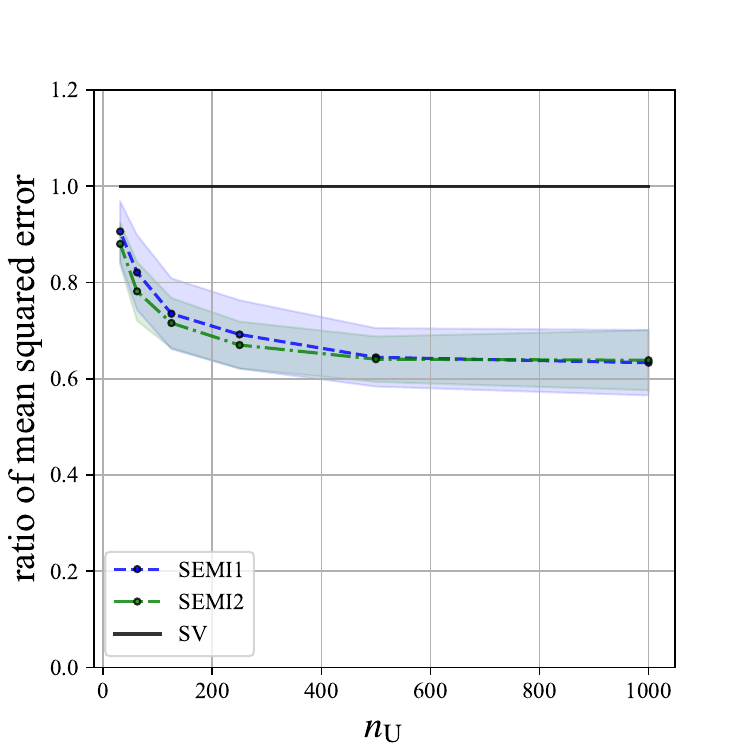}}
    \end{minipage}\hfill
    \vspace{-7pt}  % <-----
    \begin{minipage}[t]{\subfigwidth}
        \centering
         \subfigure[{\tt census1-5}, LS]{\includegraphics[scale=0.32]{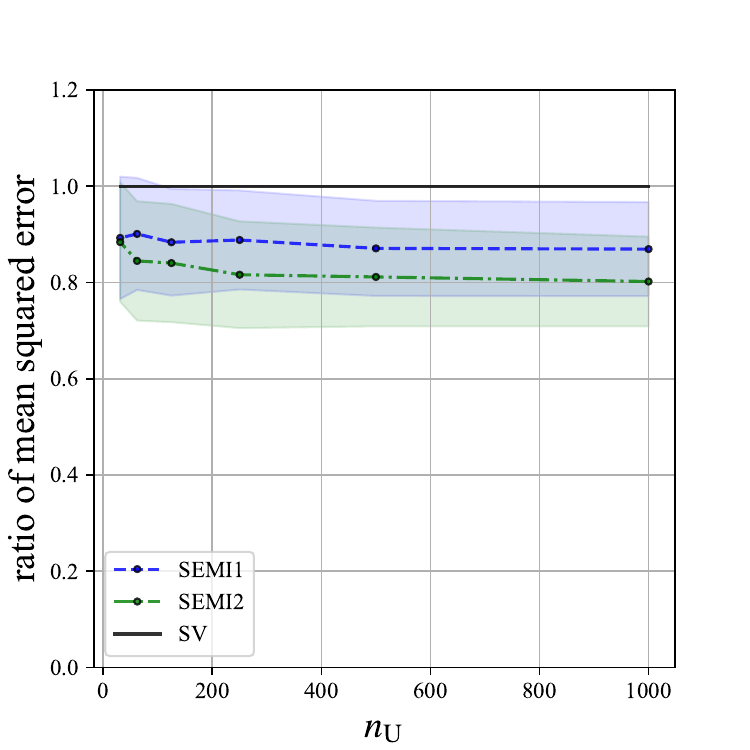}}
    \end{minipage}\hfill
    \begin{minipage}[t]{\subfigwidth}
        \centering
         \subfigure[{\tt census2-5}, LS]{\includegraphics[scale=0.32]{fig/nu_change/census2-5_LS.pdf}}
    \end{minipage}\hfill
    \begin{minipage}[t]{\subfigwidth}
        \centering
         \subfigure[{\tt computer1-5}, LS]{\includegraphics[scale=0.32]{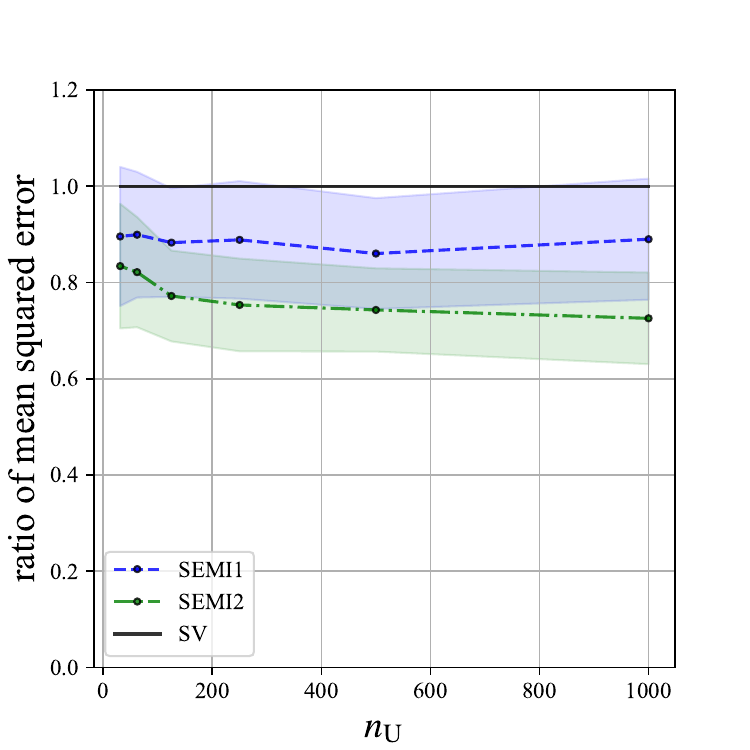}}
    \end{minipage}\hfill
    \vspace{-7pt}  % <-----
    \begin{minipage}[t]{\subfigwidth}
        \centering
         \subfigure[{\tt computer2-5}, LS]{\includegraphics[scale=0.32]{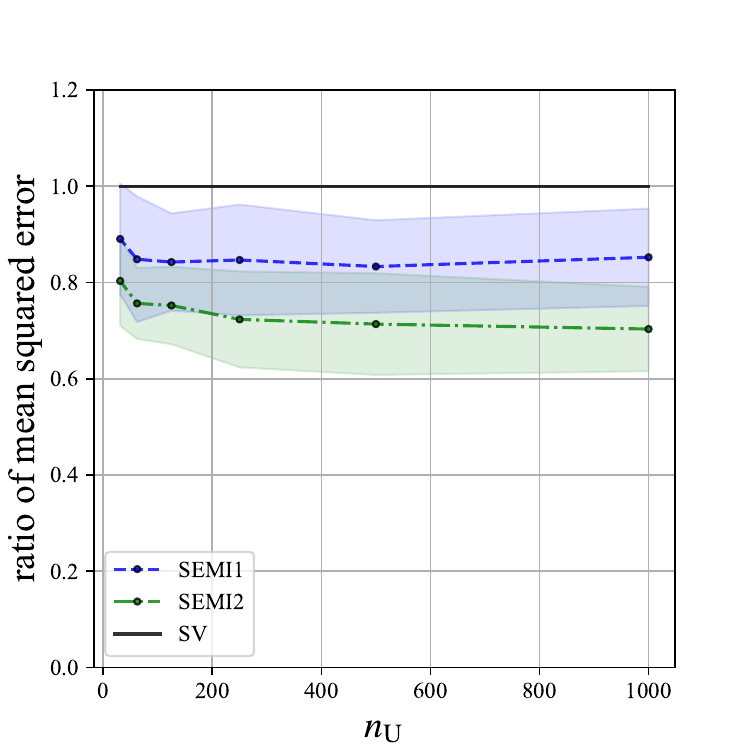}}
    \end{minipage}\hfill
    \begin{minipage}[t]{\subfigwidth}
        \centering
         \subfigure[{\tt fireman-example}, LS]{\includegraphics[scale=0.32]{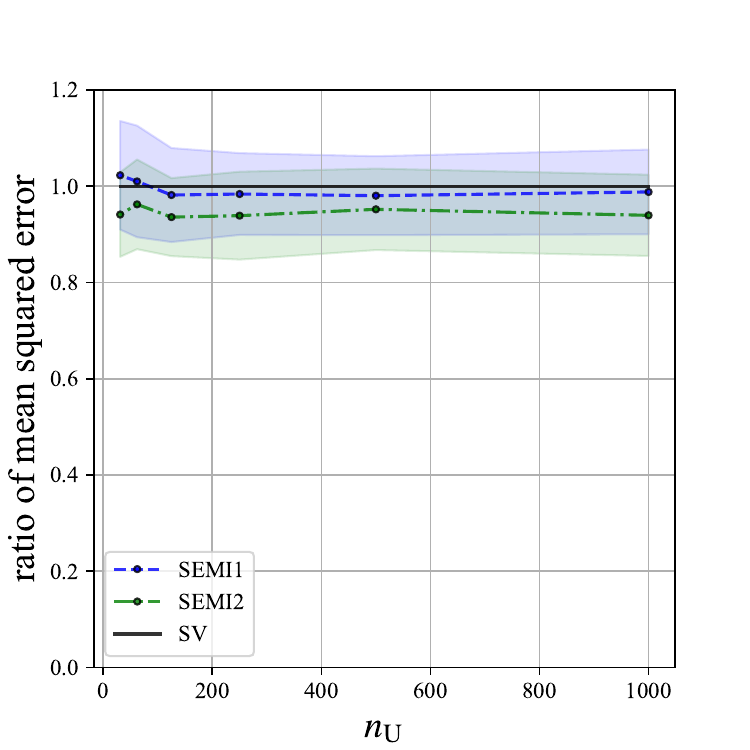}}
    \end{minipage}\hfill
    \begin{minipage}[t]{\subfigwidth}
        \centering
         \subfigure[{\tt kinematics}, LS]{\includegraphics[scale=0.32]{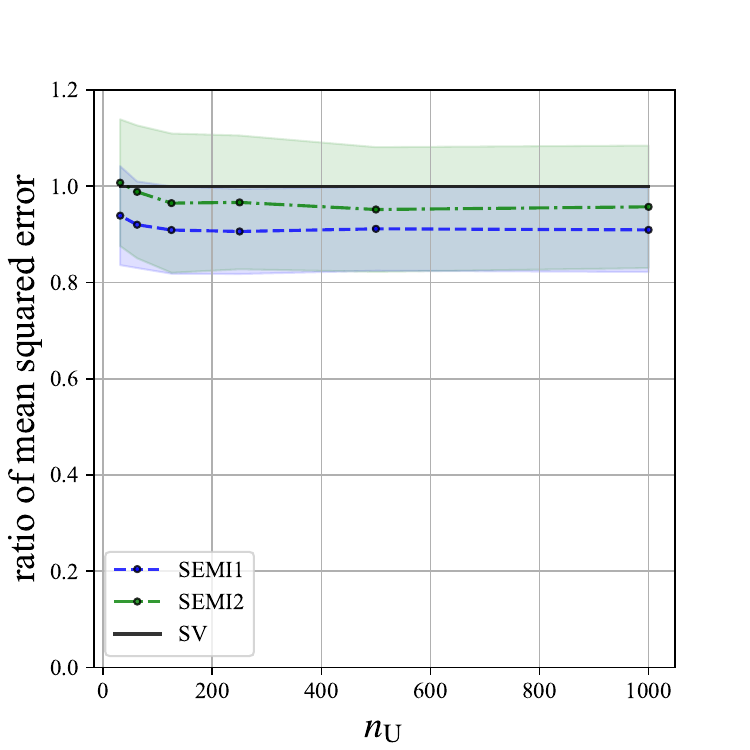}}
    \end{minipage}\hfill
    \vspace{-7pt}  % <-----
    \begin{minipage}[t]{\subfigwidth}
        \centering
         \subfigure[{\tt lev}, LS]{\includegraphics[scale=0.32]{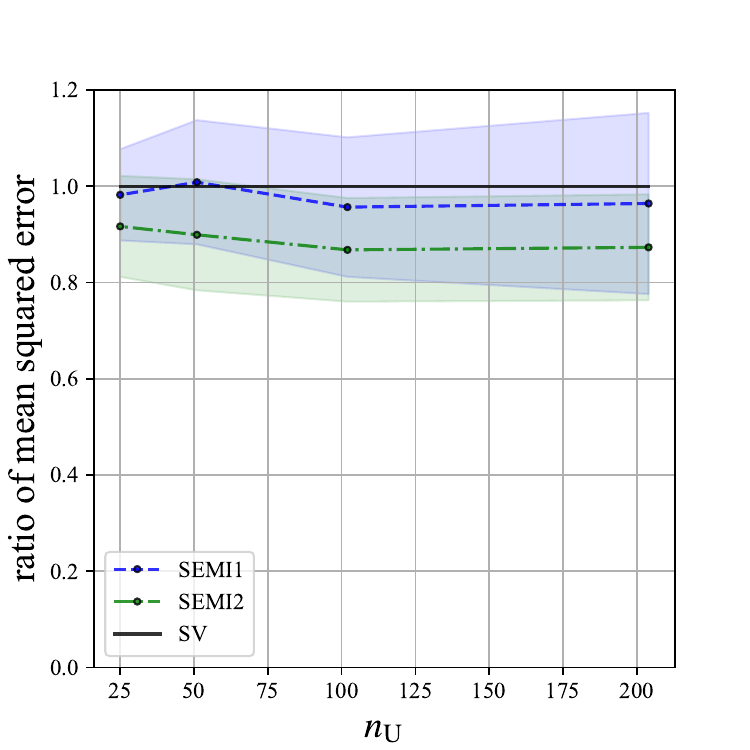}}
    \end{minipage}\hspace{50pt}
    \begin{minipage}[t]{\subfigwidth}
        \centering
         \subfigure[{\tt swd}, LS]{\includegraphics[scale=0.32]{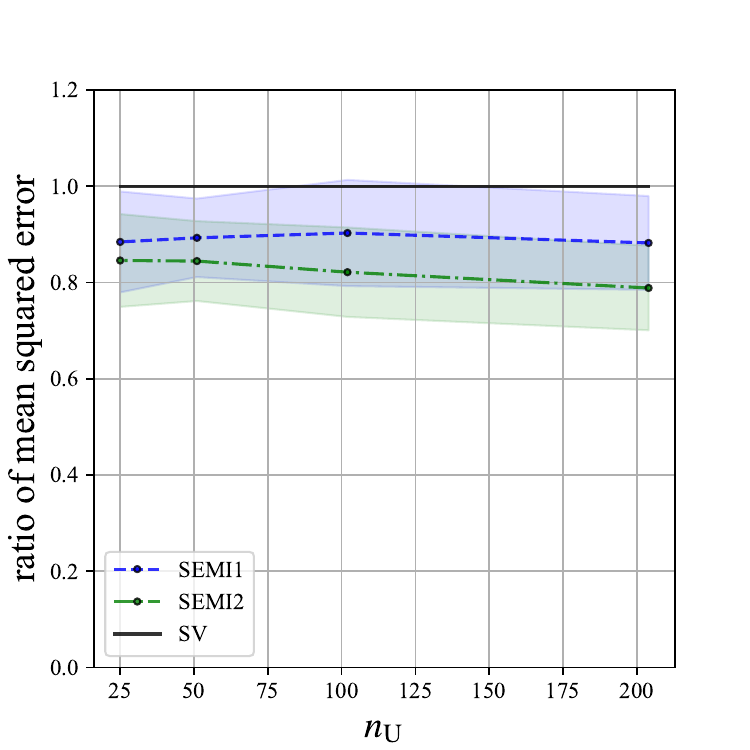}}
    \end{minipage}
    \end{center}
    \caption{
    The ratio between the error of the empirical semi-supervised risk to the supervised risk when we adopted LS as the task surrogate loss and used a linear-in-input model.
    The ratio below 1.0 implies that the error of the semi-supervised method is better than that of the supervised risk.
    }
    \label{fig:nu_change_LS_linear-in-input}
\end{figure*}

\end{document}